\DeclareMathOperator*{\argmax}{arg\,max}
\newcommand{\defcal}[1]{\expandafter\newcommand\csname c#1\endcsname{{\mathcal{#1}}}}
\newcommand{\defbb}[1]{\expandafter\newcommand\csname b#1\endcsname{{\mathbb{#1}}}}
\newcounter{calBbCounter}
	\edef\letter{\Alph{calBbCounter}}
\newtheorem{theorem}{Theorem}
\newtheorem{observation}[theorem]{Observation}
\newtheorem{definition}[theorem]{Definition}
\newtheorem{lemma}[theorem]{Lemma}
\newtheorem{corollary}[theorem]{Corollary}
\newcommand{\Opt}{\text{\upshape{\texttt{OPT}}}\xspace}
\newcommand{\Optset}{OPT}
\newcommand{\AlgThreshold}{{\textsc{\textsc{Threshold-Streaming}}}\xspace}
\newcommand{\AlgStream}{{\textsc{\textsc{Distorted-Streaming}}}\xspace}
\newcommand{\AlgDistributed}{{\textsc{\textsc{Distorted-Distributed-Greedy}}}\xspace}
\newcommand{\AlgDistorted}{{\textsc{\textsc{Distorted-Greedy}}}\xspace}
\DeclareMathOperator{\ground}{\cN}
\crefname{algocf}{Algorithm}{Algorithms}
\Crefname{algocf}{Algorithm}{Algorithms}
\renewcommand{\emptyset}{\varnothing}
\newcommand{\newreptheorem}[2]{%
	\newenvironment{rep#1}[1]{%
		\expandafter\renewcommand\csname the#2\endcsname{\ref*{##1}}%
		\expandafter\renewcommand\csname theH#2\endcsname{repeat.##1}%
		\begin{#1}}%
		{\end{#1}%
		\addtocounter{#2}{-1}}}
\newcommand{\eps}{\varepsilon}
\renewcommand{\emptyset}{\varnothing}
\renewcommand{\epsilon}{\varepsilon}
\newcommand{\lov}[1]{{\hat{#1}}}
\newcommand{\printtime}{01/01/2019}
\title{Regularized Submodular Maximization at Scale}
\author
{
	Ehsan Kazemi\thanks{Yale Institute for Network Science, Yale University. Email: \texttt{ehsan.kazemi@yale.edu}.}
	\and
	Shervin Minaee\thanks{Expedia Group, Seattle, WA. Email: \texttt{shervin.minaee@nyu.edu}.}
	\and
	Moran Feldman\thanks{Department of Computer Science, University of Haifa, Israel. Email: \texttt{moranfe@cs.haifa.ac.il}.}
	\and
	Amin Karbasi\thanks{Yale Institute for Network Science, Yale University. Email: \texttt{amin.karbasi@yale.edu}.}
}
\date{}
\begin{document}
	\pagenumbering{arabic}
	
	\maketitle
	\begin{abstract}	
		In this paper, we propose scalable methods   for maximizing a \textit{regularized} submodular function $f(\cdot) = g(\cdot)- \ell(\cdot)$ expressed as the difference between a monotone submodular function $g$ and a modular function $\ell$. Indeed,  submodularity is inherently related to the notions of diversity, coverage, and representativeness. In particular, finding the mode (i.e., the most likely configuration) of many popular probabilistic models of diversity, such as determinantal point processes, submodular probabilistic models, and strongly log-concave distributions, involves maximization of (regularized) submodular functions. 
	Since a regularized function $f$ can potentially take on negative values, the classic theory of submodular maximization, which heavily relies on the assumption that the submodular function is non-negative, may not be applicable. To circumvent this challenge, we develop \AlgStream, the first one-pass streaming algorithm for maximizing a regularized submodular function subject to a $k$-cardinality constraint. It returns a solution $S$ with the guarantee that $f(S)\geq (\phi^{-2} - \eps) \cdot g(OPT) - \ell(OPT)$, where $\phi$ is the golden ratio (and thus, $\phi^{-2} \approx 0.382$). Furthermore, we develop \AlgDistributed, the first   distributed algorithm that returns a solution $S$ with the guarantee that  $\bE [f(S)] \geq (1 - \eps)  \left[(1 - e^{-1}) \cdot g(OPT) - \ell(OPT) \right]$ in $O(1/\epsilon)$ rounds of MapReduce computation. We should highlight that our result, even for the unregularized case  where the modular term $\ell$ is zero, improves the memory and communication complexity of the existing work by a factor of $O(1/\epsilon)$ as it manages to avoid the need (of this existing work) to keep multiple copies of the entire dataset. Moreover, it does so while (arguably) providing a simpler distributed algorithm and a unifying analysis.  We also empirically study the performance of our scalable methods on a set of real-life applications, including vertex cover of social networks, mode of strongly log-concave distributions, data summarization (such as video summarization,  location summarization, and text summarization), and product recommendation. 
\end{abstract}
	\section{Introduction} \label{sec:intro}
Finding a diverse  set of items, also known as data summarization, is one of the 
central tasks in machine learning.  It usually involves either maximizing a utility 
function that promotes coverage and representativeness \cite{mirzasoleiman2013distributed,wei2015submodularity} (we call this an optimization perspective) 
	or sampling from discrete probabilistic models that promote negative 
	correlations and show repulsive behaviors \cite{rebeschini2015fast,gotovos2015sampling} (we call this a sampling 	perspective). Celebrated examples of probabilistic models that encourage negative dependency include determinantal point processes \cite{kulesza2012determinantal}, 
	strongly Rayleigh measures \cite{borcea2009negative}  strongly log-concave distributions \cite{gurvits2009polynomial}, 
	and probabilistic submodular models \cite{djolonga2014map,iyer2015submodular}. In fact, the two above views are tightly 
related in the sense that oftentimes the mode (i.e., most likely configuration) 
of a diversity promoting 
distribution is  a simple variant of a (regularized) submodular function. For 
instance, determinantal point processes are log-submodular. Or, as we show 
later, a strongly log-concave distribution  is indeed  a regularized log-submodular 
plus a log quadratic term.  The 
	aim of this paper is to show how such an optimization task  can be 
	done at scale. 

From the optimization perspective, in order to effectively select a diverse subset of items, we need to define a measure that captures the amount of representativeness that lies within a selected subset. Oftentimes, such a measure  naturally satisfies  the intuitive diminishing returns condition which can be formally captured by \emph{submodularity}. Given a finite ground set $\cN$ of size $n$, consider a set function $g\colon 2^{\cN} \to \bR$ assigning a utility $g(A)$ to every subset $A \subseteq \cN$. We say that $g$ is submodular if for any pair of subsets $A \subseteq B \subseteq  \ground$ and an element $u \not \in B$, we have
$ g(A \cup \{u\}) - g(A) \geq g(B \cup \{u\}) - g(B), $
which intuitively means that the increase in ``representativeness'' following the 
addition of an element $u$ is smaller when $u$ is added to a larger set. 
Additionally, a set function is said to be monotone if $g(A) \leq g(B)$ for all $A 
\subseteq B \subseteq \cN$; that is, adding more data can only increase the 
representativeness of any subset. Many of the previous work in data 
summarization and diverse subset selection that take an optimization  
perspective, simply aim to maximize a monotone submodular function 
\cite{mirzasoleiman2013distributed}. Monotonicity has the 
	advantage of promoting coverage, but it also enhances the danger of over-fitting to the 
	data as adding more elements can never decrease the utility. To address this 
	issue, as it is often  done in machine learning, we need to add a simple 
	penalty  or  a regularizer term.  Formally, we cast this optimization problem as 
	an instance of \textbf{regularized submodular} maximization, in which we are 
	asked to find a set $S$ of size at most $k$ that  maximizes
\begin{align} \label{eq:problem}
\Optset = \argmax_{S \subseteq \cN, |S| \leq k} [g(S) - \ell(S)] \enspace,
\end{align}
where $g$ is a non-negative monotone submodular
function and $\ell$ is a non-negative modular function.\footnote{A set function 
$\ell \colon 2^\cN \to \bR$ is modular if there is a value $\ell_u$ for every $u \in \ell$ such that $\ell(S) = \sum_{u \in S} \ell_u$ for every set 
$S \subseteq \cN$.} The role of the modular function $\ell$ is to discount the benefit of adding elements. We highlight that $g-\ell$ is still 
submodular. However, it may no longer be non-negative, an assumption that is 
essential for deriving competitive algorithms with constant-factor approximation 
guarantees (for more information, see the survey by \cite{buchbinder2018submodular}).  Even though maximizing a regularized submodular function has been 
proposed in the past as a more faithful model of diverse data selection 
\cite{tschiatschek2014learning}, formal treatment of this problem has only recently been done \cite{sviridenko2017optimal,feldman2019guess,harshaw2019submodular}.

In many practical scenarios, random access to the entire data is not possible and only a small fraction of the data can be loaded to the main memory, there is only time to read the data once, and the data arrives at a very fast pace.
Furthermore, the amount of collected data is often too large to solve the optimization problem on a single machine.
Prior to this work, no streaming or distributed algorithm to solve Problem \eqref{eq:problem} was established. 
However, based on ideas from~\cite{sviridenko2017optimal,feldman2019guess}, \citet{harshaw2019submodular} proposed \AlgDistorted, an efficient offline algorithm to (approximately) solve this problem.
This algorithm iteratively and greedily finds elements that maximize a distorted function.
On the other hand, \AlgDistorted, as a centralized algorithm, requires a memory that grows linearly with the size of the data, and it needs to make multiple passes ($\Theta(n)$ in the worst case) over the data; therefore it fails to satisfy the above mentioned requirements of modern applications.
Indeed, with the unprecedented increase in the data size in 
recent years, scalable data summarization has gained a lot of attention with far-reaching 
applications, including brain parcellation from fMRI data \cite{salehi2017submodular}, interpreting neural networks 
\cite{elenberg2017streaming}, selecting panels of genomics assays \cite{wei2016choosing}, video \cite{gygli2015video}, image \cite{TIWB14,tschiatschek2014learning}, and text 
\cite{LB11,kirchhoff2014submodularity} summarization, and sparse feature selections \cite{elenberg2018restricted,das2011submodular}, to name a few.
In this paper, we propose \textbf{scalable methods} (in both streaming and distributed settings) for maximizing a \textbf{regularized} submodular function.
In the following, we briefly explain our main theoretical results. 

	\textbf{Our Results.} 
In \cref{sec:streaming}, we introduce \AlgStream, the first one-pass streaming algorithm for maximizing a regularized submodular function subject to a $k$-cardinality constraint.\footnote{Technically, this algorithm is a semi-streaming, as its space complexity is nearly linear in the size of the solution rather than being poly-logarithmic in it as is required for a streaming algorithm. As this is unavoidable for algorithms that are required to output the solution itself (rather than just estimate its value), we ignore the distinction between the two types of algorithms in this paper and refer to semi-streaming algorithms as streaming algorithms.}
Theorem~\ref{thm:result} guarantees the performance of \AlgThreshold.
\begin{theorem} \label{thm:result}
	For every $\eps , r > 0$, \AlgThreshold  produces a set $S \subseteq \cN$ of size at most $k$ for Problem~\eqref{eq:problem},  obeying $g(S) - \ell(S) \geq \max_{T \subseteq \cN, |T| \leq k} [(h(r) - \eps) \cdot g(T) - r \cdot \ell(T)]$, where $h(r) = \frac{2r + 1 - \sqrt{4r^2 + 1}}{2}$.
\end{theorem}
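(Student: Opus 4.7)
The plan is to follow the single-pass threshold-sieve paradigm adapted to the \emph{distorted} marginal $g(u \mid S) - r\,\ell(u)$. Concretely, \AlgThreshold{} processes the stream maintaining a set $S$ with $|S| \le k$, admitting an arriving element $u$ whenever $|S| < k$ and $g(u \mid S) - r\,\ell(u) \ge \tau$ for a suitably chosen threshold of the form $\tau = c(r)\cdot v/k$, where $v$ is a guess of $g(T)$ for the competing feasible set $T$. Because $g(T)$ is unknown in the streaming model, the algorithm maintains $O(\log k / \eps)$ parallel copies indexed by guesses $v \in \{(1+\eps)^i\}$ in the plausible range (which can be deduced on the fly from the largest singleton value seen so far), and returns the best copy's output. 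The geometric $(1+\eps)$ spacing accounts for the $\eps$ loss in the stated approximation ratio, so it suffices to fix a copy whose guess $v$ satisfies $g(T) \le v \le (1+\eps)\,g(T)$ and establish the $\eps$-free version of the guarantee for that copy.

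Telescoping the admission rule along the elements admitted to $S$ yields the invariant
\[
g(S) - r\,\ell(S) \;\ge\; |S|\cdot\tau \enspace . \qquad (\star)
\]
The analysis then splits on the final size of $S$. \textbf{Case 1} ($|S| = k$): Inequality $(\star)$ gives $g(S) - r\,\ell(S) \ge k\tau$, which, after choosing $\tau$ so that $k\tau$ dominates $h(r) g(T) - r\,\ell(T)$ and absorbing the $(r-1)\ell(S)$ slack, translates into $g(S) - \ell(S) \ge h(r)\, g(T) - r\,\ell(T)$. \textbf{Case 2} ($|S| < k$): For every $u \in T \setminus S$, $u$ was rejected upon arrival, so $g(u \mid S_u) - r\,\ell(u) < \tau$ for the then-current state $S_u \subseteq S$; submodularity upgrades this to $g(u \mid S) < \tau + r\,\ell(u)$. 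Summing over $u \in T \setminus S$ and invoking submodularity and monotonicity of $g$,
\[
g(T) - g(S) \;\le\; g(T\cup S) - g(S) \;\le\; \sum_{u \in T \setminus S} g(u \mid S) \;<\; k\tau + r\,\ell(T) \enspace ,
\]
yielding the lower bound $g(S) \ge (1 - h(r))\, g(T) - r\,\ell(T)$ under the same choice of $\tau$.

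The main technical obstacle is the third step: combining $(\star)$ with the Case~2 bound on $g(S)$ to produce the required bound on $g(S) - \ell(S)$, and discovering the correct constant $c(r)$ along the way. Taking an appropriate weighted combination of the two inequalities (with weights determined by $r$) to eliminate the dependence on $\ell(S)$ reduces the problem to a one-parameter balance: the coefficient $h$ has to be chosen so that the Case~1 and Case~2 guarantees match, which forces the quadratic $h^2 - (2r+1)h + r = 0$, whose admissible root is precisely $h(r) = (2r+1 - \sqrt{4r^2+1})/2$. Once this algebraic balance is verified and the threshold is set accordingly, wrapping the single-threshold argument with the geometric-guess machinery absorbs the $(1+\eps)$ discretization loss into the claimed $\eps$ slack, completing the proof.
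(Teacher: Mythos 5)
There is a genuine gap, and it sits in the acceptance rule itself. You admit $u$ when $g(u \mid S) - r\,\ell(\{u\}) \ge \tau$, i.e., you distort the marginal by the coefficient $r$. The paper's algorithm instead penalizes by $\alpha(r) = \frac{2r+1+\sqrt{4r^2+1}}{2}$ --- the \emph{conjugate} root of the very quadratic $x^2-(2r+1)x+r=0$ that you correctly derive for $h$ --- and this is not cosmetic. Redo your own balancing with a generic coefficient $a$ in the test: the rejection inequalities in Case~2 give $g(S) > (1-h)\,g(T) + (r-a)\,\ell(T)$ once $k\tau \approx h\,g(T)-r\,\ell(T)$, and the telescoped invariant gives $g(S)-a\,\ell(S)\ge 0$. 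The only way to eliminate $\ell(S)$ is to add a $1/a$ fraction of the latter to a $(1-1/a)$ fraction of the former, yielding $g(S)-\ell(S) \ge (1-1/a)(1-h)\,g(T) + (1-1/a)(r-a)\,\ell(T)$; matching this to $h\,g(T)-r\,\ell(T)$ forces $a^2-(2r+1)a+r=0$ with $a\ge 1$, i.e., $a=\alpha(r)$, not $a=r$. With $a=r$ the scheme collapses: the $\ell(T)$ terms cancel, and at $r=1$ (the case behind Corollary~\ref{cor:r_1}) the coefficient $(1-1/a)(1-h)$ is $0$, so Case~2 certifies only $g(S)-\ell(S)\ge 0$. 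Your Case~1 fails from the other side: ``absorbing the $(r-1)\ell(S)$ slack'' is impossible when $r<1$, since the slack is then negative and $\ell(S)$ is uncontrolled, whereas $\alpha(r)\ge 1$ for every $r>0$ makes $g(S)-\ell(S)\ge g(S)-\alpha(r)\,\ell(S)\ge k\tau$ immediate.

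Everything else in your outline matches the paper's proof: the two-case split on $|S|=k$ versus $|S|<k$, the telescoping of the distorted marginals, the submodularity/monotonicity chain bounding $g(T)-g(S)$ by the sum of rejected marginals, and the geometric grid of guesses absorbing the $(1+\eps)$ discretization into the $\eps$ slack (the paper guesses the quantity $h(r)\,g(T)-r\,\ell(T)$ rather than $g(T)$, and anchors the grid at the best singleton value, but that is the same device). The one missing idea is that the quadratic you found has \emph{two} roots with product $r$ and sum $2r+1$: the smaller one, $h(r)$, is the approximation factor, and the larger one, $\alpha(r)$, must be hard-coded into the streaming test. Also note the paper returns the better of $S$ and $\varnothing$ to dispose of the degenerate case $h(r)\,g(T)-r\,\ell(T)\le 0$, which your write-up should handle before assuming $\tau>0$.
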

We should point out that previous studies of Problem~\eqref{eq:problem}, for various theoretical and practical reasons, have only focused on the case in which $r = 1$ and $T$ is the set $\Optset$ of size at most $k$ maximizing $g(T) - \ell(T)$ \cite{sviridenko2017optimal,feldman2019guess,harshaw2019submodular}. In this case, we get the following corollary from the result of Theorem~\ref{thm:result}.
\begin{corollary} \label{cor:r_1}
	For every $\eps > 0$, \AlgThreshold produces a set $S \subseteq \cN$ of size at most $k$ for Problem~\eqref{eq:problem} obeying $g(S) - \ell(S) \geq (\phi^{-2} - \eps) \cdot g(\Optset) - \ell(\Optset)$, where $\phi$ is the golden ratio (and thus, $\phi^{-2} \approx 0.382$).
\end{corollary}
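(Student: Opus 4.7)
The plan is to obtain \cref{cor:r_1} as an immediate specialization of \cref{thm:result}. Concretely, I would set $r = 1$ in the guarantee of \AlgThreshold, and then restrict the maximum over $T$ to the single choice $T = \Optset$, which is admissible because $|\Optset| \le k$ by definition. This reduces the entire argument to verifying the scalar identity $h(1) = \phi^{-2}$.

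The one nontrivial piece is this numerical identity. Plugging $r=1$ into the formula stated in \cref{thm:result} gives
\[
h(1) \;=\; \frac{2 + 1 - \sqrt{4 + 1}}{2} \;=\; \frac{3 - \sqrt{5}}{2}.
\]
On the other side, $\phi = (1+\sqrt{5})/2$ satisfies $\phi^2 = \phi + 1 = (3+\sqrt{5})/2$, so
\[
\phi^{-2} \;=\; \frac{2}{3+\sqrt{5}} \;=\; \frac{2\,(3-\sqrt{5})}{(3+\sqrt{5})(3-\sqrt{5})} \;=\; \frac{2\,(3-\sqrt{5})}{9 - 5} \;=\; \frac{3-\sqrt{5}}{2},
\]
which matches $h(1)$.

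Once this is in hand, \cref{thm:result} (with $r=1$, $T=\Optset$) directly yields
\[
g(S) - \ell(S) \;\ge\; (h(1) - \eps)\cdot g(\Optset) - 1 \cdot \ell(\Optset) \;=\; (\phi^{-2} - \eps)\cdot g(\Optset) - \ell(\Optset),
\]
which is exactly the claim. There is no real obstacle here beyond the arithmetic above; the substantive content of the corollary is already carried by \cref{thm:result}, and the role of the proof is simply to record the clean form of the bound for the canonical choice $r=1$ that has been the focus of prior work on \Cref{eq:problem}.
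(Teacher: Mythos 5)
Your proposal is correct and follows exactly the route the paper intends (the paper leaves the derivation implicit): specialize \cref{thm:result} to $r=1$, take $T=\Optset$ as an admissible choice in the maximum, and verify $h(1)=\tfrac{3-\sqrt{5}}{2}=\phi^{-2}$. The arithmetic is right, so nothing further is needed.
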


In \cref{sec:distributed}, we develop \AlgDistributed, the first  distributed algorithm for regularized submodular maximization in a MapReduce model of computation.
This algorithm allows us to distribute data across several machines and use their combined computational resources.
Interestingly, even for the classic case of an unregulated monotone submodular function, our distributed algorithm improves over space and communication complexity of the existing work \cite{barbosa2016new} by a factor of $\Theta(1/\epsilon)$. 
The approximation guarantee of \AlgDistributed is given in \cref{thm:distributed}.

\newcommand{\ThmDistributed}[1][]{\AlgDistributed (Algorithm~\ref{alg:distributed}) returns a set $D \subseteq \cN$ of size at most $k$ such that
	\ifthenelse{\isempty{#1}}{\[\bE	[g(D) - \ell(D)] \geq (1 - \eps)  \left[(1 - e^{-1})  \cdot g(\Optset) - \ell(\Optset) \right] .
	\]}
	{	\[\frac{\bE	[g(D) - \ell(D)]}{1 - \eps} \geq (1 - e^{-1})  \cdot g(\Optset) - \ell(\Optset) .
	\]}
}

\begin{theorem} \label{thm:distributed}
\ThmDistributed[*]
\end{theorem}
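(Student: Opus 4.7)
The plan is to combine the centralized guarantee of \AlgDistorted with a randomized composable core-set argument executed over $R = \Theta(1/\eps)$ MapReduce rounds, adapting the iterative framework of \cite{barbosa2016new} to the regularized setting. First, I would take as a black box the centralized approximation $g(S) - \ell(S) \geq (1 - e^{-1}) g(\Optset_{V'}) - \ell(\Optset_{V'})$ of \AlgDistorted on any ground set $V'$; this will serve as the per-machine target and dictates what each local computation must deliver.

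Next I would establish a per-round \emph{consistency lemma}. In round $r$, the current candidate $D_r$ is carried forward, the remaining data is partitioned uniformly at random across $m$ machines, \AlgDistorted is run on each partition, and the coordinator runs \AlgDistorted on the union of local outputs together with $D_r$. Writing $\Phi_r = (1-e^{-1}) g(\Optset) - \ell(\Optset) - \bE[g(D_r) - \ell(D_r)]$ for the remaining gap, the goal is to show the contraction $\Phi_{r+1} \leq (1 - \Theta(\eps))\, \Phi_r$: because the partition is uniformly random, every $\Optset$-element not yet accounted for in $D_r$ lands, in expectation, on a machine where its distorted marginal contribution cannot be ignored by \AlgDistorted, so a constant fraction of the remaining OPT-mass is captured. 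Telescoping over $R = \Theta(1/\eps)$ rounds then yields $\Phi_R \leq \eps \cdot [(1-e^{-1}) g(\Optset) - \ell(\Optset)]$, which rearranges to the inequality claimed in \cref{thm:distributed}.

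The main obstacle will be preventing the modular penalty from compounding across rounds. In the unregularized case the consistency step rests on the subadditive-style inequality $g(S) + g(T) \geq g(S \cup T)$; this fails for $f = g - \ell$, and a naive per-round charging would pay $\ell(\Optset)$ in every round, producing a total modular cost of $R \cdot \ell(\Optset)$ that is disastrous when $R = \Theta(1/\eps)$. I plan to resolve this by carrying through the analysis the distorted potential $\Psi(S) = (1 - 1/k)^{k - |S|}\, g(S) - \ell(S)$ underlying \AlgDistorted: because $\Psi$ decomposes the modular cost additively, $\ell(\Optset)$ can be charged exactly once across all rounds while only the $g$-term incurs the multiplicative $(1-\eps)(1-e^{-1})$ loss from the random-partition iterations. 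Verifying the per-round contraction with respect to $\Psi$ rather than the raw objective $g - \ell$ is the technical heart of the proof.
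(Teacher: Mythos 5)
Your high-level setup (random partition in each round, run \AlgDistorted locally with all previously computed solutions available, use its centralized $(1-e^{-1})\cdot g(\cdot)-\ell(\cdot)$ guarantee as a black box) matches the paper, but the combination step is where the proposal breaks. You propose a multiplicative contraction $\Phi_{r+1}\leq(1-\Theta(\eps))\,\Phi_r$ of the gap and then telescope over $R=\Theta(1/\eps)$ rounds; even granting the contraction, $(1-\Theta(\eps))^{\Theta(1/\eps)}$ converges to a \emph{constant} bounded away from zero, so you would only conclude $\Phi_R\leq c\cdot\Phi_0$ for some constant $c<1$, not $\Phi_R\leq\eps\cdot[(1-e^{-1})g(\Optset)-\ell(\Optset)]$. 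Moreover, the contraction itself is unsubstantiated: the per-round guarantee one can actually extract is not that a constant fraction of the remaining gap is closed, but that each round's solution is good against a particular random subset of $\Optset$ (the elements that \AlgDistorted would reject when offered, plus the previously collected ones), and these subsets bear no multiplicative relation to the gap $\Phi_r$.

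The paper's proof replaces the contraction by an averaging argument through the Lov\'asz extension. It defines $p^r_u$ as the probability that $u\in S^*$ is newly collected in round $r$, proves the telescoping identity $\Pr[u\in C_r]=\sum_{r'\leq r}p^{r'}_u$ (Lemma~\ref{lem:sum_p}), and establishes the two bounds $\bE[f(S_{r,i})]\geq(1-e^{-1})\cdot\lov{g}(\mathbf{1}_{S^*}-\mathbf{p}^r)-\lov{\ell}(\mathbf{1}_{S^*}-\mathbf{p}^r)$ and $\bE[f(S_{r,i})]\geq(1-e^{-1})\cdot\lov{g}(\sum_{r'<r}\mathbf{p}^{r'})-\lov{\ell}(\sum_{r'<r}\mathbf{p}^{r'})$ (Lemma~\ref{lemma:bound-lovasz}), using the consistency property of Lemma~\ref{lemma:lemma:dg-union}. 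Averaging these $1/\eps$ bounds with weight $\eps$ each, the argument vectors sum to exactly $(1-\eps)\cdot\mathbf{1}_{S^*}$, and convexity of $\lov{g}$ together with linearity of $\lov{\ell}$ and Observation~\ref{obs:basic_properties} give the claim. This also answers your (legitimate) worry about the modular term compounding across rounds: $\ell(\Optset)$ is charged exactly once because the Lov\'asz extension of a modular function is linear, so it passes through the averaging with no loss; there is no need to thread the distorted potential $(1-1/k)^{k-|S|}\,g(S)-\ell(S)$ across rounds, and it is far from clear that this potential would compose across rounds at all, since it is an internal object of a single greedy run on a fixed ground set and a fixed residual budget.
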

Finally, as our algorithms  can efficiently find diverse elements from massive datasets,
in \cref{sec:mode_finding,sec:results}, we explore the power of the regularized submodular maximization approach and our algorithms in several real-world applications through an extensive set of experiments.

	\section{Related Work} \label{sec:related}

Finding the optimal solution for a submodular maximization problem is 
computationally hard even in the absence of a constraint~\cite{feige2011maximizing}. Nevertheless, the well-known result of \citet{nemhauser1978analysis} showed that the classical greedy algorithm obtains a $(1 - 1/e)$-approximation  for maximizing a non-negative and monotone
submodular function subject to a cardinality constraint, which is known to be optimal~\cite{nemhauser1978best}.
However, when the objective function is non-monotone or the constraint is more complex, the vanilla greedy algorithm may perform much worse.
An extensive line of research has lead to the development of algorithms for handling non-monotone submodular objectives subject to more complicated constraints (see, e.g., \cite{LMNS10,FNSW11,badanidiyuru2014fast,mirzasoleiman2016fast,FHK17}).
We should note that, up until very recently, all the existing works  required the 
objective function to take only non-negative values, an assumption that may not 
hold in many applications \cite{harshaw2019submodular}.

The first work to handle submodular objective functions that might take negative values is the work of \citet{sviridenko2017optimal}, which studies the maximization of submodular function that can be decomposed as a sum $g + c$, where $g$ is a non-negative
monotone submodular function and $c$ is an (arbitrary) modular function.
For this problem, \citet{sviridenko2017optimal} gave two randomized polynomial-time algorithms which produce a set $S$ that roughly obeys $g(S) +  c(S) \geq  (1 - 1/e) \cdot g(\Optset) + c(\Optset)$, where $\Optset$ is the optimal set. 
Both algorithms of \citet{sviridenko2017optimal} are mainly of theoretical 
interest, as their computational complexity is quite prohibitive. 
\citet{feldman2019guess} reconsidered one of these algorithms, and showed that 
one of the costly steps in it (namely, a step in which the algorithm guesses the 
value of $c(\Optset)$) can be avoided using a surrogate objective that varies with 
time. Nevertheless, the algorithm of \cite{feldman2019guess} remains quite 
involved as it optimizes a fractional version of the problem---which is necessary 
for allowing it to handle various kinds of complex constraints. 
\citet{harshaw2019submodular} showed that in the case of a  cardinality 
constraint (and a non-negative $c$) much of this complexity can be avoided, 
yielding the first practical algorithm \AlgDistorted. They also extended their 
results to $\gamma$-weakly submodular functions and the unconstrained setting.

Due to the massive volume of the current data sets, scalable methods have gained 
a lot of interest in machine learning applications. One appealing approach towards this goal is to 
design streaming algorithms. \citet{badanidiyuru2014streaming} were the first to  
consider a single-pass streaming algorithm for maximizing a monotone 
submodular function under a cardinality constraint. Their result was later 
improved and extended to non-monotone 
functions~\cite{alaluf2019making,ene2019optimal,kazemi2019submodular} and 
subject to more  involved 
constraints~\cite{buchbinder2015online,chakrabarti2015submodular,chekuri2015streaming,feldman2018do}. Another scalable approach is the development  of distributed algorithms through the MapReduce framework where the data is split amongst several machines and processed in parallel 
\citep{kumar2015fast, mirzasoleiman16distributed, barbosa2015power, 
mirrokni2015randomized, barbosa2016new, liu2018submodular}.

In the context of discrete probabilistic models, it is well-known that strongly 
Rayleigh (SR) measures \cite{borcea2009negative} (including determinantal point
processes \cite{kulesza2012determinantal}) or the  more general class of strongly 
log-concave (SLC) distributions \cite{gurvits2009polynomial}   provide strong 
negative dependence among sampling items. Although \citet{gotovos2019strong} 
recently showed that strong log-concavity does not imply log-submodularity,  
\citet{robinson2019flexible}  argued that the logarithm of an SLC distribution 
enjoys a variant of approximate submodularity. We build on this result and derive 
a slight improvement, along with corresponding guarantees for streaming and 
distributed solutions.

	\section{Streaming Algorithm} \label{sec:streaming}

In this section, we present our proposed streaming algorithm for Problem~\eqref{eq:problem}.
To explain our algorithm, let us first define $T$ to be a subset of $\cN$ of size at most $k$ such that
\[ T \in \argmax_{S \subseteq \cN, |S| \leq k} [(h(r) - \eps) \cdot g(T) - r \cdot \ell(T)] \enspace,\]
where $r$ is some positive real value to be discussed later, and $h(r) = \frac{2r + 1 - \sqrt{4r^2 + 1}}{2}$.
A basic version of our proposed algorithm, named \AlgThreshold, is given as Algorithm~\ref{alg:threshold-first}. 
We note that this algorithm guesses, in the first step, a value $\tau > 0$ which obeys $k\tau \leq h(r) \cdot g(T) - r \cdot \ell(T) \leq (1 + \eps)k\tau$. 
In \cref{alg:threshold-first}, to avoid unnecessary technicalities, we simply assume that the algorithm can guess such a value for $\tau$ based on some oracle.
In Appendix~\ref{sec:tau}, we explain how a technique from~\cite{badanidiyuru2014streaming} can be used for that purpose at the cost of increasing the space complexity of the algorithm by a factor of $O(\eps^{-1}(\log k + \log r^{-1}))$.
\cref{alg:threshold-first} starts with an empty set $S$. 
While the data stream is not empty yet and the size of set $S$ is still smaller than $k$, for every incoming element $u$, the value of $g(u \mid S) - \alpha(r) \cdot \ell(\{u\})$ is calculated, where we define $\alpha(r) = \frac{2r + 1 + \sqrt{4r^2 + 1}}{2}$. If this value is at least $\tau$, then $u$ is added to $S$ by the algorithm.
The theoretical guarantee of Algorithm~\ref{alg:threshold-first} is provided in Theorem~\ref{thm:result}, and the proof of this theorem is given in \cref{sec:thm_proof}.

\begin{algorithm2e}[htb!]
	\caption{\AlgThreshold} \label{alg:threshold-first}
	Guess a value $\tau$ such that $k\tau \leq h(r) \cdot g(T) - r \cdot \ell(T) \leq (1 + \eps)k\tau$.\\
	Let $\alpha(r) \gets \frac{2r + 1 + \sqrt{4r^2 + 1}}{2}$.\\
	Let $S \gets \varnothing$.\\
	\While{$|S| < k$ and there are more elements}
	{
		Let $u$ be the next elements in the stream.\\
		\If{$g(u \mid S) - \alpha(r) \cdot \ell(\{u\}) \geq \tau$}
		{
			Add $u$ to the set $S$.
		}
	}
	\Return{the better solution among $S$ and $\varnothing$}.
\end{algorithm2e}

\subsection{How to Choose a Good Value of $r$?}

In this section, we study the effect of the parameter $r$ on the performance of Algorithm~\ref{alg:threshold-first} under different settings. 
First note that the bound given by Corollary~\ref{cor:r_1} reduces to a trivial lower bound of $0$ when $\phi^{-2} \cdot g(\Optset) \leq \ell(\Optset)$. A similar phenomenon happens for the bound of \cite[Theorem~3]{harshaw2019submodular} when $(1 - e^{-1}) \cdot g(\Optset) \leq \ell(\Optset)$. Namely, in this regimen their bound (i.e., $(1 - e^{-1}) \cdot g(\Optset) - \ell(\Optset)$) becomes trivial. We now explain how a carefully chosen value for $r$ can be used to prevent this issue.

For a set $S$, let $\beta_S$ denote the ratio between the utility of $S$ and its linear cost, i.e., $\beta_S = \frac{g(S) - \ell(S)}{\ell(S)}$. Using this terminology, we get that the guarantees of Corollary~\eqref{cor:r_1} and \cite[Theorem~3]{harshaw2019submodular} become trivial when $\beta_{\Optset} \leq \phi^2 - 1 = \phi$ and $\beta_{\Optset} \leq \nicefrac{1}{(e - 1)}$, respectively.
In Corollary~\ref{cor:modified-bound}, we show that by knowing the value of $\beta_{\Optset}$ we can find a value for $r$ in Algorithm~\ref{alg:threshold-first} which makes Theorem~\ref{thm:result} yield the strongest guarantee for \eqref{eq:problem}, and moreover, this guarantee is non-trivial as long as $\beta_{\Optset} > 0$ (if $\beta_{\Optset} \leq 0$, then the empty set is a trivial optimal solution).\footnote{The value $\beta_{\Optset}$ is undefined when $\ell(\Optset) = 0$. We implicitly assume in this section that this does not happen. However, if this assumption is invalid for the input, one can handle the case of $\ell(\Optset) = 0$ by simply dismissing all the elements whose linear cost is positive and then using an algorithm for unregulated submodular maximization on the remaining elements.}

\newcommand{\CorModifiedBound}{Assume the value of $\beta_{\Optset}$ is given, where $\Optset$ is the optimal solution of Problem~\eqref{eq:problem}. 
	Setting $r = r_{\Optset} = \frac{\beta_{\Optset}}{2 \sqrt{1 + 2\beta_{\Optset}}}$ makes Algorithm~\ref{alg:threshold-first} return a solution $S$ with the guarantee
	\begin{align*}
	g(S) - \ell(S) \geq  \left(\frac{1 + \beta_{\Optset} - \sqrt{1 + 2\beta_{\Optset}}}{2\beta_{\Optset}} - \epsilon' \right) \cdot 
	(g(OPT) - \ell(OPT))\enspace,
	\end{align*}
	\vspace{-5pt}
	where $\epsilon' = \epsilon \cdot (1 + 1 / \beta_{\Optset})$.}

\begin{corollary} \label{cor:modified-bound}
\CorModifiedBound
\end{corollary}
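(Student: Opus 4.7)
The plan is to instantiate Theorem~\ref{thm:result} with the choice $T = \Optset$, rewrite the resulting bound in terms of $\beta_{\Optset}$, and then choose $r$ to maximize the coefficient of $g(\Optset) - \ell(\Optset)$ that comes out.

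First, from Theorem~\ref{thm:result} applied to $T = \Optset$, the output set $S$ of \AlgThreshold\ satisfies
\[
g(S) - \ell(S) \;\geq\; (h(r) - \eps)\cdot g(\Optset) \;-\; r\cdot \ell(\Optset).
\]
Writing $\beta = \beta_{\Optset}$, the definition $\beta = (g(\Optset)-\ell(\Optset))/\ell(\Optset)$ gives $g(\Optset) = (1+\beta)\ell(\Optset)$ and $g(\Optset) - \ell(\Optset) = \beta\,\ell(\Optset)$. Substituting yields
\[
g(S) - \ell(S) \;\geq\; \bigl[(h(r)-\eps)(1+\beta) - r\bigr]\cdot \ell(\Optset) \;=\; \frac{(h(r)-\eps)(1+\beta) - r}{\beta}\cdot \bigl(g(\Optset) - \ell(\Optset)\bigr).
\]
Splitting off the $\eps$ contribution immediately gives the additive error term $\eps(1+\beta)/\beta = \eps(1 + 1/\beta) = \eps'$, so everything reduces to analyzing $F(r) := h(r)(1+\beta) - r$.

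Next, I would maximize $F(r)$ over $r > 0$. Using $h(r) = \frac{2r+1-\sqrt{4r^2+1}}{2}$, a direct differentiation gives $F'(r) = (1+\beta)\bigl(1 - \tfrac{2r}{\sqrt{4r^2+1}}\bigr) - 1$, and setting $F'(r) = 0$ leads to $\tfrac{2r}{\sqrt{4r^2+1}} = \tfrac{\beta}{1+\beta}$. Squaring and solving gives the unique positive root $r_{\Optset} = \beta/(2\sqrt{1+2\beta})$, matching the stated choice; a sign-check on $F''$ (or on $F'$ at this root) confirms it is the maximizer.

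Finally, I would evaluate $F(r_{\Optset})$ by plugging in $r_{\Optset}$. A convenient intermediate identity is $\sqrt{4r_{\Optset}^2 + 1} = (1+\beta)/\sqrt{1+2\beta}$, which simplifies $h(r_{\Optset}) = \frac{\sqrt{1+2\beta}-1}{2\sqrt{1+2\beta}}$. After combining and clearing the common factor $2\sqrt{1+2\beta}$, the numerator collapses via $(1+\beta)\sqrt{1+2\beta} - (1+2\beta) = \sqrt{1+2\beta}\bigl((1+\beta) - \sqrt{1+2\beta}\bigr)$, leaving
\[
F(r_{\Optset}) \;=\; \frac{1+\beta - \sqrt{1+2\beta}}{2}.
\]
Dividing by $\beta$ and reintroducing the $-\eps'$ term produces exactly the claimed bound. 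The only step requiring any care is the algebraic simplification of $F(r_{\Optset})$, but the identity for $\sqrt{4r_{\Optset}^2+1}$ above makes this routine; I do not foresee a substantive obstacle.
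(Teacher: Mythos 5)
Your proposal is correct and follows essentially the same route as the paper's proof: apply Theorem~\ref{thm:result} with $T = \Optset$, rewrite the bound as $\frac{(h(r)-\eps)(1+\beta_{\Optset}) - r}{\beta_{\Optset}}\cdot(g(\Optset)-\ell(\Optset))$, and optimize over $r$. The only difference is that you carry out explicitly the calculus and algebra that the paper dismisses with ``it can be verified,'' and your computations (the critical-point equation, the identity $\sqrt{4r_{\Optset}^2+1} = (1+\beta)/\sqrt{1+2\beta}$, and the final simplification) all check out.
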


\begin{proof}
	First, let us define $T^*_r = \argmax_{T \in \cN, |T| \leq k} [(h(r) - \epsilon) \cdot g(T) - r \cdot \ell(T)] $.  From Theorem~\ref{thm:result} and the definition of $T^*_r$, we have
	\[g(S) - \ell(S) \geq (h(r) - \varepsilon) \cdot g(T^*_r) - r \cdot \ell(T^*) \geq  (h(r) - \varepsilon) \cdot g(\Optset) - r \cdot \ell(\Optset)  \enspace. \]
	Furthermore, from the definition of  $\beta_{\Optset}$, we have
	\begin{align*}
	(h(r) - \epsilon) \cdot g(\Optset) - r \cdot \ell(\Optset) ={}& \frac{(h(r) - \epsilon) \cdot g(\Optset) - r \cdot \ell(\Optset)}{g(\Optset) - \ell(\Optset)}\cdot (g(\Optset) - \ell(\Optset) )\\ ={} & \frac{(h(r) - \epsilon) \cdot (1 + \beta_{\Optset}) - r}{\beta_{\Optset}}\cdot (g(\Optset) - \ell(\Optset) ) \enspace.
	\end{align*}
	It can be verified that $r_{\Optset}$ is the value that maximizes the above expression, and plugging this value into the expression proves the corollary.
\end{proof}

From the definition of $\beta_{\Optset}$, it can be observed that for larger values of $\beta_{\Optset}$ the effect of the modular cost function over the utility diminishes and $g - \ell$ gets closer to a monotone and non-negative submodular function. 
At the same time, from Corollary~\ref{cor:modified-bound} we see that for large values of $\beta_{\Optset}$ the approximation factor approaches $\nicefrac{1}{2}$. \citet{norouzifard2018beyond} gave evidence that the last approximation ratio is optimal when the objective function is indeed non-negative and monotone,\footnote{Formally, they showed that no streaming algorithm can produce a solution with an approximation guarantee better than $\nicefrac{1}{2}$ for such objective functions using $o(n / k)$ memory, as long as it queries the value of the submodular function only for feasible sets.}
 which could be an indicator for the optimality of our streaming algorithm for \eqref{eq:problem}. Indeed, we \textbf{conjecture} that \AlgThreshold, when choosing $r$ based on $\beta_{\Optset}$ as in Corollary~\ref{cor:modified-bound}, achieves the best possible guarantee for Problem~\eqref{eq:problem} in the streaming setting.

In order to apply the result of  Corollary~\ref{cor:modified-bound} to obtain the strongest guarantee for Problem~\eqref{eq:problem}, we need to have access to the set $\Optset$ (and consequently $\beta_{\Optset}$ and $r_{\Optset}$); but, unfortunately, none of these is known a priori. Next, we propose an efficient approach that enables us to find an accurate enough estimate of $r_{\Optset}$.
Let $\zeta_{\Optset} = \frac{1 + \beta_{\Optset} - \sqrt{1 + 2\beta_{\Optset}}}{2\beta_{\Optset}} $ be the approximation ratio that can be obtained for the unknown value of $\beta_{\Optset} $ via Corollary~\ref{cor:modified-bound} (except for the $\eps'$ error term). 
This definition implies that we always have $0 \leq \zeta_{\Optset} < \nicefrac{1}{2}$. 
Thus, we can find an accurate guess for $\zeta_{\Optset}$ by dividing the interval $[\eps, \nicefrac{1}{2})$ to small intervals (values of $\zeta_{\Optset}$ below $\epsilon < \eps'$ are not of interest because Corollary~\ref{cor:modified-bound} gives a trivial guarantee for them).
Moreover, given a guess for $\zeta_{\Optset}$, we can calculate the corresponding values of $\beta_{\Optset}$ and $r_{\Optset}$.
The full version of the proposed algorithm, named \AlgStream, is based on this idea. Its pseudocode is given as Algorithm~\ref{alg:threshold}, and assumes that $\delta > 0$ is an accuracy parameter. 
 
\begin{algorithm2e}[htb!]
	\DontPrintSemicolon
	\caption{\AlgStream} \label{alg:threshold}
	$\Lambda \gets  \{ \eps (1+\delta)^{i} \mid 0 \leq i \leq \lfloor \log_{1+\delta} (\nicefrac{1}{(2\eps)})\rfloor \}$.\\
	\For{every $\zeta \in \Lambda$ \textbf{in parallel}}{
	Calculate $\beta \gets \frac{4 \zeta}{(1 - 2 \zeta)^2}$.\\
	Calculate $r \gets \frac{\beta}{2 \sqrt{1 + 2\beta}}$. \tcp*{This is the formula from Corollary~\ref{cor:modified-bound}.}
	Run \AlgThreshold(Algorithm~\ref{alg:threshold-first}) with $r$.
}
	\Return{the best among the solutions found by all the copies of \AlgThreshold executed}.
\end{algorithm2e}

One can see that the value of $r$ passed to every copy of \AlgThreshold by \AlgStream is at least $2\eps$, which implies that the number of elements kept by each such copy is at most $O(\eps^{-1}(\log k + \log \eps^{-1}))$. Furthermore, the number of elements kept by \AlgStream is larger than that by a factor of at most $1 + \log_{1+\delta} (\nicefrac{1}{(2\eps)}) = O(\delta^{-1} \cdot \log (\eps)^{-1})$. The following observation studies the approximation guarantee of \AlgStream.
\begin{lemma} \label{lemma:zeta-approx}
	Despite not assuming access to $\beta_{\Optset}$, \AlgStream  outputs a set $S$ obeying
	\begin{align*}
		g(S) - \ell(S) \geq  \left((1 - \delta') \cdot \zeta_{\Optset} - \epsilon' \right) \cdot  
		(g(OPT) - \ell(OPT))\enspace,
	\end{align*}
	where $\delta' = \delta/2$ and  $\epsilon'  = \frac{\epsilon}{2 \zeta_{\Optset}} $.
\end{lemma}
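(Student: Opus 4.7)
The plan is to prove Lemma~\ref{lemma:zeta-approx} by locating a grid value $\zeta^\star \in \Lambda$ that is close to the unknown $\zeta_{\Optset}$, and then showing that the copy of \AlgThreshold run with the corresponding $r$-value meets the advertised guarantee.

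First I would dispose of the degenerate case $\zeta_{\Optset} < \eps$. Here $\eps' = \eps/(2\zeta_{\Optset}) > 1/2 > \zeta_{\Optset} \geq (1-\delta')\zeta_{\Optset}$, so the right-hand side of the claimed inequality is non-positive. Since \AlgThreshold always returns the better of its constructed set and $\varnothing$, and $g(\varnothing) - \ell(\varnothing) = 0$, the bound holds trivially.

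In the generic case $\zeta_{\Optset} \geq \eps$, I would select $\zeta^\star$ to be the largest element of $\Lambda$ with $\zeta^\star \leq \zeta_{\Optset}$, so that $\zeta^\star \geq \zeta_{\Optset}/(1+\delta) \geq (1-\delta')\zeta_{\Optset}$ for the appropriate choice of $\delta'$. Let $\beta^\star$ and $r^\star$ be the values computed from $\zeta^\star$ in Algorithm~\ref{alg:threshold}. Since the map $\beta \mapsto \zeta_{\beta} = (1+\beta-\sqrt{1+2\beta})/(2\beta)$ is strictly increasing, $\zeta^\star \leq \zeta_{\Optset}$ forces $\beta^\star \leq \beta_{\Optset}$. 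Applying Theorem~\ref{thm:result} to the copy of \AlgThreshold that is run with parameter $r^\star$, its output $S^\star$ obeys $g(S^\star) - \ell(S^\star) \geq (h(r^\star) - \eps)\, g(\Optset) - r^\star \ell(\Optset)$. Substituting $g(\Optset) = (1+\beta_{\Optset})\ell(\Optset)$ and $g(\Optset) - \ell(\Optset) = \beta_{\Optset}\ell(\Optset)$, exactly as in the proof of Corollary~\ref{cor:modified-bound}, rewrites this bound as
\[ \bigl[\psi(r^\star, \beta_{\Optset}) - \eps(1 + 1/\beta_{\Optset})\bigr] \cdot (g(\Optset) - \ell(\Optset)), \]
where $\psi(r,\beta) := (h(r)(1+\beta) - r)/\beta = h(r) + (h(r) - r)/\beta$.

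Two simple inequalities then finish the argument. The definition of $h$ gives $h(r) \leq r$ for all $r \geq 0$, so $\psi(r^\star, \cdot)$ is monotone increasing in its second argument, whence $\psi(r^\star, \beta_{\Optset}) \geq \psi(r^\star, \beta^\star) = \zeta^\star$; the last equality is precisely the optimality property underlying Corollary~\ref{cor:modified-bound}. For the error term, the identity $\zeta_{\beta} = (1+\beta-\sqrt{1+2\beta})/(2\beta)$ combined with $(1+\beta)\sqrt{1+2\beta} \geq 1+2\beta$ (which reduces to $\beta^2 \geq 0$) yields $2\zeta_{\Optset}(1+\beta_{\Optset}) \leq \beta_{\Optset}$, i.e., $\eps(1+1/\beta_{\Optset}) \leq \eps/(2\zeta_{\Optset}) = \eps'$. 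Chaining these inequalities with $\zeta^\star \geq (1-\delta')\zeta_{\Optset}$ yields the stated bound.

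The conceptual skeleton is short; the main obstacle is bookkeeping. One must keep the monotonicity direction of $\psi$ in $\beta$ straight (it is the \emph{smaller} guess $\zeta^\star \leq \zeta_{\Optset}$ that gives a valid lower bound via $\psi(r^\star, \beta_{\Optset}) \geq \zeta^\star$, not the closer one); one must verify that the $\eps'$ advertised in the lemma really dominates the $\eps(1+1/\beta_{\Optset})$ that drops out of Theorem~\ref{thm:result}; and one must reconcile the geometric grid ratio $1+\delta$ with the claimed multiplicative slack $1-\delta' = 1-\delta/2$.
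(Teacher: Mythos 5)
Your overall strategy mirrors the paper's: pick the grid point $\zeta^\star\in\Lambda$ just below $\zeta_{\Optset}$, apply Theorem~\ref{thm:result} to the corresponding copy of \AlgThreshold, and rewrite the resulting bound as a coefficient times $g(\Optset)-\ell(\Optset)$. The degenerate case, the error-term bound $\eps(1+1/\beta_{\Optset})\le \eps/(2\zeta_{\Optset})=\eps'$, and the monotonicity facts you invoke are all correct. The genuine gap is in the final multiplicative factor. Your chain bounds the coefficient by $\psi(r^\star,\beta_{\Optset})\ge\psi(r^\star,\beta^\star)=\zeta^\star\ge\zeta_{\Optset}/(1+\delta)$, and you then assert $\zeta_{\Optset}/(1+\delta)\ge(1-\delta')\zeta_{\Optset}$ with $\delta'=\delta/2$. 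That last inequality is false for every $\delta\in(0,1)$: it is equivalent to $(1-\delta/2)(1+\delta)\le 1$, i.e., to $\delta(1-\delta)\le 0$. What your argument actually delivers is $\delta'=\delta/(1+\delta)$ (roughly $\delta$, not $\delta/2$), a strictly weaker guarantee than the lemma states. You flagged this reconciliation as a bookkeeping item, but it cannot be closed within your chain: the step $\psi(r^\star,\beta_{\Optset})\ge\psi(r^\star,\beta^\star)$ discards all of the gain coming from $\beta_{\Optset}>\beta^\star$, and that discarded gain is exactly where the factor of $2$ lives.

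The paper closes this with a second-order argument. Writing $\omega=\zeta_{\Optset}/\zeta^\star\in[1,1+\delta)$, it evaluates $\psi(r^\star,\beta_{\Optset})$ exactly (rather than lower-bounding it by $\psi(r^\star,\beta^\star)$) and finds that the coefficient of $g(\Optset)-\ell(\Optset)$ equals $\bigl(1-\tfrac{(\omega-1)^2}{\omega^2-4\zeta_{\Optset}^2}\bigr)\cdot\zeta_{\Optset}$ minus the $\eps$ term. Because $r_{\Optset}$ is the maximizer of $\psi(\cdot,\beta_{\Optset})$ with maximum value $\zeta_{\Optset}$, perturbing the parameter by a relative $\delta$ costs only $(\omega-1)^2\le\delta^2$ in the numerator, while the denominator is at least $(1+\delta)^2-4\zeta_{\Optset}^2\ge 2\delta$ (using $\zeta_{\Optset}<1/2$); hence the loss is at most $\delta/2$. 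To repair your proof while keeping its structure, replace the crude monotonicity bound by this exact evaluation of $\psi(r^\star,\beta_{\Optset})$; otherwise you only obtain the lemma with $\delta'=\delta$ in place of $\delta/2$.
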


\begin{proof}
	For values of $\zeta_{\Optset} < \epsilon$, the right-hand side of the lower bounded provided by the lemma is negative, and thus, it gives a trivial lower bound (note that $\eps' \geq \eps$ and $\delta' > 0$ since $\zeta_{\Optset}$ is always smaller than $\nicefrac{1}{2}$).
	For this reason, in the rest of proof, we assume $\zeta_{\Optset} \geq \epsilon$.
	First, note that there must be a value $\zeta \in \Lambda$ such that $\zeta \leq  \zeta_{\Optset} < (1+\delta) \cdot \zeta$, and
	let us denote $\omega = \frac{\zeta_{\Optset}}{\zeta}$. 
	It is clear that $1 \leq \omega < 1 + \delta$. Moreover, using the definition of $\omega$ we get that the value of $\beta$ corresponding to $\zeta$ is $\beta = \frac{4 \omega \zeta_{\Optset}}{(\omega - 2 \zeta_{\Optset})^2}$, and the value of $r$ corresponding to this $\zeta$ is
	\begin{align*}
	r
	={} &
	\frac{\beta}{2\sqrt{1 + 2\beta}}
	=
	\frac{4 \omega \zeta_{\Optset} / (\omega - 2 \zeta_{\Optset})^2}{2\sqrt{1 + 8 \omega \zeta_{\Optset} / (\omega - 2 \zeta_{\Optset})^2}}\\
	={} &
	\frac{4 \omega \zeta_{\Optset}}{2(\omega - 2 \zeta_{\Optset}) \cdot \sqrt{(\omega - 2 \zeta_{\Optset})^2 + 8 \omega \zeta_{\Optset}}}
	=
	\frac{4 \omega \zeta_{\Optset}}{2(\omega - 2 \zeta_{\Optset}) \cdot \sqrt{(\omega + 2 \zeta_{\Optset})^2}}
	=
	\frac{2 \omega \zeta_{\Optset}}{\omega^2 - 4 \zeta_{\Optset}^2}
	\enspace.
	\end{align*}
	To calculate the value of $h(r)$ corresponding to this value of $r$, we note that:
	\begin{align*}
	\sqrt{4r^2 + 1} 
	={} & \sqrt{4\left(\frac{2 \omega \zeta_{\Optset}}{\omega^2 - 4 \zeta_{\Optset}^2}\right)^2 + 1}  = \frac{\omega^2 +4 \zeta_{\Optset}^2}{\omega^2 - 4 \zeta_{\Optset}^2} 
	\enspace.
	\end{align*}
	If we plug this equality into the definition of $h(r)$, we get
	\begin{align*}
	h(r)  
	={} & \frac{2r + 1 - \sqrt{4r^2 + 1}}{2} = \frac{1}{2} \cdot \left[   1 + 	\frac{4 \omega \zeta_{\Optset}}{\omega^2 - 4 \zeta_{\Optset}^2} -  \frac{\omega^2 +4 \zeta_{\Optset}^2}{\omega^2 - 4 \zeta_{\Optset}^2}   \right] = \frac{2 \omega \zeta_{\Optset} - 4\zeta_{\Optset}^2}{\omega^2 - 4 \zeta_{\Optset}^2}  = \frac{2 \zeta_{\Optset}}{\omega + 2 \zeta_{\Optset}}
	\enspace.
	\end{align*}
	We are now ready to plug the  calculated value of $r$ into Theorem~\ref{thm:result}, which yields that the output set $S'$ of the instance of \AlgThreshold initialized with this value of $r$ obeys
	\begin{align} \label{eq:S_prime}
	g(S') - \ell(S')
	\geq{} &
	(h(r) - \eps) \cdot g(\Optset) - r \cdot \ell(\Optset)\\ \nonumber
	={} &
	\frac{(h(r) - \epsilon) \cdot g(\Optset) - r \cdot \ell(\Optset)}{g(\Optset) - \ell(\Optset)}\cdot (g(\Optset) - \ell(\Optset) )\\ 	={} &
	\frac{(h(r) - \epsilon) \cdot (1 + \beta_{\Optset}) - r}{\beta_{\Optset}}\cdot (g(\Optset) - \ell(\Optset) )
	\enspace,
	\end{align}
	where the last equality follows from the definition of $\beta_{\Optset}$. Let us now lower bound the coefficient of $g(\Optset) - \ell(\Optset)$ in the rightmost hand side of the last equality.
	Recalling that $\beta_{\Optset} = \frac{4 \zeta_{\Optset}}{(1 - 2 \zeta_{\Optset})^2}$, we get $\frac{1 + \beta_{\Optset}}{\beta_{\Optset}} = \frac{1 + 4 \zeta_{\Optset}^2}{ 4 \zeta_{\Optset}}$. Thus, the above mentioned coefficient can be written as
	\begin{align*}
	\frac{(h(r) - \eps) \cdot (1 + \beta_{\Optset}) - r}{\beta_{\Optset}}
	={} &
	\frac{1 + \beta_{\Optset}}{\beta_{\Optset}} \cdot h(r) - \frac{r}{\beta_{\Optset}} - 	\frac{(1 + \beta_{\Optset}) \cdot \epsilon}{\beta_{\Optset}} \\
	={} &
	\frac{1 + 4 \zeta_{\Optset}^2}{ 4 \zeta_{\Optset}} \cdot \frac{2 \zeta_{\Optset}}{\omega + 2 \zeta_{\Optset}} - \frac{\omega \cdot (1 - 2 \zeta_{\Optset})^2}{2 \cdot (\omega^2 - 4 \zeta_{\Optset}^2)} -  \frac{(1 + 4 \zeta_{\Optset}^2) \cdot \epsilon}{ 4 \zeta_{\Optset}}  \\
	={} &  \frac{2 \omega \zeta_{\Optset} - \zeta_{\Optset} - 4 \zeta_{\Optset}^3}{\omega^2 - 4 \zeta_{\Optset}^2}  -  \frac{(1 + 4 \zeta_{\Optset}^2) \cdot \epsilon}{ 4 \zeta_{\Optset}}  \\
	={} & \left(1 - \frac{(\omega - 1)^2}{\omega^2 - 4 \zeta_{\Optset}^2}\right) \cdot \zeta_{\Optset} -  \frac{(1 + 4 \zeta_{\Optset}^2) \cdot \epsilon}{ 4 \zeta_{\Optset}} \enspace.
	\end{align*}
	We can observe that the coefficient of $\zeta_{\Optset}$ on the rightmost side is a decreasing function of $\omega$ for $\omega \geq 4\zeta_{\Optset}^2$. Together with the facts that $\zeta_{\Optset} < 1/2$ and $\omega \geq 1$, this implies
	\begin{align*}
	\frac{(h(r) - \eps) \cdot (1 + \beta_{\Optset}) - r}{\beta_{\Optset}}
	\geq{} &  \left(1 - \frac{\delta^2}{(1 + \delta)^2 - 4 \zeta_{\Optset}^2}\right) \cdot \zeta_{\Optset} - \frac{\epsilon}{2 \zeta_{\Optset}}\\
	\geq{} &
	\left(1 - \frac{\delta^2}{2\delta}\right) \cdot \zeta_{\Optset} - \frac{\epsilon}{2 \zeta_{\Optset}}
	=
	\left(1 - \frac{\delta}{2}\right) \cdot \zeta_{\Optset} - \frac{\epsilon}{2 \zeta_{\Optset}}
	\enspace.
	\end{align*}
	Plugging this inequality into \cref{eq:S_prime}, we get that the set $S'$ produced by \AlgThreshold for the above value of $r$ has at least the value guaranteed by the lemma for the output set $S$ of \AlgStream. The lemma now follows since the set $S$ is chosen as the best set among multiple options including $S'$.
\end{proof}

\subsection{Proof of Theorem~\ref{thm:result}} \label{sec:thm_proof}

The proof of Theorem~\ref{thm:result} is based on handling two cases. The first case is when the size of the solution $S$ of Algorithm~\ref{alg:threshold-first} reaches the maximum possible size $k$. In this case, we prove the approximation ratio of \cref{alg:threshold-first} by noting that $S$ contains many elements in this case, and each one of these elements had a large marginal contribution upon arrival due to the condition used by Algorithm~\ref{alg:threshold-first} to decide whether to add an arriving element.
The second case is when $S$ does not reach its maximum possible size $k$. In this case, we know, by the submodularity of $g$, that every element of $\Optset$ that was not added to $S$ has a small marginal contribution with respect to the final set $S$. This allows us to upper bound the additional value that could have been contributed by these elements.

In the rest of this section, we provide detailed proof of Theorem~\ref{thm:result}. Let $T$ be the subset of $\cN$ of size at most $k$ maximizing $(h(r) - \eps) \cdot g(T) - r \cdot \ell(T)$ among all such subsets. If $h(r) \cdot g(T) - r \cdot \ell(T) \leq 0$, then the empty set is a solution set obeying all the requirements of Theorem~\ref{thm:result}. Thus, in the rest of this section, we assume $h(r) \cdot g(T) - r \cdot \ell(T) > 0$, which implies that the value $\tau$ guessed by \cref{alg:threshold-first} is positive.

The following two lemmata prove together that Algorithm~\ref{alg:threshold-first} has the approximation guarantee of Theorem~\ref{thm:result}. The first of these lemmata handles the case in which the size of the solution $S$ of Algorithm~\ref{alg:threshold-first} reaches the maximum possible size $k$. 
In the proofs of both lemmata $u_i$ denotes the $i$-th element added to $S$ by Algorithm~\ref{alg:threshold-first}. 

\newcommand{\LemmaFull}{If $|S| = k$, then $g(S) - \ell(S) \geq (h(r) - \eps) \cdot g(T) - r \cdot \ell(T)$.}
\begin{lemma} \label{lem:full}
	\LemmaFull
\end{lemma}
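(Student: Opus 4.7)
The plan is to exploit the selection rule directly: each element $u_i$ added to $S$ satisfied $g(u_i \mid S_{i-1}) - \alpha(r)\cdot \ell(\{u_i\}) \geq \tau$ at the moment of its insertion, where $S_{i-1}=\{u_1,\dots,u_{i-1}\}$. Summing this inequality over $i=1,\dots,k$, the marginal terms telescope by definition of a set function, and the modular $\ell$ just sums, yielding
\begin{equation*}
g(S) - \alpha(r)\cdot \ell(S) \;=\; \sum_{i=1}^{k}\bigl[g(u_i\mid S_{i-1}) - \alpha(r)\cdot \ell(\{u_i\})\bigr] \;\geq\; k\tau.
\end{equation*}
This is the only place where the hypothesis $|S|=k$ is used: it guarantees that the sum runs over exactly $k$ threshold events.

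Next I would use a one-line numerical fact to replace $\alpha(r)\cdot\ell(S)$ with $\ell(S)$. From the definition $\alpha(r)=\tfrac{2r+1+\sqrt{4r^2+1}}{2}$ one sees $\alpha(r)\geq 1$ for all $r>0$, so $g(S)-\ell(S)\geq g(S)-\alpha(r)\cdot\ell(S)\geq k\tau$. By the guess assumed in Step~1 of \AlgThreshold we also have $k\tau \geq \frac{h(r)\cdot g(T)-r\cdot \ell(T)}{1+\eps}$.

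It then remains to show
\begin{equation*}
\frac{h(r)\cdot g(T)-r\cdot \ell(T)}{1+\eps} \;\geq\; (h(r)-\eps)\cdot g(T) - r\cdot \ell(T),
\end{equation*}
which, after clearing the denominator and simplifying, reduces to $\eps\cdot\bigl(1-h(r)+\eps\bigr)\cdot g(T) \geq -\eps r\cdot\ell(T)$. The left side is non-negative once one observes the elementary bound $h(r)\leq 1$ (equivalent to $2r-1\leq \sqrt{4r^2+1}$, which is immediate by squaring when $r>1/2$ and trivial otherwise), while the right side is non-positive; so the inequality holds, closing the chain $g(S)-\ell(S)\geq (h(r)-\eps)\cdot g(T)-r\cdot\ell(T)$.

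The argument is almost mechanical; the only thing one has to be a bit careful about is the very last numerical step, where the small linear-in-$\eps$ slack introduced by the guess of $\tau$ must be absorbed into the $(h(r)-\eps)\cdot g(T)$ term. The calculation $\alpha(r)\geq 1\geq h(r)$ is what makes this possible, and it is the only nontrivial use of the specific functional forms of $\alpha$ and $h$ in this lemma. Everything else is just the standard telescoping argument for greedy algorithms combined with the threshold selection rule.
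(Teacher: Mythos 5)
Your proposal is correct and follows essentially the same route as the paper's proof: the telescoping sum over the $k$ threshold events gives $g(S)-\alpha(r)\cdot\ell(S)\geq k\tau$, the bound $\alpha(r)\geq 1$ removes the distortion factor, and the $(1+\eps)$ slack from the guess of $\tau$ is absorbed using $h(r)\leq 1$ together with the non-negativity of $g$ and $\ell$ (the paper proves the slightly stronger $h(r)\leq 1/2$, but only $h(r)\leq 1$ is needed, exactly as you use it). The only cosmetic difference is that you clear the denominator $1+\eps$ directly, whereas the paper first passes through $\tfrac{1}{1+\eps}\geq 1-\eps$; the two computations are equivalent.
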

\begin{proof}
	Observe that
	\begin{align*}
	g(S) - \alpha(r) \cdot \ell(S)
	={} &
	\sum_{i = 1}^k [g(u_i \mid \{u_1, u_2, \dotsc, u_{i - 1}\}) - \alpha(r) \cdot \ell(\{u\})]
	\geq
	k\tau\\
	\geq{} &
	\frac{h(r) \cdot g(T) - r \cdot \ell(T)}{1 + \eps}
	\geq
	(1 - \eps) \cdot h(r) \cdot g(T) - r \cdot \ell(T)
	\enspace,
	\end{align*}
	where the first inequality holds since Algorithm~\ref{alg:threshold-first} chose to add $u_i$ to $S$, and the set $S$ at that time was equal to $\{u_1, u_2, \dotsc, u_{i - 1}\}$.

	We now make two observations. First, we observe that
	\[
	\alpha(r)
	=
	\frac{2r + 1 + \sqrt{4r^2 + 1}}{2}
	\geq
	\frac{1 + \sqrt{1}}{2}
	=
	1
	\enspace,
	\]
	and second, we observe that $h(r) \leq 1/2$ because
	\[
	h(r) \leq 1/2
	\iff
	\frac{2r + 1 - \sqrt{4r^2 + 1}}{2} \leq 1/2
	\iff
	2r \leq \sqrt{4r^2 + 1}
	\iff
	4r^2 \leq 4r^2 + 1
	\enspace.
	\]
	Using, these observations and the above inequality, we now get
	\[
	g(S) - \ell(S)
	\geq
	g(S) - \alpha(r) \cdot \ell(S)
	\geq
	(1 - \eps) \cdot h(r) \cdot g(T) - r \cdot \ell(T)
	\geq
	(h(r) - \eps) \cdot g(T) - r \cdot \ell(T)
	\enspace.
	\qedhere
	\]
\end{proof}

The following lemma proves the approximation ratio of Algorithm~\ref{alg:threshold-first} for the case in which the solution set $S$ does not reach its maximum allowed size $k$ before the stream ends.

\newcommand{\LemmaNotFull}{If $|S| < k$, then $g(S) - \ell(S) \geq (h(r) - \eps) \cdot g(T) - r \cdot \ell(T)$.}
\begin{lemma}  \label{lemma:not-full}
	\LemmaNotFull
\end{lemma}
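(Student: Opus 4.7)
The plan is to exploit two complementary facts about the final set $S$: (i) every element of $T \setminus S$ was rejected upon arrival, so its marginal gain with respect to $S$ is small, and (ii) every element that \emph{was} added contributed surplus at least $\tau$ above the $\alpha(r)$-penalty. A convex combination of the inequalities arising from (i) and (ii), with coefficients chosen using the algebraic identities relating $h(r)$ and $\alpha(r)$, will give the claimed bound.

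First, I would make precise what it means that every $u \in T \setminus S$ was rejected. Because $|S| < k$ at termination, the algorithm examined each such $u$ and the threshold test failed. Letting $S_u$ denote the value of $S$ at the time $u$ was considered, we have $g(u \mid S_u) - \alpha(r) \cdot \ell(\{u\}) < \tau$; since $S_u \subseteq S$, submodularity yields $g(u \mid S) \leq g(u \mid S_u)$, hence $g(u \mid S) < \tau + \alpha(r) \cdot \ell(\{u\})$ for every $u \in T \setminus S$. Summing over $T \setminus S$, using submodularity and monotonicity of $g$ to bound $g(T) - g(S) \leq g(T \cup S) - g(S) \leq \sum_{u \in T \setminus S} g(u \mid S)$, and using $|T \setminus S| \leq k$ together with non-negativity of $\ell$, I obtain
\[
g(T) - g(S) \;<\; k\tau + \alpha(r)\cdot \ell(T).
\]
This is the ``coverage'' inequality.

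Second, because each $u_i$ satisfied the threshold test when inserted, telescoping yields $g(S) - \alpha(r) \cdot \ell(S) = \sum_{i=1}^{|S|}\big[g(u_i \mid \{u_1,\dots,u_{i-1}\}) - \alpha(r)\cdot \ell(\{u_i\})\big] \geq |S|\tau \geq 0$. This is the ``surplus'' inequality.

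Finally, I would take $1/\alpha(r)$ times the surplus inequality plus $(1 - 1/\alpha(r))$ times the coverage inequality. The left-hand side collapses to $g(S) - \ell(S)$, while on the right-hand side I substitute the guess bound $k\tau \leq h(r) \cdot g(T) - r \cdot \ell(T)$ to get
\[
g(S) - \ell(S) \;\geq\; \frac{\alpha(r) - 1}{\alpha(r)}\big[(1 - h(r))\, g(T) - (\alpha(r) - r)\, \ell(T)\big].
\]
The main step is then verifying the two identities
\[
\frac{(\alpha(r)-1)(1-h(r))}{\alpha(r)} = h(r), \qquad \frac{(\alpha(r)-1)(\alpha(r)-r)}{\alpha(r)} = r,
\]
both of which reduce to $\alpha(r)^2 - (2r+1)\alpha(r) + r = 0$; this is the characteristic equation whose roots are precisely $\alpha(r)$ and $h(r)$, since $\alpha(r) + h(r) = 2r+1$ and $\alpha(r) \cdot h(r) = r$. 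Substituting these identities converts the right-hand side to exactly $h(r)\cdot g(T) - r \cdot \ell(T)$, which is at least $(h(r) - \eps)\cdot g(T) - r\cdot \ell(T)$, completing the proof.

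The only delicate point is choosing the correct convex combination so that the $\ell(S)$ term in the surplus inequality exactly cancels the required $\ell(S)$ coefficient on the left-hand side; the weight $1/\alpha(r)$ is what forces this, and from there the algebraic identities above do all the remaining work. No further obstacle is anticipated.
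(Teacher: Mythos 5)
Your proposal is correct and follows essentially the same route as the paper's proof: the same rejection/submodularity bound summed over $T \setminus S$, the same telescoping surplus inequality $g(S) - \alpha(r)\ell(S) \geq |S|\tau \geq 0$, and the same convex combination with weights $1/\alpha(r)$ and $1 - 1/\alpha(r)$. The only (pleasant) difference is cosmetic: you verify the two closing identities via the Vieta relations $\alpha(r) + h(r) = 2r+1$ and $\alpha(r)\cdot h(r) = r$, whereas the paper grinds through the explicit radicals; just note that the step $|S|\tau \geq 0$ relies on $\tau > 0$, which is established in the paper's preamble to the proof of Theorem~\ref{thm:result}.
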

\begin{proof}
	Consider an arbitrary element $u \in \Optset \setminus S$. Since $|S| < k$, the fact that $u$ was not added to $S$ implies
	\[
	g(u \mid S') - \alpha(r) \cdot \ell(\{u\}) < \tau
	\enspace,
	\]
	where $S'$ is the set $S$ at the time in which $u$ arrived. By the submodularity of $g$, we also get
	\[
	g(u \mid S) - \alpha(r) \cdot \ell(\{u\}) < \tau
	\enspace.
	\]
	
	Adding the last inequality over all elements $u \in T \setminus S$ implies
	\begin{align*}
	g(T) - g(S) - \alpha(r) \cdot \ell(T)
	\leq{} &
	g(T \mid S) - \alpha(r) \cdot \ell(T)
	\leq
	\sum_{u \in T \setminus S} [g(u \mid S) - \alpha(r) \cdot \ell(\{u\})]\\
	<{} &
	k\tau
	\leq
	h(r) \cdot g(T) - r \cdot \ell(T)
	\enspace,
	\end{align*}
	where the first inequality follows from the monotonicity of $g$, and the second inequality holds due to the submodularity of $g$ and the non-negativity of $\ell$. Rearranging this inequality yields
	\begin{equation} \label{eq:over_OPT_bound}
	(1 - h(r)) \cdot g(T) + (r - \alpha(r)) \cdot \ell(T)
	<
	g(S)
	\enspace.
	\end{equation}
	Recall that $\tau > 0$. Thus, using the same argument used in the proof of Lemma~\ref{lem:full}, we get
	\[
	g(S) - \alpha(r) \cdot \ell(S)
	=
	\sum_{i = 1}^{|S|} [g(u_i \mid \{u_1, u_2, \dotsc, u_{i - 1}\}) - \alpha(r) \cdot \ell(\{u\})]
	\geq
	|S|\tau
	\geq
	0
	\enspace.
	\]
	Adding a $1/\alpha(r)$ fraction of this equation to a $1 - 1 / \alpha(r)$ fraction of Equation~\eqref{eq:over_OPT_bound} yields
	\[
	g(S) - \ell(S)
	>
	(1 - 1/\alpha(r))(1 - h(r)) \cdot g(T) + (1 - 1/\alpha(r))(r - \alpha(r)) \cdot \ell(T)
	\enspace.
	\]
	The following two calculations now complete the proof of the lemma (since $\eps \cdot g(T)$ is non-negative).
	\begin{align*}
	(1 - 1/\alpha(r))(1 - h(r))
	={} &
	\left(1 - \frac{2}{2r + 1 + \sqrt{4r^2 + 1}}\right)\left(1 - \frac{2r + 1 - \sqrt{4r^2 + 1}}{2}\right)\\
	={} &
	\frac{2r - 1 + \sqrt{4r^2 + 1}}{2r + 1 + \sqrt{4r^2 + 1}} \cdot \frac{1 - 2r + \sqrt{4r^2 + 1}}{2}
	=
	\frac{4r^2 + 1 - (2r - 1)^2}{2(2r + 1 + \sqrt{4r^2 + 1})}\\
	={} &
	\frac{(2r + 1)^2 - (4r^2 + 1)}{2(2r + 1 + \sqrt{4r^2 + 1})}
	=
	\frac{2r + 1 + \sqrt{4r^2 + 1}}{2r + 1 + \sqrt{4r^2 + 1}} \cdot \frac{2r + 1 - \sqrt{4r^2 + 1}}{2}
	=
	h(r)
	\enspace,
	\end{align*}
	and
	\begin{align*}
	(1 - 1/\alpha(r))(r - \alpha(r))&
	=
	\left(1 - \frac{2}{2r + 1 + \sqrt{4r^2 + 1}}\right)\left(r - \frac{2r + 1 + \sqrt{4r^2 + 1}}{2}\right)\\
	={} &
	- \frac{2r - 1 + \sqrt{4r^2 + 1}}{2r + 1 + \sqrt{4r^2 + 1}} \cdot \frac{1 + \sqrt{4r^2 + 1}}{2}
	=
	- \frac{2r - 1 + (4r^2 + 1) + 2r \cdot \sqrt{4r^2 + 1}}{2[2r + 1 + \sqrt{4r^2 + 1}]}\\
	={} &
	- \frac{2r \cdot [1 + 2r + \sqrt{4r^2 + 1}]}{2[2r + 1 + \sqrt{4r^2 + 1}]}
	=
	-r
	\enspace.
	\qedhere
	\end{align*}
\end{proof}

	\section{Distributed Algorithm} \label{sec:distributed}
The exponential growth of data makes it difficult to process or even store the entire data on a single machine.
For this reason, there is an urgent need to develop distributed or parallel computing methods to process massive datasets.
Distributed algorithms in a Map-Reduce model have shown promising results in several problems related to submodular maximization \cite{mirzasoleiman2013distributed,mirrokni2015randomized,barbosa2015power,barbosa2016new,mitrovic2018data,kazemi2018scalable}. 
In this section we present a distributed solution for Problem~\eqref{eq:problem}, named \AlgDistributed, which appears as \cref{alg:distributed}.
Our algorithm uses \AlgDistorted proposed by \citet{harshaw2019submodular} as a subroutine.

Out distributed solution is based on the framework suggested by \citet{barbosa2016new} for converting greedy-like sequential algorithms into a distributed algorithm. However, its analysis is based on a generalization of ideas from \cite{barbosa2015power} rather than being a direct adaptation of the analysis given by~\cite{barbosa2016new}. This allows us to get an algorithm which uses asymptotically the same number of computational rounds as the algorithm of~\cite{barbosa2016new}, but does not require to keep multiple copies of the data as is necessary for the last algorithm.\footnote{Our technique can be used to get the same improvement for the setting of~\cite{barbosa2016new}. However, as this is not the main subject of this paper, we omit the details.} We would also like  to point out that \citet{barbosa2016new} have proposed a variant of their algorithm that avoids data replication, but it does so at the cost of increasing the number of rounds from $\Theta(1/\epsilon)$ to $\Theta(1/\epsilon^2)$.

\AlgDistributed is a distributed algorithm within a Map-Reduce framework using $\lceil 1/\eps \rceil$ rounds of computation, where $\eps \in (0, 1/2]$ is a parameter controlling the quality of the output produced by the algorithm---for simplicity, we assume that $1 / \eps$ is an integer from this point on. In the first round of computation, \AlgDistributed randomly distributes the elements among $m$ machines by independently sending each element $u \in \cN$ to a uniformly random machine. Each machine $i$ then runs \AlgDistorted on its data and forwards the resulting solution $S_{1, i}$ to all other machines (in general, we denote by $S_{r, i}$ the solution calculated by machine $i$ in round $r$). The next rounds repeat this operation, except that the data of each machine now includes both: (i) elements sent to this machine during the random partition and (ii) the elements that belong to any solutions calculated (by any machine) during the previous rounds. At the end of the last round, machine number $1$ outputs the final solution, which is the best solution among the solution computed by this machine in the last round and the solutions computed by all the machines in the previous rounds.
Theorem~\ref{thm:distributed} analyzes the approximation guarantee of \AlgDistributed.
In the rest of this section, we prove Theorem~\ref{thm:distributed}.

\begin{algorithm2e}[htb!]
\DontPrintSemicolon
	\caption{\AlgDistributed} \label{alg:distributed}
	\For{$r = 1$ \KwTo $\lceil \eps^{-1} \rceil$}
	{
		\For{each $u \in \cN$}
		{
			Assign element $u$ to a machine chosen uniformly at random (and independently) among $m$ machines.\\
			Let $\cN_{r, i}$ denote the elements assigned to machine $i$ in this round.
		}
		\For{$i = 1$ \KwTo $m$}{Run \AlgDistorted on the set $\cN_{r, i} \cup (\cup_{r' = 1}^{r - 1} \cup_{i' = 1}^m S_{r', i'})$ to get the solution $S_{r, i}$ of size at most $k$
		.}
		\lIf{$r < \eps^{-1}$}{Forward the solutions $S_{r, i}$, for every integer $1 \leq i \leq m$, to all the machines.}
		\lElse{\Return{a set $D$ maximizing $g(D) - \ell(D)$ among all sets in $\{S_{r, 1}\} \cup \{S_{r', i'} \mid 1 \leq r' < r \text{ and } 1 \leq i' \leq m\}$}.}
	}
\end{algorithm2e}

\paragraph{Proof of Theorem~\ref{thm:distributed}.} 
We define the submodular function $f(S) \triangleq g(S) - \ell(S)$.  It is easy to see that $f$ is a submodular function (although it is not guaranteed to be either monotone or non-negative). The Lov\'asz extension of $f$ is the function $\lov{f}\colon [0,1]^{\cN} \to \bR$ given by
\begin{align*}
	\lov{f}(\mathbf{x})=\underset{\theta \in \mathcal{U}(0,1)}{\mathbb{E}}\left[f\left(\left\{i: x_{i} \geq \theta\right\}\right)\right] \enspace,
\end{align*} 
where $\cU(0, 1)$ is the uniform distribution within the range $[0, 1]$~\cite{lovasz1983submodular}. Note that the Lov\'asz extension of a modular set function is the natural linear extension of the function. It was also proved in~\cite{lovasz1983submodular} that the Lov\'{a}sz extension of a submodular function is convex. Finally, we need the following well-known properties of Lov\'asz extensions, which follow easily from its definition.
\begin{observation} \label{obs:basic_properties}
	For every set $S \subseteq \cN$, $\lov{f}(\mathbf{1}_S) = f(S)$. Additionally, $\lov{f}(c \cdot \mathbf{p}) \geq c \cdot \lov{f}(\mathbf{p})$ for every $c \in[0,1]$ and $\mathbf{p} \in [0, 1]^\cN$ whenever $f(\varnothing)$ is non-negative.
\end{observation}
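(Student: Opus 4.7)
The plan is to verify both claims directly from the definition
\[
\lov{f}(\mathbf{x}) = \underset{\theta \sim \cU(0,1)}{\bE}[f(\{i : x_i \geq \theta\})],
\]
treating them as two independent elementary calculations.

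For the first claim, I would substitute $\mathbf{x} = \mathbf{1}_S$. Then $(\mathbf{1}_S)_i = 1$ for $i \in S$ and $0$ otherwise, so for every $\theta \in (0, 1]$ the level set $\{i : (\mathbf{1}_S)_i \geq \theta\}$ is exactly $S$. The only exceptional value is $\theta = 0$, where the level set equals $\cN$, but this point has Lebesgue measure zero under $\cU(0,1)$. Hence the expectation collapses to $f(S)$, giving $\lov{f}(\mathbf{1}_S) = f(S)$.

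For the second claim, the natural approach is a change of variables. Starting from $\lov{f}(c\mathbf{p}) = \int_0^1 f(\{i : c p_i \geq \theta\}) \, d\theta$, I would substitute $\theta = c \theta'$ to obtain $c \int_0^{1/c} f(\{i : p_i \geq \theta'\}) \, d\theta'$. Since $c \in [0,1]$ yields $1/c \geq 1$, the integral splits as
\[
\lov{f}(c\mathbf{p}) = c \int_0^1 f(\{i : p_i \geq \theta'\}) \, d\theta' + c \int_1^{1/c} f(\{i : p_i \geq \theta'\}) \, d\theta'.
\]
The first summand is exactly $c \cdot \lov{f}(\mathbf{p})$. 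In the second, every $\theta' > 1$ satisfies $\theta' > p_i$ for all $i$ (since $\mathbf{p} \in [0,1]^\cN$), so the level set is empty and the integrand equals $f(\varnothing)$. This evaluates to $c \cdot (1/c - 1) \cdot f(\varnothing) = (1 - c) \cdot f(\varnothing)$. Combining, I get the identity $\lov{f}(c\mathbf{p}) = c \cdot \lov{f}(\mathbf{p}) + (1 - c) \cdot f(\varnothing)$, and the assumption $f(\varnothing) \geq 0$ together with $c \leq 1$ immediately yields the desired inequality.

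I do not anticipate any real obstacle: both properties follow by unfolding the definition with care about the substitution bounds. The only minor caveat worth explicitly flagging is the degenerate case $c = 0$, which can be handled either by a direct check (both sides reduce to statements about $f(\varnothing)$) or by taking a limit as $c \to 0^+$ in the identity above.
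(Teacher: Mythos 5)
Your proof is correct, and it matches the paper's intent: the paper states this observation without proof, remarking only that both properties ``follow easily from its definition,'' and your direct unfolding of the definition---with the change of variables $\theta = c\theta'$ and the observation that the leftover interval contributes $(1-c)\cdot f(\varnothing) \geq 0$---is exactly the standard argument being alluded to. No gaps; the measure-zero caveats and the $c=0$ case are handled appropriately.
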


Let us denote by $\AlgDistorted(A)$ the set produced by $\AlgDistorted$ when it is given the elements of a set $A \subseteq \cN$ as input. Using this notation, we can now state the following lemma. We omit the simple proof of this lemma, but note that it is similar to the proof of \cite[Lemma~2]{barbosa2015power}.
\begin{lemma} \label{lemma:lemma:dg-union}
	Let $A \subseteq \cN$ and $B \subseteq \cN$ be two disjoint subsets of $\cN$. Suppose that, for each element
	$u \in B$, we have $\AlgDistorted(A \cup \{u\}) = \AlgDistorted(A)$. Then, $\AlgDistorted(A \cup B) = \AlgDistorted(A)$.
\end{lemma}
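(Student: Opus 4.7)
The plan is to prove the lemma by induction on the iterations of \AlgDistorted, tracking not just the final output but the entire sequence of partial solutions $S_0 \subseteq S_1 \subseteq \cdots \subseteq S_k$ produced by the algorithm. Recall that at iteration $i$, \AlgDistorted selects $e_i \in \argmax_{v \in \text{input}} \Phi_i(v \mid S_i)$ for a distorted marginal score $\Phi_i$ that depends only on $v$ and the current partial solution $S_i$ (not on the rest of the input), and then adds $e_i$ to $S_i$ only when $\Phi_i(e_i \mid S_i)$ exceeds the algorithm's positivity threshold. I assume throughout that a fixed deterministic tie-breaking rule (for instance, a total order on $\cN$) is shared by all runs considered: the run on $A$, the runs on $A \cup \{u\}$ for each $u \in B$, and the run on $A \cup B$.

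I would first establish an auxiliary claim: if $\AlgDistorted(A \cup \{u\}) = \AlgDistorted(A)$ as sets, then the full sequences of partial solutions in these two runs coincide. By induction on $i$, suppose $S_i$ is common to both. The argmax on $A \cup \{u\}$ at iteration $i$ is either some element of $A$ (in which case tie-breaking together with $A \subseteq A \cup \{u\}$ forces it to equal the argmax $e_i^A$ taken over $A$), or $u$ itself. In the latter case, $\Phi_i(u \mid S_i) \geq \Phi_i(e_i^A \mid S_i)$; if $\Phi_i(u \mid S_i) > 0$ then $u$ is added and appears in $\AlgDistorted(A \cup \{u\})$, contradicting the hypothesis, so $\Phi_i(u \mid S_i) \leq 0$ and $u$ is not added, which also forces $\Phi_i(e_i^A \mid S_i) \leq 0$, so the $A$-run adds nothing either. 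Either way $S_{i+1}$ agrees.

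With the auxiliary claim in hand, the main induction is straightforward. Suppose the partial solutions of the runs on $A$ and on $A \cup B$ agree through iteration $i$. The argmax on $A \cup B$ is either an element of $A$ (by tie-breaking and the superset argument, necessarily $e_i^A$, so both runs pick the same element and make the same add-or-skip decision) or some $u \in B$. In the latter case, by the auxiliary claim the run on $A \cup \{u\}$ also picks $u$ at iteration $i$, so $\Phi_i(u \mid S_i) \leq 0$; thus nothing is added in the $A \cup B$ run, and the $A$-run also adds nothing since $\Phi_i(e_i^A \mid S_i) \leq \Phi_i(u \mid S_i) \leq 0$. In both sub-cases $S_{i+1}$ agrees, closing the induction. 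Taking $i = k$ yields $\AlgDistorted(A \cup B) = \AlgDistorted(A)$.

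The main obstacle I foresee is the threshold bookkeeping. Unlike the analogous lemma for vanilla greedy, where every argmax is added and the argument reduces to checking that argmaxes coincide, here one must rule out the possibility that an argmax in $B$ has a positive distorted marginal (which would cause \AlgDistorted to add it and create a divergence). The auxiliary claim is the key lever: the hypothesis $\AlgDistorted(A \cup \{u\}) = \AlgDistorted(A)$ is strong enough to force every step in which $u$ wins the argmax to be a no-op. Aside from this and the standard care required to fix tie-breaking consistently across runs, the argument mirrors \cite[Lemma~2]{barbosa2015power}.
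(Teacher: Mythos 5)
Your argument is correct and is exactly the induction the paper has in mind: the paper omits the proof, noting only that it parallels Lemma~2 of Barbosa et al., and your write-up supplies the missing details, including the one genuinely new point for \AlgDistorted---that whenever an element of $B$ wins the argmax, the hypothesis $\AlgDistorted(A \cup \{u\}) = \AlgDistorted(A) \subseteq A$ forces its distorted marginal to be non-positive, so the step is a no-op in all runs. The auxiliary claim (agreement of the full sequence of partial solutions, not just the outputs) and the fixed tie-breaking convention are both necessary and correctly handled.
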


We now need some additional notation. Let $S^*$ denote an optimal solution for Problem~\eqref{eq:problem}, and let $\mathcal{\cN}(1 / m)$ represent the distribution over random subsets of $\cN$ where each element is sampled independently with probability $1 / m$. To see why this distribution is important, recall that $\cN_{r, i}$ is the set of elements assigned to machine $i$ in round $i$ by the random partition, and that every element is assigned uniformly at random to one out of $m$ machines, which implies that the distribution of $\cN_{r, i}$ is identical to $\mathcal{\cN}(1 / m)$ for every two integers $1 \leq i \leq m$ and $1 \leq r \leq \eps^{-1}$. We now define for every integer $0 \leq r \leq \eps^{-1}$ the set $C_r = \cup_{r' = 1}^r \cup_{i = 1}^m S_{r', i}$ and the vector $\mathbf{p}^r \in [0,1]^{\cN}$ whose $u$-coordinate, for every $u \in \cN$, is given by
\[ p^r_u=
\begin{cases}
\Pr_{A \sim \cN(1/m)} [u \not \in C_{r - 1} \text{ and } u \in \AlgDistorted(A \cup C_{r - 1} \cup \{u\})] & \text{if } u \in S^* \enspace,  \\
0 & \text{otherwise} \enspace.
\end{cases}
\]
The next lemma proves an important property of the above vectors.
\begin{lemma} \label{lem:sum_p}
	For every element $u \in S^*$ and $0 \leq r \leq 1 / \eps$, $\Pr[u \in C_r] = \sum_{r' = 1}^r p^{r'}_u$.
\end{lemma}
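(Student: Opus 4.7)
The plan is to prove the claim by a telescoping argument. Since $C_{r'-1} \subseteq C_{r'}$ for every $r' \geq 1$ and $C_0 = \varnothing$, the event $\{u \in C_r\}$ decomposes as the disjoint union $\bigcup_{r'=1}^{r} \{u \in C_{r'} \setminus C_{r'-1}\}$, so
\[
\Pr[u \in C_r] = \sum_{r'=1}^{r} \Pr[u \in C_{r'} \setminus C_{r'-1}].
\]
It therefore suffices to establish that $\Pr[u \in C_{r'} \setminus C_{r'-1}] = p^{r'}_u$ for every $1 \leq r' \leq r$ (and the base case $r=0$ is immediate since the right-hand side is the empty sum).

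To do this I would fix $r'$ and unpack the definition of \AlgDistributed. In round $r'$ each element of $\cN$ is sent independently and uniformly at random to one of the $m$ machines; in particular $u$ is assigned to exactly one machine, call it $i(u)$, so $u \in \cN_{r',i}$ iff $i = i(u)$. Consequently $u$ can appear in $\bigcup_{i=1}^{m} S_{r',i}$ only through $S_{r', i(u)} = \AlgDistorted(\cN_{r',i(u)} \cup C_{r'-1})$, and we obtain
\[
\{u \in C_{r'} \setminus C_{r'-1}\} = \{u \notin C_{r'-1}\} \cap \{u \in \AlgDistorted(\cN_{r',i(u)} \cup C_{r'-1})\}.
\]

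The remaining step is a distributional identity. Conditional on the event $\{i(u) = i\}$, each element $v \neq u$ is placed in $\cN_{r',i}$ independently with probability $1/m$, so (by symmetry in $i$) $\cN_{r',i(u)}$ has the same distribution as $A \cup \{u\}$ for $A \sim \cN(1/m)$. Moreover, the random partition used in round $r'$ is independent of everything that occurred in rounds $1,\dots,r'-1$, and in particular of $C_{r'-1}$. Combining these observations yields
\[
\Pr[u \in C_{r'} \setminus C_{r'-1}] = \Pr_{A \sim \cN(1/m)}\!\left[u \notin C_{r'-1} \text{ and } u \in \AlgDistorted(A \cup C_{r'-1} \cup \{u\})\right] = p^{r'}_u,
\]
where on the right-hand side the outer probability is also averaging over the (independent) randomness of $C_{r'-1}$. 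Summing over $r' \in \{1,\dots,r\}$ gives the statement of the lemma.

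The main obstacle is not conceptual but bookkeeping: carefully stating the distributional equivalence between $\cN_{r', i(u)}$ and $A \cup \{u\}$ with $A \sim \cN(1/m)$, and justifying the independence of the round-$r'$ partition from $C_{r'-1}$. Once this is in place the conclusion follows directly, without needing to invoke Lemma~\ref{lemma:lemma:dg-union} (which presumably plays its role in subsequent steps of the overall analysis of Theorem~\ref{thm:distributed}).
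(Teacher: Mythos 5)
Your proposal is correct and follows essentially the same route as the paper's proof: telescope over the disjoint events $\{u \in C_{r'} \setminus C_{r'-1}\}$, reduce each to the machine $u$ is assigned to, and use the distributional identity between $\cN_{r',i}$ conditioned on $u \in \cN_{r',i}$ and $A \cup \{u\}$ with $A \sim \cN(1/m)$ (the paper writes this as the average $\frac{1}{m}\sum_{i=1}^m \Pr[\cdot \mid u \in \cN_{r',i}]$, which is the same conditioning). You are also right that Lemma~\ref{lemma:lemma:dg-union} is not needed here; the paper invokes it only later, in the proof of Lemma~\ref{lemma:bound-lovasz}.
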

\begin{proof}
	Since $u$ is assigned in round $r'$ to a single machine uniformly at random,
	\begin{align*}
		\mspace{100mu}&\mspace{-100mu}
		\Pr[u \in C_{r'} \setminus C_{r' - 1}]
		=
		\Pr[u \in \cup_{i = 1}^m S_{r', i} \setminus C_{r' - 1}]
		=
		\frac{1}{m} \sum_{i = 1}^m \Pr[u \in S_{r', i} \setminus C_{r' - 1} \mid u \in \cN_{r', i}]\\
		={} &
		\frac{1}{m} \sum_{i = 1}^m \Pr[u \not \in C_{r' - 1} \text{ and } u \in \AlgDistorted(\cN_{r', i} \cup (\cup_{r'' = 1}^{r' - 1} \cup_{i' = 1}^m S_{r'', i'})) \mid u \in \cN_{r', i}]\\
		={} &
		\frac{1}{m} \sum_{i = 1}^m \Pr\nolimits_{A \sim \cN(1/m)}[u \not \in C_{r' - 1} \text{ and } u \in \AlgDistorted(A \cup C_{r' - 1} \cup \{u\})]
		=
		p^{r'}_u
		\enspace,
	\end{align*}
	where the first equality holds since $C_{r'}$ can be obtained from $C_{r'- 1}$ by adding to the last set all the elements of $\cup_{i = 1}^m S_{r', i}$ that do not already belong to $C_{r' - 1}$, and the last equality holds since the distribution of $\cN_{r', i}$ conditioned on $u$ belonging to this set is equal to the distribution of $A \cup \{u\}$ when $A$ is distributed like $\cN(1/m)$.
	
	Since $C_1 \subseteq C_2 \subseteq \dotso \subseteq C_r$, the events $\Pr[u \in C_{r'} \setminus C_{r' - 1}]$ must be disjoint for different values of $r'$, which implies
	\[
	\sum_{r' = 1}^r p^{r'}_u
	=
	\sum_{r' = 1}^r \Pr[u \in C_{r'} \setminus C_{r' - 1}]
	=
	\Pr[u \in C_r] - \Pr[u \in C_0]
	=
	\Pr[u \in C_r]
	\enspace,
	\]
	where the last equality holds since $C_0 = \varnothing$ by definition.
\end{proof}

Using the last lemma, we can now prove lower bounds on the expected values of the sets $S_{r, i}$.
\begin{lemma} \label{lemma:bound-lovasz} 
	Let $\lov{g}$ and $\lov{\ell}$ be the Lov\'asz extensions of the functions $g$ and $\ell$, respectively. Then, for every two integers $1 \leq r \leq \eps^{-1}$ and $1 \leq i \leq m$,
	\begin{align*}
		\bE[f(S_{r, i})] \geq  (1 - e^{-1}) \cdot \lov{g}(\mathbf{1}_{S^*}-\mathbf{p}^r ) - \lov{\ell}(\mathbf{1}_{S^*}-\mathbf{p}^r ) \enspace,
	\end{align*}
	and
	\[
	\bE[f(S_{r, i})] \geq  (1 - e^{-1}) \cdot \lov{g}({\textstyle \sum_{r' = 1}^{r - 1} \mathbf{p}^{r'} }) - \lov{\ell}({\textstyle \sum_{r' = 1}^{r - 1}\mathbf{p}^{r'} })
	\enspace.
	\]
\end{lemma}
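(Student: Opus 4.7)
The plan is to adapt the randomization-based analysis of \citet{barbosa2015power} to the regularized setting, leveraging two facts: (i) \AlgDistorted achieves the guarantee $f(\AlgDistorted(X)) \geq (1 - e^{-1}) g(T) - \ell(T)$ for every $T \subseteq X$ with $|T| \leq k$, not just for the subproblem optimum, and (ii) the Lov\'asz extension of a submodular function is convex while that of a modular function is linear, so that Jensen's inequality applies tightly to the modular term.

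For the first inequality, I would fix a machine $i$ and round $r$ and, following \citet{barbosa2015power}, define the random set
\[
\hat{T}_{r, i} \triangleq (S^* \cap C_{r - 1}) \cup \{u \in S^* \setminus C_{r - 1} \colon u \notin \AlgDistorted(\cN_{r, i} \cup C_{r - 1} \cup \{u\})\} \enspace,
\]
whose size is at most $k$ because $\hat{T}_{r, i} \subseteq S^*$. Setting $A = \cN_{r, i} \cup C_{r - 1}$ and $B = \hat{T}_{r, i} \setminus A$, every element of $B$ satisfies the hypothesis of Lemma~\ref{lemma:lemma:dg-union} (an element that \AlgDistorted would not select when available can be added to or removed from the input without changing the output), so $\AlgDistorted(A \cup B) = S_{r, i}$. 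Since $\hat{T}_{r, i} \subseteq A \cup B$, the guarantee of \AlgDistorted yields the pointwise inequality $f(S_{r, i}) \geq (1 - e^{-1}) g(\hat{T}_{r, i}) - \ell(\hat{T}_{r, i})$. The key probabilistic step is then to show $\Pr[u \in \hat{T}_{r, i}] = 1 - p^r_u$ for every $u \in S^*$: this follows because $\cN_{r, i} \cup \{u\}$ has the same distribution as $A \cup \{u\}$ with $A \sim \cN(1/m)$ regardless of whether $u \in \cN_{r, i}$, which lets the probability be rewritten in the form used to define $p^r_u$. Hence $\bE[\mathbf{1}_{\hat{T}_{r, i}}] = \mathbf{1}_{S^*} - \mathbf{p}^r$, and taking expectations of the pointwise bound while invoking Jensen's inequality for the convex $\lov g$ and the exact equality $\bE[\ell(T)] = \lov\ell(\bE[\mathbf{1}_T])$ for the modular $\ell$ delivers the first inequality.

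For the second inequality, I would simply take $T = C_{r - 1} \cap S^*$, which is a subset of $C_{r - 1}$ and hence of the input of \AlgDistorted in round $r$ on machine $i$, and has size at most $k$. The guarantee of \AlgDistorted yields the pointwise inequality $f(S_{r, i}) \geq (1 - e^{-1}) g(C_{r - 1} \cap S^*) - \ell(C_{r - 1} \cap S^*)$, and by Lemma~\ref{lem:sum_p}, $\bE[\mathbf{1}_{C_{r - 1} \cap S^*}] = \sum_{r' = 1}^{r - 1} \mathbf{p}^{r'}$. The same Jensen/linearity argument then yields the bound.

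The main obstacle is in the first part: carefully verifying that the hypothesis of Lemma~\ref{lemma:lemma:dg-union} holds for every $u \in B$ (which relies on \AlgDistorted being ``greedy-like'' in the precise sense that an unselected element can be deleted from the input without affecting its trajectory), and rigorously justifying the distributional identity between $\cN_{r, i} \cup \{u\}$ and $A \cup \{u\}$ with $A \sim \cN(1/m)$ needed to identify $\Pr[u \in \hat{T}_{r, i}]$ with $1 - p^r_u$. Once these two ingredients are nailed down, the remainder is a routine combination of Jensen's inequality for $\lov g$ and linearity for $\lov\ell$.
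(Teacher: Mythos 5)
Your proposal is correct and follows essentially the same route as the paper's proof: your $\hat{T}_{r,i}$ coincides with the paper's choice $O_{r,i}=(C_{r-1}\cap S^*)\cup R$, and both parts use Lemma~\ref{lemma:lemma:dg-union}, the guarantee of \AlgDistorted against an arbitrary feasible subset of its input, the identity $\bE[\mathbf{1}_{O_{r,i}}]=\mathbf{1}_{S^*}-\mathbf{p}^r$ (resp.\ Lemma~\ref{lem:sum_p}), and convexity of $\lov{g}$ together with linearity of $\lov{\ell}$. The two points you flag as needing care (verifying the hypothesis of Lemma~\ref{lemma:lemma:dg-union} for unselected elements and the distributional identity for $\cN_{r,i}\cup\{u\}$) are exactly the steps the paper also treats briefly.
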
 
\begin{proof}
	Let $R = \{u \in S^* \mid u \not \in \AlgDistorted(\cN_{r, i} \cup C_{r - 1} \cup \{u\})\}$, and let $O_{r, i}$ be some random subset of $S^*$ to be specified later which includes only elements of $\cN_{r, i} \cup C_{r - 1} \cup R$. By Lemma~\ref{lemma:lemma:dg-union},
	\begin{align*}
		S_{r, i}
		={} &
		\AlgDistorted(\cN_{r, i} \cup (\cup_{r' = 1}^{r - 1} \cup_{i' = 1}^m S_{r', i'}))\\
		={} &
		\AlgDistorted(\cN_{r, i} \cup C_{r - 1})
		=
		\AlgDistorted(\cN_{r, i} \cup C_{r - 1} \cup R)
		\enspace.
	\end{align*}
	Due to this equality and the fact that $|O_{r, i}| \leq |S^*| \leq k$, the guarantee of \AlgDistorted~\cite[Theorem~3]{harshaw2019submodular} implies:
	\[f(S_{r, i}) = g(S_{r, i}) - \ell(S_{r, i}) \geq (1 - e^{-1}) \cdot g(O_{r, i}) - \ell(O_{r, i}) \enspace.\]
	Therefore,
	\begin{align} \label{eq:general_O}
		\bE[f(S_{r, i})] &
		\geq \bE[   (1 - e^{-1}) \cdot g(O_{r, i}) - \ell(O_{r, i}) ] 
		= (1 - e^{-1}) \cdot \bE[  g(O_{r, i}) ] - \bE[ \ell(O_{r, i}) ] \\ \nonumber
		& \geq (1 - e^{-1}) \cdot \lov{g}(\bE[\mathbf{1}_{O_{r, i}}]) - \lov{\ell}(\bE[ \mathbf{1}_{O_{r, i}}]) \enspace,
	\end{align}
	where the second inequality holds since $\lov{g}$ is convex and $\lov{\ell}$ is linear (see the discussion before Observation~\ref{obs:basic_properties}).
	
	To prove the first part of the lemma, we now choose
	\[
	O_{r, i}= (C_{r - 1} \cap S^*) \cup R = (C_{r - 1} \cap S^*) \cup \left\{u \in S^{*}: u \notin \AlgDistorted(\cN_{r, i} \cup C_{r - 1} \cup\{u\})\right\}
	\enspace.
	\]
	One can verify that this choice obeys our assumptions about $O_{r, i}$; and moreover, since the distribution of $\cN_{r, i}$ is the same as that of $\cN(1/m)$, we get:
	\[
	\Pr\left[u \in O_{r, i}\right] =1-\operatorname{Pr}\left[u \notin O_{r, i}\right]=1-p^r_{u} \quad \forall\; u\in S^* \qquad \text{and} \qquad
	\mathbb{E}\left[\mathbf{1}_{O_{r, i}}\right] =\mathbf{1}_{S^*}-\mathbf{p}^r \enspace.
	\]
	The first part of the lemma now follows by combining the last equality with Inequality~\eqref{eq:general_O}.
	
	To prove the second part of this lemma, we choose $O_{r, i} = C_{r - 1} \cap S^*$. One can verify that this choice again obeys our assumptions about $O_{r, i}$; and moreover, by Lemma~\ref{lem:sum_p}, $\bE[\mathbf{1}_{O_{r, i}}] = \sum_{r' = 1}^{r - 1} \mathbf{p}^{r'}$. The second part of the lemma now follows by combining this equality with Inequality~\eqref{eq:general_O}.
\end{proof}

We are now ready to prove Theorem~\ref{thm:distributed}.
\begin{proof}[Proof of Theorem~\ref{thm:distributed}]
	Let $D$ be the output set of \cref{alg:distributed}. The definition of $D$ and Lemma~\ref{lemma:bound-lovasz}  together guarantee that for every $1 \leq r \leq \eps^{-1} - 1$ we have
	\[
	\bE[f(D)]  \geq \bE[f(S_{r, 1})] \geq  (1 - e^{-1}) \cdot \lov{g}(\mathbf{1}_{S^*}-\mathbf{p}^r ) - \lov{\ell}(\mathbf{1}_{S^*}-\mathbf{p}^r )
	\enspace,
	\]
	and additionally, 
	\[
	\bE[f(D)]  \geq \bE[f(S_{1/\eps, 1})] \geq  (1 - e^{-1}) \cdot \lov{g}({\textstyle \sum_{r = 1}^{1/\eps - 1} \mathbf{p}^r }) - \lov{\ell}({\textstyle \sum_{r = 1}^{1/\eps - 1} \mathbf{p}^r})
	\enspace.
	\]
	Therefore,
	\begin{align*}
		\bE[f(D)]
		\geq{} &
		\eps \cdot \mspace{-3mu} \sum_{r = 1}^{1/\eps - 1} [(1 - e^{-1}) \cdot \lov{g}(\mathbf{1}_{S^*}-\mathbf{p}^r ) - \lov{\ell}(\mathbf{1}_{S^*}-\mathbf{p}^r )] + \eps[(1 - e^{-1}) \cdot \lov{g}({\textstyle \sum_{r = 1}^{1/\eps - 1} \mathbf{p}^r }) - \lov{\ell}({\textstyle \sum_{r = 1}^{1/\eps - 1} \mathbf{p}^r})]\\
		\geq{} &
		(1 - e^{-1}) \cdot \lov{g}\left(\eps \cdot \sum_{r = 1}^{1/\eps - 1}(\mathbf{1}_{S^*}-\mathbf{p}^r) + \eps \cdot \sum_{r = 1}^{1/\eps - 1} \mathbf{p}^r\right) - \lov{\ell}\left(\eps \cdot \sum_{r = 1}^{1/\eps - 1}(\mathbf{1}_{S^*}-\mathbf{p}^r) + \eps \cdot \sum_{r = 1}^{1/\eps - 1} \mathbf{p}^r\right)\\
		={} &
		(1 - e^{-1}) \cdot \lov{g}\left((1 - \eps) \cdot \mathbf{1}_{S^*}\right) - \lov{\ell}\left((1 - \eps) \cdot \mathbf{1}_{S^*}\right)
		\geq
		(1 - \eps) \cdot \left[(1 - e^{-1}) \cdot g(S^*) -\ell(S^*)\right]
		\enspace,
	\end{align*}
	where the second inequality holds since $\lov{\ell}$ is linear and $\lov{g}$ is convex, and the last inequality follows again from the linearity of $\lov{\ell}$ and Observation~\ref{obs:basic_properties} because $f(\varnothing) = g(\varnothing) - \ell(\varnothing) = g(\varnothing) \geq 0$.
\end{proof}

	\section{Mode Finding of SLC Distributions} \label{sec:mode_finding}
In \cref{sec:intro}, we discussed the power of sampling from discrete probabilistic models (specifically SLC distributions), which encourages negative correlation, for data summarization. In this regard, recently, \citet{robinson2019flexible}  established a notion of $\gamma$-additively weak submodularity for SLC functions. By using this newly defined property, we guarantee the performance of our proposed algorithms for mode finding of a class of distributions that are derived from SLC functions. Following is the definition of $\gamma$-additively weak submodular functions.

\begin{definition}[Definition 1, \cite{robinson2019flexible}]
A set function $\rho\colon 2^{\cN} \rightarrow \bR$ is $\gamma$-additively weak submodular if for any $S \subseteq \cN$ and
$u, v \in  \cN \setminus S$ with $u \neq v$, we have
\begin{align*}
	 \rho(S)+\rho(S \cup\{u, v\}) \leq \gamma+\rho(S \cup \{u\})+\rho(S \cup \{v\}) \enspace.
\end{align*}
\end{definition}

In order to maximize a $\gamma$-additively weak submodular function $\rho$, we show that, with a little modification, $\rho$ can be converted to a submodular function $\Lambda$. We then show that $\Lambda$ in its own turn can be rewritten as the difference between a non-negative monotone submodular function and a modular function. 
 Towards this goal, we need to find a submodular function $\Lambda$ that is close to $\rho$, which is done in Lemma~\ref{lem:lamda}. Then, we explain in Lemma~\ref{lem:components} how to present $\Lambda$ as the difference between a monotone submodular function and a linear function, which allows us to optimize $\Lambda$ using the results of Theorem~\ref{thm:result} and Theorem~\ref{thm:distributed}. Finally, we show that even though we use these algorithms to optimize $\Lambda$, the solution they provide has a good guarantee with respect to the original $\gamma$-additively weak submodular function $\rho$.
With this new formulation, we improve the theoretical guarantees of  \citet{robinson2019flexible} in the offline setting, and provide our streaming and  distributed  solutions for the mode finding problem under a cardinality constraint $k$.
Specifically, by using either our proposed streaming or distributed algorithms (depending on whether the setting is a streaming or a distributed setting), we can get a scalable solution with a guarantee with respect to $\rho$, and in particular, a guarantee for the task of finding the mode of an SLC distribution.
We should point out that it is also possible to use the distorted greedy algorithm \textup{\cite[Algorithm~1]{harshaw2019submodular}} to optimize $\Lambda$ in the offline setting.

\newcommand{\LemmaLamda}{For a $\gamma$-additively weak submodular function $\rho$,	the function $\Lambda(S) \triangleq \rho(S) - \frac{\gamma}{2} \cdot |S|\cdot (|S| - 1)$ is submodular.}

\begin{lemma} \label{lem:lamda}
	\LemmaLamda
\end{lemma}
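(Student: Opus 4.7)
The plan is to verify submodularity of $\Lambda$ through the pairwise (``second difference'') characterization: a set function $\Lambda$ is submodular if and only if
\[
\Lambda(S \cup \{u\}) + \Lambda(S \cup \{v\}) \geq \Lambda(S) + \Lambda(S \cup \{u,v\})
\]
for every $S \subseteq \cN$ and every pair of distinct elements $u,v \in \cN \setminus S$. This characterization is convenient here because it matches term for term the inequality in the definition of $\gamma$-additively weak submodularity, so the argument reduces to bookkeeping on the quadratic cardinality correction.

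Concretely, I would fix $S$ and distinct $u,v \notin S$, let $s = |S|$, and compute the contribution of the term $-\tfrac{\gamma}{2}\,|T|(|T|-1)$ to the second difference on the four sets $S$, $S \cup \{u\}$, $S \cup \{v\}$, $S \cup \{u,v\}$, whose sizes are $s$, $s+1$, $s+1$, $s+2$. A direct expansion gives
\[
(s+1)s + (s+1)s - s(s-1) - (s+2)(s+1) = -2,
\]
so the quadratic correction contributes exactly $-\tfrac{\gamma}{2}\cdot(-2) = +\gamma$ to $\Lambda(S \cup \{u\}) + \Lambda(S \cup \{v\}) - \Lambda(S) - \Lambda(S \cup \{u,v\})$. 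Adding this to the $\gamma$-additive weak submodularity inequality for $\rho$, which rearranges to
\[
\rho(S \cup \{u\}) + \rho(S \cup \{v\}) - \rho(S) - \rho(S \cup \{u,v\}) \geq -\gamma,
\]
yields the desired non-negativity of the pairwise second difference of $\Lambda$.

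There is essentially no obstacle: the proof is a one-line algebraic identity once one observes that the map $|T| \mapsto \tfrac{|T|(|T|-1)}{2}$ is, in terms of second differences on pairs of elements, constant and equal to $1$. The only thing to be slightly careful about is the standard reduction from the pairwise condition to full submodularity for arbitrary $A \subseteq B$ and $u \notin B$, which follows by a straightforward induction on $|B \setminus A|$; I would simply cite this equivalence rather than re-derive it.
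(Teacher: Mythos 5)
Your proposal is correct and follows essentially the same route as the paper: both verify the pairwise second-difference characterization of submodularity by observing that the quadratic term $\tfrac{\gamma}{2}|T|(|T|-1)$ contributes exactly $\gamma$ to the second difference, which cancels the $\gamma$ slack in the additively weak submodularity inequality for $\rho$. Your explicit remark on the reduction from the pairwise condition to the general $A \subseteq B$ definition is a standard equivalence that the paper uses implicitly.
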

\begin{proof}
	For every set $S$ and two distinct elements $u, v \not \in S$, the $\gamma$-additively weak submodularity of $\rho$ implies
	\[
	\rho(S)+ \rho(S \cup\{u, v\})  \leq \gamma+\rho(S \cup \{u\})+\rho(S \cup \{v\})
	\enspace.
	\]
	Rearranging this inequality now gives
	\begin{align*}
	&
	\rho(S) - \frac{\gamma \cdot   |S|  \cdot (|S| - 1))}{2}  +  \rho(S \cup\{u, v\})   - \frac{\gamma \cdot   (|S| + 2) \cdot (|S| + 1)}{2}   \\
	& \hspace{90pt}  \leq   \rho(S \cup \{u\})+ \rho(S \cup \{v\})  -  \frac{2 \cdot \gamma \cdot  (|S|+1) \cdot |S| }{2}
	\enspace,
	\end{align*}
	which, by the definition of $\Lambda$, is equivalent to
	\[
	\Lambda(S)+\Lambda(S \cup\{u, v\})  \leq \Lambda(S \cup \{u\})+\Lambda(S \cup \{v\}) \enspace.
	\qedhere
	\]
\end{proof}

Now, let us define the modular function $\ell(S) = \sum_{u \in S} \ell_u$, where $\ell_u \triangleq \max\{ \Lambda(\cN \setminus u) - \Lambda(\cN), 0 \} =\max\{ \rho(\cN \setminus u) - \rho(\cN)  + \gamma \cdot (|\cN| - 1), 0 \}  $.

\newcommand{\LemmaComponents}{The	function $g(S) \triangleq \Lambda(S) + \ell(S)$ is monotone and submodular. Furthermore, if $\rho(\emptyset) \geq 0$, then $g(S)$ is also non-negative because $\ell(\varnothing) = 0$.}
\begin{lemma} \label{lem:components}
	\LemmaComponents
\end{lemma}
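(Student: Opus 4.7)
The plan is to establish the three claims (submodularity, monotonicity, non-negativity) separately, relying on Lemma~\ref{lem:lamda} for the submodularity of $\Lambda$ and on the structure of the definition $\ell_u = \max\{\Lambda(\cN \setminus u) - \Lambda(\cN), 0\}$ for monotonicity.

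First, I would handle submodularity of $g$: since $\Lambda$ is submodular by Lemma~\ref{lem:lamda} and $\ell$ is modular (hence submodular) by definition, their sum $g = \Lambda + \ell$ is submodular. This step is immediate from the fact that submodularity is preserved under addition.

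The main step is monotonicity, and this is where the specific choice of $\ell_u$ pays off. For any $S \subseteq \cN$ and $u \notin S$, submodularity of $\Lambda$ applied to the chain $S \subseteq \cN \setminus \{u\}$ yields
\[
\Lambda(S \cup \{u\}) - \Lambda(S) \;\geq\; \Lambda(\cN) - \Lambda(\cN \setminus \{u\}) \;=\; -\bigl(\Lambda(\cN \setminus \{u\}) - \Lambda(\cN)\bigr) \;\geq\; -\ell_u,
\]
where the last inequality uses that $\ell_u \geq \Lambda(\cN \setminus u) - \Lambda(\cN)$ by definition of the maximum. Combining this with $\ell(S \cup \{u\}) - \ell(S) = \ell_u$ gives
\[
g(S \cup \{u\}) - g(S) \;=\; \bigl[\Lambda(S \cup \{u\}) - \Lambda(S)\bigr] + \ell_u \;\geq\; 0,
\]
so $g$ is monotone. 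This is the one place where the truncation at zero in the definition of $\ell_u$ matters, since it ensures $\ell_u$ is large enough to absorb any negative marginal of $\Lambda$.

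Finally, for non-negativity under the hypothesis $\rho(\varnothing) \geq 0$, I would observe that $\Lambda(\varnothing) = \rho(\varnothing) - \tfrac{\gamma}{2} \cdot 0 = \rho(\varnothing) \geq 0$ and $\ell(\varnothing) = 0$, so $g(\varnothing) \geq 0$. Monotonicity then propagates non-negativity to every $S \subseteq \cN$. I do not anticipate any real obstacle beyond choosing the right chain $S \subseteq \cN \setminus \{u\}$ in the submodularity application above; the identity $\ell(\varnothing) = 0$ and the decomposition into submodular $+$ modular handle the rest.
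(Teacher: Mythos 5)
Your proposal is correct and follows essentially the same route as the paper: submodularity from the submodular-plus-modular decomposition, monotonicity by bounding $\Lambda(u \mid S) \geq \Lambda(u \mid \cN \setminus u) \geq -\ell_u$ via submodularity and the truncation in the definition of $\ell_u$, and non-negativity from $g(\varnothing) = \rho(\varnothing) \geq 0$ together with monotonicity. The only difference is cosmetic ordering of the inequalities.
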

\begin{proof}
	To see that $g(S)$ is submodular, recall that $\Lambda(S)$ is submodular and that the summation of a submodular function with a modular function is still submodular. To prove the monotonicity of $g(S)$, we show that for all sets $S \subseteq \cN$ and elements $u \in \cN \setminus S$: $g(u \mid S) \geq 0$.
	\begin{align*}
	g(u \mid S) = \Lambda(u \mid S) + \ell(u \mid S) & = \Lambda(u \mid S) + \ell_u = \Lambda(u\mid S) +  \max\{ \Lambda(\cN \setminus u) - \Lambda(\cN), 0 \} \\
	&	\geq \Lambda(u\mid S) + \Lambda(\cN \setminus u) - \Lambda(\cN) = \Lambda(u \mid S) - \Lambda(u \mid \cN \setminus u) \geq 0 \enspace,
	\end{align*}
	where the last inequality follows from the submodularity of $\Lambda$.
\end{proof}

We now show that by optimizing $\Lambda$ under a cardinality constraint $k$, by using either our proposed streaming or distributed algorithms (depending on whether the setting is a streaming or a distributed setting), we can get a scalable solution with a guarantee with respect to $\rho$, and in particular, a guarantee for the task of finding the mode of an SLC distribution.

\begin{corollary} \label{cor:mode-offline}
	Assume $\rho\colon 2^{\cN} \rightarrow \bR$ is a $\gamma$-additively weak submodular function. Then, when given $\Lambda$ as the objective function, \AlgDistributed (Algorithm~\ref{alg:distributed}) returns a solution $R$ such that
\begin{align*}
	\bE[\rho(R)]  \geq  (1 - \eps) \cdot \left[ (1 - e^{-1}) \cdot \rho(\Opt) - e^{-1} \cdot \ell(\Opt) \right]    - \frac{\gamma \cdot [(1 -e^{-1}) \cdot l\cdot (l -1 ) - \bE[|R| \cdot (|R| - 1) ]]}{2}  \enspace, 
\end{align*}
where $\Opt \in \argmax_{|S| \leq k}  \rho(S)$ and $l = |\Opt| \leq  k$.
\end{corollary}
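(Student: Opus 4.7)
The plan is to view Corollary~\ref{cor:mode-offline} as a direct consequence of applying Theorem~\ref{thm:distributed} to the decomposition $\Lambda = g - \ell$ supplied by Lemma~\ref{lem:components}, and then translating the resulting bound on $\Lambda$ back into a bound on $\rho$ via the identity $\Lambda(S) = \rho(S) - \tfrac{\gamma}{2}|S|(|S|-1)$ from Lemma~\ref{lem:lamda}.

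First I would observe that, although Theorem~\ref{thm:distributed} is stated in terms of the optimum $\Optset$ of the regularized problem, its proof via Lemma~\ref{lemma:bound-lovasz} only uses the fact that the reference set has size at most $k$; the specific choice enters solely through the definition of the vectors $\mathbf{p}^r$, and the Lov\'asz-extension chain holds verbatim for any fixed cardinality-$k$ reference. I would therefore re-apply the argument with $\Opt = \argmax_{|S|\le k} \rho(S)$ in place of $S^*$, obtaining
\[
\bE[g(R) - \ell(R)] \;\ge\; (1-\eps)\bigl[(1-e^{-1})\, g(\Opt) - \ell(\Opt)\bigr].
\]
Next I would substitute $g = \Lambda + \ell$ on both sides. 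The left-hand side becomes $\bE[\Lambda(R)]$, and on the right, $(1-e^{-1}) g(\Opt) - \ell(\Opt) = (1-e^{-1})\Lambda(\Opt) - e^{-1}\ell(\Opt)$, giving
\[
\bE[\Lambda(R)] \;\ge\; (1-\eps)\bigl[(1-e^{-1})\Lambda(\Opt) - e^{-1}\ell(\Opt)\bigr].
\]

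Then I would plug in $\Lambda(R) = \rho(R) - \tfrac{\gamma}{2}|R|(|R|-1)$ and $\Lambda(\Opt) = \rho(\Opt) - \tfrac{\gamma}{2} l(l-1)$ and isolate $\bE[\rho(R)]$, which yields
\[
\bE[\rho(R)] \;\ge\; (1-\eps)\bigl[(1-e^{-1})\rho(\Opt) - e^{-1}\ell(\Opt)\bigr] - (1-\eps)(1-e^{-1})\tfrac{\gamma}{2} l(l-1) + \tfrac{\gamma}{2}\bE[|R|(|R|-1)].
\]
The final step is to absorb the $(1-\eps)$ factor in front of the nonpositive term $-(1-e^{-1})\tfrac{\gamma}{2} l(l-1)$: since $(1-e^{-1})\tfrac{\gamma}{2} l(l-1) \ge 0$ and $1-\eps \le 1$, we have $-(1-\eps)(1-e^{-1})\tfrac{\gamma}{2} l(l-1) \ge -(1-e^{-1})\tfrac{\gamma}{2} l(l-1)$, and combining the two $\gamma$-correction terms gives exactly $-\tfrac{\gamma[(1-e^{-1}) l(l-1) - \bE[|R|(|R|-1)]]}{2}$, which is the claimed bound.

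There is no serious technical obstacle; the only subtlety is the first step, i.e. observing that Theorem~\ref{thm:distributed} can be invoked against $\Opt$ (which is \emph{not} the maximizer of $\Lambda = g - \ell$, but merely a cardinality-$k$ competitor). The rest is purely algebraic bookkeeping: expanding $g = \Lambda + \ell$, substituting the quadratic correction from Lemma~\ref{lem:lamda}, and handling the sign of $(1-\eps)$ against a nonpositive term so that the $(1-\eps)$ factor can be dropped from the $\gamma l(l-1)$ correction while retaining a valid lower bound.
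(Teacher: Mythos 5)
Your proof is correct and follows essentially the same route as the paper: apply Theorem~\ref{thm:distributed} to $\Lambda = g - \ell$, rewrite via $g = \Lambda + \ell$ to get $(1-e^{-1})\Lambda(\Opt) - e^{-1}\ell(\Opt)$, substitute the quadratic correction from Lemma~\ref{lem:lamda}, and drop the $(1-\eps)$ factor on the non-negative $l(l-1)$ term. You are in fact slightly more careful than the paper on the one genuine subtlety---justifying that the distributed guarantee can be invoked against $\Opt$, which maximizes $\rho$ rather than $g-\ell$, by noting the proof of Theorem~\ref{thm:distributed} works for any cardinality-$k$ reference set---a point the paper's proof passes over without comment.
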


\begin{proof}
	Using the guarantee of Theorem~\ref{thm:distributed} for the performance of  \AlgDistributed for maximizing the function $\Lambda(S) = g(S) - \ell(S)$ in the distributed setting under a cardinality constraint $k$, we get
	\[
	\bE[g(R) - \ell(R)] \geq (1 - \eps) \cdot \left[ (1 - e^{-1}) \cdot g(\Opt) - \ell(\Opt) \right]
	\enspace,
	\]
	which implies, by the definition of $g$,
	\[
	\frac{\bE[\Lambda(R)]}{1 - \eps}
	\geq
	(1 - e^{-1}) \cdot (\Lambda(\Opt) + \ell(\Opt) )-  \ell(\Opt)
	=
	(1 - e^{-1}) \cdot \Lambda(\Opt) - e^{-1} \cdot \ell(\Opt)
	\enspace.
	\]
	Using the definition of $\Lambda$ now, we finally get
	\begin{align*}
	\bE[\rho(R)] & \geq (1 - \eps) \cdot \left[ (1 - e^{-1}) \cdot \rho(\Opt) - e^{-1} \cdot \ell(\Opt) \right] \\ 
	&  \qquad \qquad - \frac{\gamma \cdot [ (1 - \eps) \cdot (1 -e^{-1}) \cdot |\Opt| \cdot (|\Opt| -1 ) - \bE[|R| \cdot (|R| - 1)]]}{2} \enspace,
	\end{align*}
	which implies the corollary since $(1 -e^{-1}) \cdot |\Opt| \cdot (|\Opt| -1 )$ is non-negative.
\end{proof}

The following corollary shows the guarantee obtained by \AlgThreshold as a function of the input parameter $r$. When the best choice for $r$ is unknown, \AlgStream roughly obtains this guarantee for the best value of $r$, as discussed in \cref{sec:streaming}.

\begin{corollary} \label{cor:mode-streaming}
	Assume $\rho\colon 2^{\cN} \rightarrow \bR$ is a $\gamma$-additively weak submodular function. Then, when given $\Lambda$ as the objective function, \AlgThreshold (Algorithm~\ref{alg:threshold-first}) returns a solution $R$ such that $\rho(R)$ is at least
\begin{align*}
	  (h(r) - \epsilon)  \cdot \rho(\Opt) - ( \alpha(r) - r -1 + \epsilon)   
	 \cdot \ell(\Opt) 
	- \frac{\gamma \cdot [ (h(r) - \epsilon) \cdot l \cdot (l -1 ) - |R| \cdot (|R| - 1) ]}{2} \enspace,
\end{align*}
	where $\Opt \in \argmax_{|S| \leq k}  \rho(S)$ and $l = |\Opt| \leq  k$. 
\end{corollary}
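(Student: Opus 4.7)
The plan is to mimic the structure of the proof of Corollary~\ref{cor:mode-offline}, but invoke Theorem~\ref{thm:result} in place of Theorem~\ref{thm:distributed}. Specifically, I would feed the submodular function $\Lambda = g - \ell$ (which is non-negative monotone submodular minus modular, by Lemmas~\ref{lem:lamda} and~\ref{lem:components}) as the input to \AlgThreshold, so that Theorem~\ref{thm:result}, instantiated with the feasible candidate $T = \Opt$ (since $|\Opt| \le k$), yields
\[
\Lambda(R) = g(R) - \ell(R) \;\geq\; (h(r) - \eps) \cdot g(\Opt) - r \cdot \ell(\Opt).
\]
Then I would substitute $g(\Opt) = \Lambda(\Opt) + \ell(\Opt)$ on the right-hand side to obtain
\[
\Lambda(R) \;\geq\; (h(r) - \eps) \cdot \Lambda(\Opt) + \bigl( (h(r) - \eps) - r \bigr) \cdot \ell(\Opt).
\]

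The next step is to rewrite the coefficient of $\ell(\Opt)$ in the form promised by the corollary. I would verify the elementary identity
\[
r - h(r) \;=\; r - \frac{2r + 1 - \sqrt{4r^2 + 1}}{2} \;=\; \frac{\sqrt{4r^2+1} - 1}{2} \;=\; \alpha(r) - r - 1,
\]
so that $(h(r) - \eps) - r = -(\alpha(r) - r - 1 + \eps)$, giving
\[
\Lambda(R) \;\geq\; (h(r) - \eps) \cdot \Lambda(\Opt) - (\alpha(r) - r - 1 + \eps) \cdot \ell(\Opt).
\]

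Finally, I would substitute the definition $\Lambda(S) = \rho(S) - \tfrac{\gamma}{2} |S|(|S|-1)$ on both sides (with $|R|$ on the left and $l = |\Opt|$ on the right), and then isolate $\rho(R)$ by moving the quadratic term from the left to the right. This yields exactly
\[
\rho(R) \;\geq\; (h(r) - \eps) \cdot \rho(\Opt) - (\alpha(r) - r - 1 + \eps) \cdot \ell(\Opt) - \frac{\gamma \cdot \bigl[ (h(r) - \eps) \cdot l(l-1) - |R|(|R|-1) \bigr]}{2},
\]
which is the desired bound. The only part that is not completely mechanical is checking the algebraic identity for $r - h(r)$ and making sure that $|\Opt| \le k$ makes $\Opt$ an admissible choice in the ``$\max$'' of Theorem~\ref{thm:result}; both are immediate, so I do not anticipate any genuine obstacle.
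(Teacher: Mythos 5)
Your proposal is correct and follows essentially the same route as the paper's own proof: invoke Theorem~\ref{thm:result} for $\Lambda = g - \ell$ with $\Opt$ as the candidate set, substitute $g(\Opt) = \Lambda(\Opt) + \ell(\Opt)$, rewrite the $\ell(\Opt)$ coefficient via $r - h(r) = \tfrac{\sqrt{4r^2+1}-1}{2} = \alpha(r) - r - 1$, and then unwind the definition of $\Lambda$ to pass from $\Lambda$ to $\rho$. No gaps.
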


\begin{proof}
	By Theorem~\ref{thm:result},
	\[
	g(R) - \ell(R) \geq (h(r) - \epsilon) \cdot g(\Opt) - r \cdot \ell(\Opt)
	\enspace,
	\]
	which implies, by the definition of $g$,
	\[
	\Lambda(R) \geq (h(r) - \epsilon) \cdot \left(\Lambda(\Opt) + \ell(\Opt) \right) - r \cdot \ell(\Opt)
	= (h(r) - \epsilon)  \cdot \Lambda(\Opt) - ( r - h(r) + \epsilon) \cdot \ell(\Opt)
	\enspace.
	\]
	Using the definition of $\Lambda$ now, we finally get
	\begin{align*}
	\rho(R) & \geq  (h(r) - \epsilon)  \cdot \rho(\Opt) - ( r - h(r) + \epsilon)  \cdot \ell(\Opt) \\ 
	&  \qquad \qquad - \frac{\gamma \cdot [ (h(r) - \epsilon) \cdot |\Opt| \cdot (|\Opt| -1 ) - |R| \cdot (|R| - 1) ]}{2} \enspace.
	\end{align*}
	The corollary now follows by observing that $r - h(r) = \frac{\sqrt{4r^2 + 1} - 1}{2}$.
\end{proof}

Note that in \cite[Theorem~12]{robinson2019flexible} and Corollary~\ref{cor:mode-offline},  if the value of the linear function is considerably larger than the values of functions $\eta$ or $\rho$, then the parts that depend on the optimal solution $\Opt$ in the right-hand side of these results could be negative, which makes the bounds trivial.
The main explanation for this phenomenon is that the distorted greedy algorithm does not take into account the relative importance of $g$ and $\ell$ to the value of the optimal solution. 
On the other hand, the \textbf{distinguishing feature} of our streaming algorithm is that, by guessing the value of $\beta_{\Opt}$, it can find the best possible scheme for assigning weights to the importance of the submodular and modular terms.
Therefore, \AlgStream, even in the scenarios where the linear cost is large, can find solutions with a non-trivial provable guarantee. 
In the experiments presented at \cref{sec:mode_finding_experiment}, we showcase two facts that: (i) \AlgStream could be used for mode finding of strongly log-concave distributions with a \text{provable guarantee}, and (ii)  choosing an accurate estimation of $\beta_{S^*}$ plays an important role in this optimization procedure.

	\section{Experiments}\label{sec:results}
In this section we present the experimental studies we have performed to show the applicability of our approach.
In the first set of experiments (\cref{section:vertex-cover}), we compare the performance of our proposed streaming algorithm with that of \AlgDistorted \cite{harshaw2019submodular}, vanilla greedy and sieve-streaming \cite{badanidiyuru2014streaming}. 
The main message of these experiments is to show that our proposed distorted-streaming algorithm outperforms both vanilla greedy and sieve streaming, and performs comparably with respect to the distorted-greedy algorithm in terms of the objective value---despite the fact that our proposed algorithm makes only a single pass over the data, while distorted-greedy has to make $k$ passes (which can be as large as $\Theta(n)$ in the worst case).
In the second set of experiments (presented at \cref{sec:mode_finding_experiment}), we evaluate the performance of \AlgStream on the task of finding the mode of SLC distributions.
In the third set of experiments (\cref{sec:distributed-experiment}), we compare \AlgDistributed with distributed greedy.
In the final experiments (\cref{sec:data_summarization}), we demonstrate the power of our proposed regularized model by comparing it with the alternative approach of maximizing a submodular function subject to cardinality and single knapsack constraints. 
In the latter case, the goal of the knapsack constraint is to limit the linear function $\ell$ to a pre-specified budget while the algorithm tries to maximize the monotone submodular function $g$.

\subsection{How Effective is \AlgStream?}   \label{sec:alg-experiment}

\label{section:vertex-cover}

In this experiment, we compare \AlgStream with distorted-greedy, greedy and sieve-streaming in the setting studied in \cite[Section~5.2]{harshaw2019submodular}.
In this setting, there is a submodular function $f$ over the vertices of a directed graph $G = (V, E)$. To define this function, we first need to have a weight function $w\colon V \rightarrow \bR_{\geq 0}$ on the vertices. 
For a given vertex set $S \subseteq V$, let $N(S)$ denote the set of vertices which are pointed to by $S$, i.e., $N(S) \triangleq \{v \in V \mid \exists u \in S \text{ such that } (u, v) \in E\}$. Then, we have
$f(S) \triangleq g(S)-\ell(S) = \sum_{u \in N(S) \cup S} w_{u}-\sum_{u \in S} \ell_u.$
Following~\citet{harshaw2019submodular}, we assigned a weight of $1$ to all nodes and set $\ell_u = 1 + \max\{0, d_u - q\}$, where $d_u$ is the out-degree of node $u$ in the graph $G(V,E)$ and $q$ is a parameter.
In our experiment, we used real-world graphs from \cite{snapnets}, set $q = 6$, and ran the algorithms for varying cardinality constraint $k$.
In \cref{fig:vertex-cover}, we observe that for all four networks, distorted greedy, which is an offline algorithm, achieves the highest objective values.
Furthermore, we observe that \AlgStream consistently outperforms both greedy and sieve-streaming, which demonstrates the effectiveness of our proposed method.

\begin{figure*}[htb!] 
	\centering  
	\subfloat[Social graph]{\includegraphics[height=31.7mm]{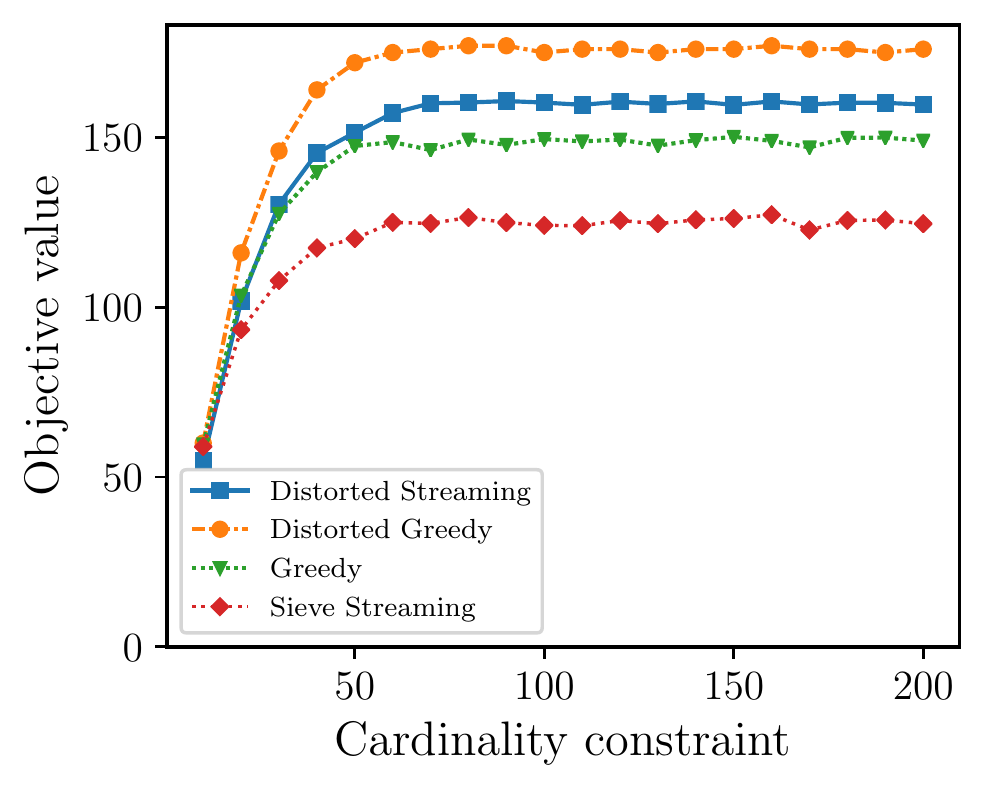}\label{fig:facebook}}
	\subfloat[Facebook ego network]{\includegraphics[height=31.7mm]{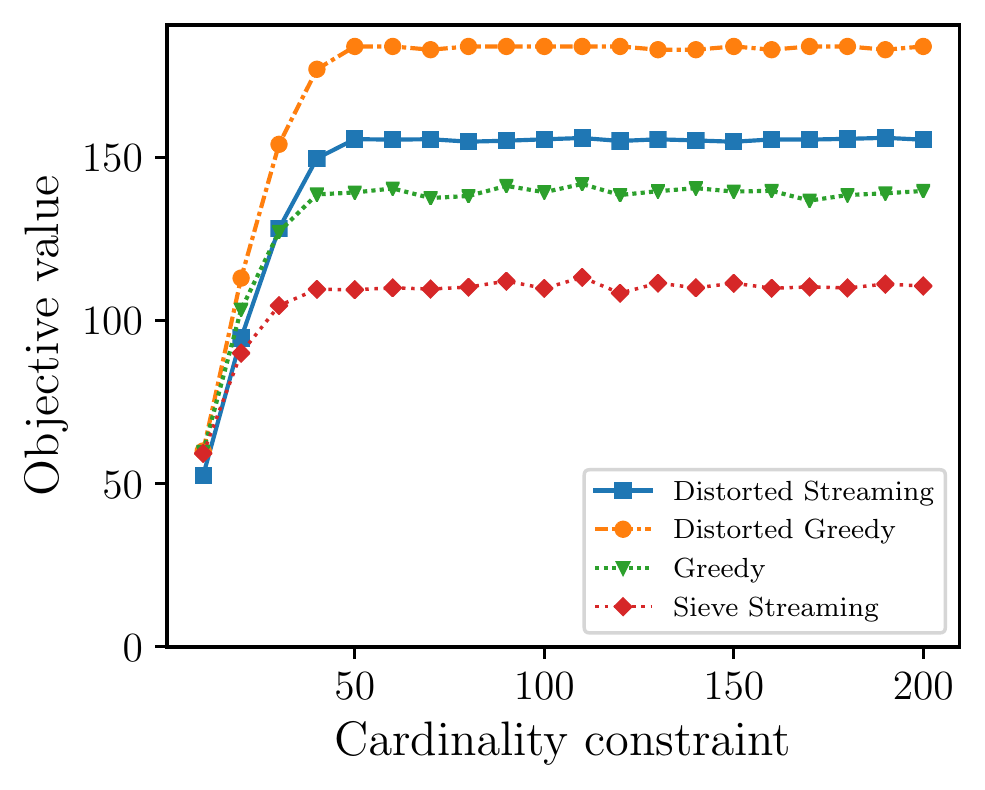}\label{fig:facebook_ego}}
	\subfloat[EU Email]{\includegraphics[height=31.7mm]{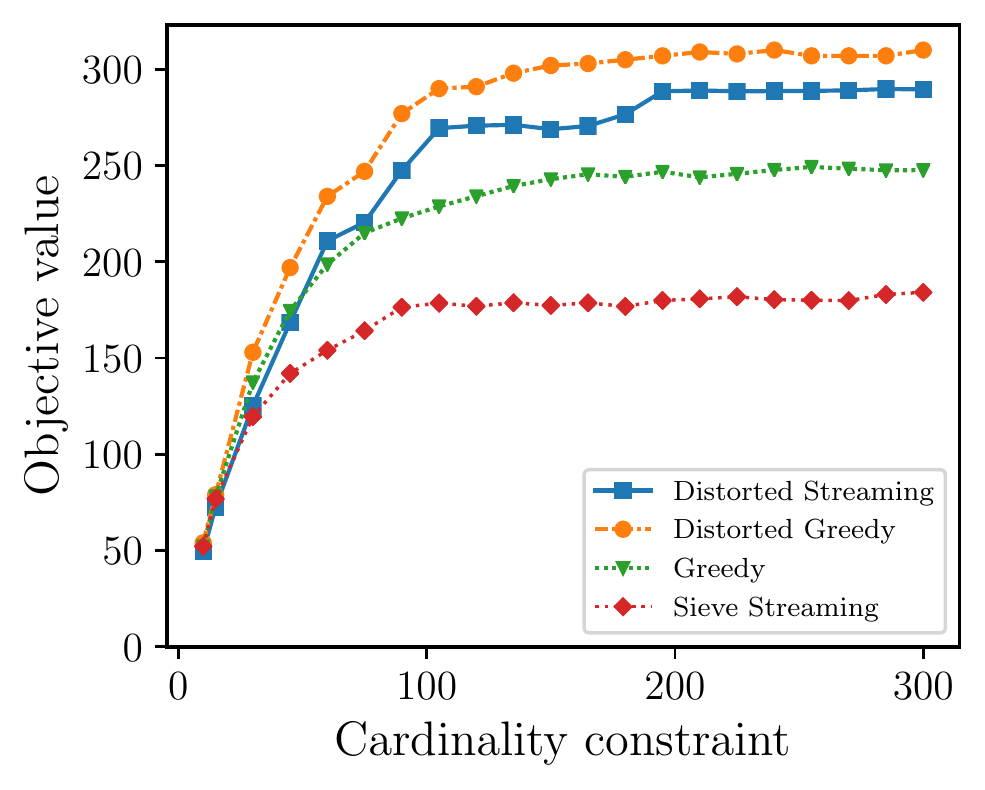}\label{fig:email}}
	\subfloat[Wikipedia vote]{\includegraphics[height=31.7mm]{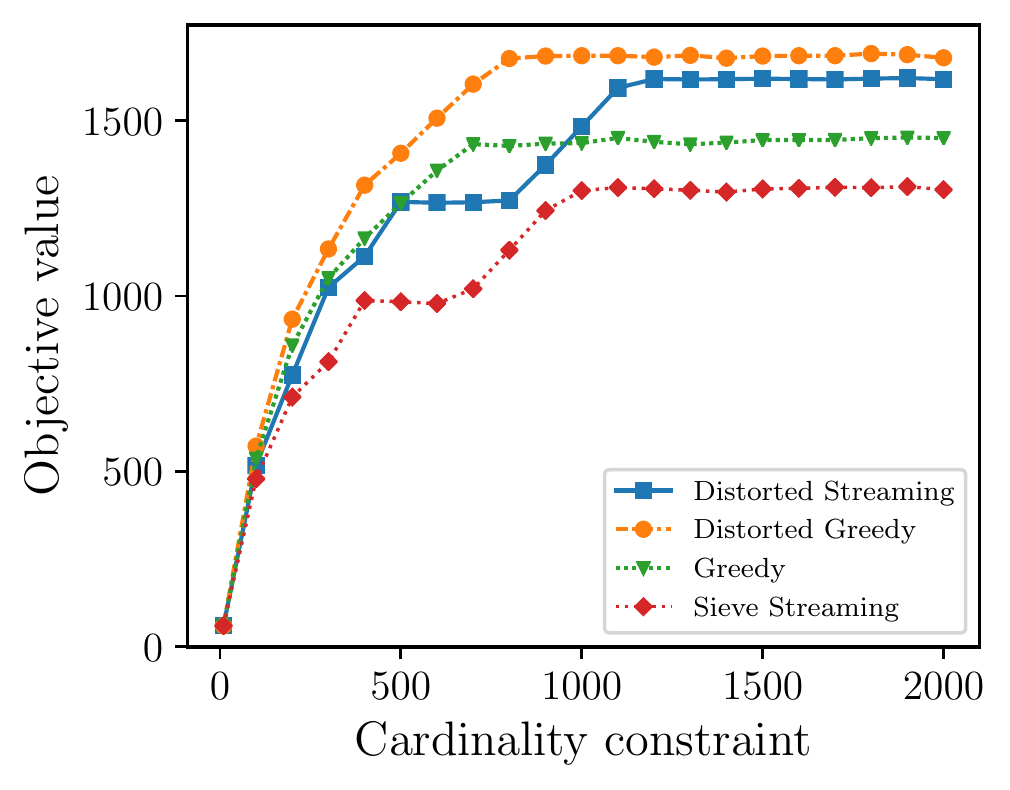}\label{fig:wikipedia}}
	\caption{Directed vertex cover: we compare objective values by varying the cardinality constraint $k$. }
	\label{fig:vertex-cover}
\end{figure*}

\subsection{Mode Finding of SLC Distributions: Experimental Evaluations} \label{sec:mode_finding_experiment}

In this section, we compare the performance of \AlgStream with the performance of distorted greedy, vanilla greedy and sieve streaming on the problem of mode finding for an SLR distribution.
We consider a distribution $\nu(S) \propto \sqrt{\det \left(L_{S}\right)} \cdot \mathbf{1} \{|S| \leq d\}$, where $L$ is an $n \times n$ PSD matrix.  
Here, $L_S$ corresponds to the submatrix of $L$, where the rows and columns are indexed by elements of $S$ \cite{robinson2019flexible}. 
In the optimization procedure, our goal is to maximize $\rho(S) \triangleq \log(\nu(S))$.
To generate the random matrix $L$, we first sample a diagonal matrix $D$ and a random PSD matrix $Q$, and then assign $L = QDQ^{-1}$.
Each diagonal element of $D$  is from  a log-normal distribution with a probability mass function $p(x) = \frac{1}{\sigma x \sqrt{2 \pi}} \exp(-\frac{(\ln(x) - \mu)^2}{2 \sigma^2})$,
where $\mu$  and $\sigma$ are the mean and standard deviation of the normally distributed logarithm of the variable, respectively. This log-normal distribution allows us to have a PSD matrix where the eigenvalues have a heavy-tailed distribution.
In these experiments, we set $n=1000, d = 100, \mu = 1.0$ and $\sigma = 1.0$.

In \cref{fig:mode-finding}, we observe that the outcome of \AlgStream outperforms sieve streaming. This is mainly a result of the fact that \AlgStream estimates the value of $\beta_{\Opt}$ and uses the best possible value for $r$.
Furthermore, we see that vanilla greedy performs better than distorted greedy, and for cardinality constraints larger than $k=20$, the performance of distorted greedy degrades. 
This observation could be explained by the fact that the linear cost for each element $u$ is comparable to the value of $g(u)$ (or marginal gain of $u$ to any set $S$). Therefore, distorted greedy does not pick any element in the first few iterations when $k$ is large enough, i.e., when $(1 - \frac{1}{k})^{k-(i+1)}$ is small.
It is worth mentioning that while the performance of the greedy algorithm is the best for this specific application, only \AlgStream and distorted greedy have a theoretical guarantee.

\begin{figure}[ht] 
	\centering
	\includegraphics[height=50mm]{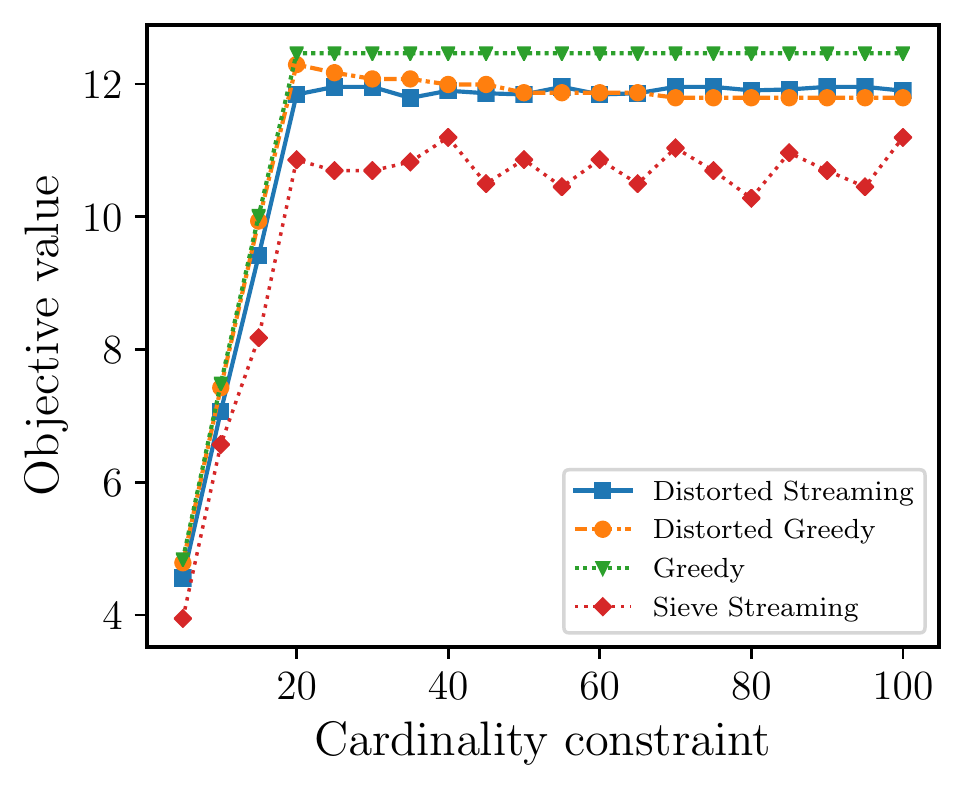}
	\caption{We want to find the mode of a distribution $\nu(S) \propto \sqrt{\det \left(L_{S}\right)} \cdot \mathbf{1} \{|S| \leq d\}$ for a PSD matrix $L$. 
		For the objective value, we report $\log(\nu(S))$.\label{fig:mode-finding}}
\end{figure}

\subsection{Distributed Setting} \label{sec:distributed-experiment}

In this section, we compare \AlgDistributed with the distributed greedy algorithm of~\citet{barbosa2016new}.
We evaluate the performance of these algorithms over several large graphs \cite{snapnets} in the setting of \cref{section:vertex-cover} under a cardinality constraint $k=\numprint{1000}$, where we set $q = 50$.
For both algorithms, we set the number of computational rounds to 10.
The first graph is the Amazon product co-purchasing network with	$n=\numprint{334863}$ vertices;
the second one is the DBLP collaboration network with $n=\numprint{317080}$ vertices;
the third graph is Youtube with $n=\numprint{1134890}$ vertices; and
the last graph we consider is the Pokec social network, the most popular online social network in Slovakia, with $n=\numprint{1632803}$ vertices.
For each graph, we set the number of machines to $ m = \lceil \nicefrac{n}{4000} \rceil$.
From \cref{fig:distributed}, we can see that the objective values of  $\AlgDistributed$ exceed the results of distributed greedy for all four graphs.
\begin{figure}[htb!] 
	\centering
	\includegraphics[height=50mm]{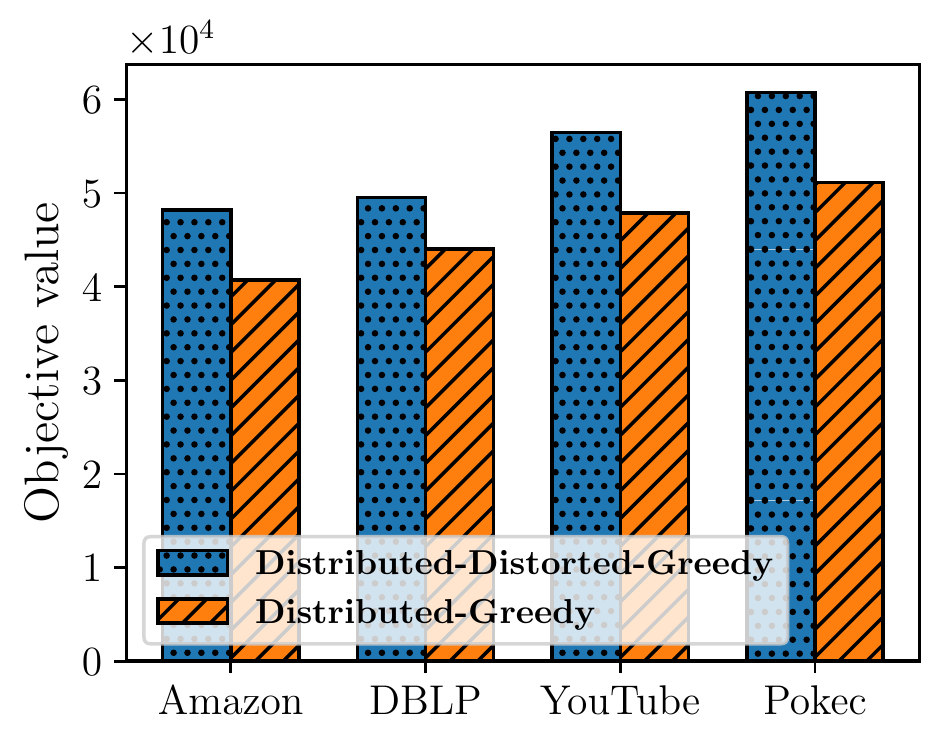}
	\caption{We compare the \AlgDistributed with the distributed greedy algorithm under a cardinality constraint with $k = \numprint{1000}$. The number of computational rounds is set to 10. \label{fig:distributed}}
\end{figure}

\subsection{Regularized Data Summarization} \label{sec:data_summarization}
In this section, through an extensive set of experiments, we answer the following two questions:
\begin{itemize}
\item How does \AlgStream perform with respect to sieve-streaming and distorted greedy on real-world data summarization tasks?
\item Is our proposed modeling approach (maximizing diversity while considering costs of items  as a regularization term  in a single function) favorable to methods which try to maximize a submodular function subject to a knapsack constraint?
\end{itemize}
We consider three state-of-the-art algorithms for solving the problem of submodular maximization with a cardinality and a knapsack constraint:
FANTOM \cite{mirzasoleiman2016fast}, Fast \cite{badanidiyuru2014fast} and  Vanilla Greedy Dynamic Program \cite{mizrachi2019tight}.
For the sake of fairness of our experiments, we used these three algorithms to maximize the submodular function $g$ under 50 different knapsack capacities $c$ in the interval $0.1 \leq c \leq 100$ and reported the solution maximizing $g(S) - \ell(S)$. 
We note that, for the computational complexities of these algorithms, we report the number of oracle calls used by a single one out of their 50 runs (one for each different knapsack capacity), which gives these offline algorithms a considerable edge in our comparisons.

\subsubsection{Online Video Summarization} \label{sec:video}
In this task, we consider the online video summarization application, where a stream of video frames comes, and one needs to provide a set of $k$ representative frames as the summary of the whole video.
In this application, the objective is to select a subset of frames in order to maximize a utility function $g(S)$ (which represents the diversity), while minimizing the total entropy of the selection.
We use a non-negative modular function $\ell(S)$ to represent the entropy of the set $S$, which could be interpreted as a proxy of the storage size of $S$.

We used the pre-trained ResNet-18 model \cite{he2016deep} to extract features from frames of each video. 
Then, given a set of frames, we defined a matrix $M$ such that $M_{ij}= e^{-\textrm{dist}(x_i,x_j)}$, where $\text{dist}(x_i,x_j)$ denotes the distance between the feature vectors of the $i$-th and $j$-th frames, respectively.
One can think of $M$ as a similarity matrix among different frames of a video.
The utility of a set $S \subseteq V$ is defined as a non-negative and monotone submodular objective $g(S) = \log \det (\mathbf{I} + \alpha M_S)$, where $\mathbf{I}$ is the identity matrix, $\alpha$ is a positive scalar and $M_S$ is the principal sub-matrix of the similarity matrix $M$ indexed by $S$. Informally, this function is meant to measure the diversity of the vectors in $S$.
To sum-up, we want to maximize the following function under a cardinality constraint $k$:
$ 
f(S) \triangleq g(S) - \ell(S) = \log \det( \mathbf{I} + \alpha M_S ) - \sum_{u \in S} \mathrm{H}_u,
$
where $\mathrm{H}_u$ represents the entropy of frame $u$.

\begin{figure*}[htb!] 
	\centering  
	\subfloat[Video number 13] {\includegraphics[height=32mm]{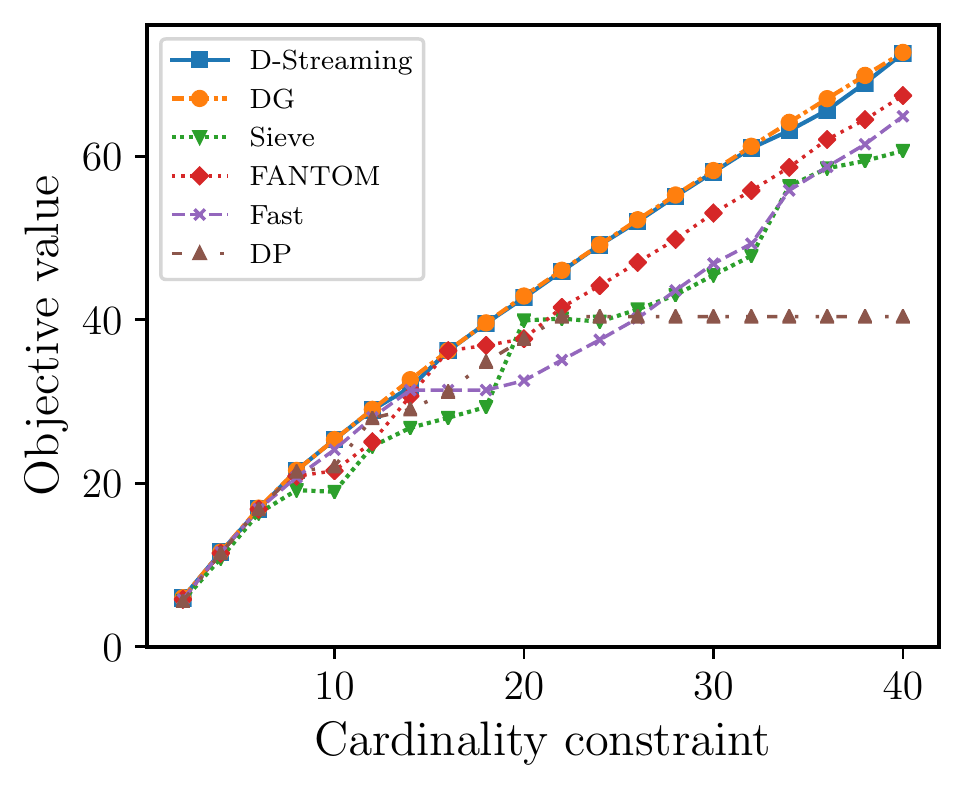}\label{fig:image-f-13}}
	\subfloat[Video number 15] {\includegraphics[height=32mm]{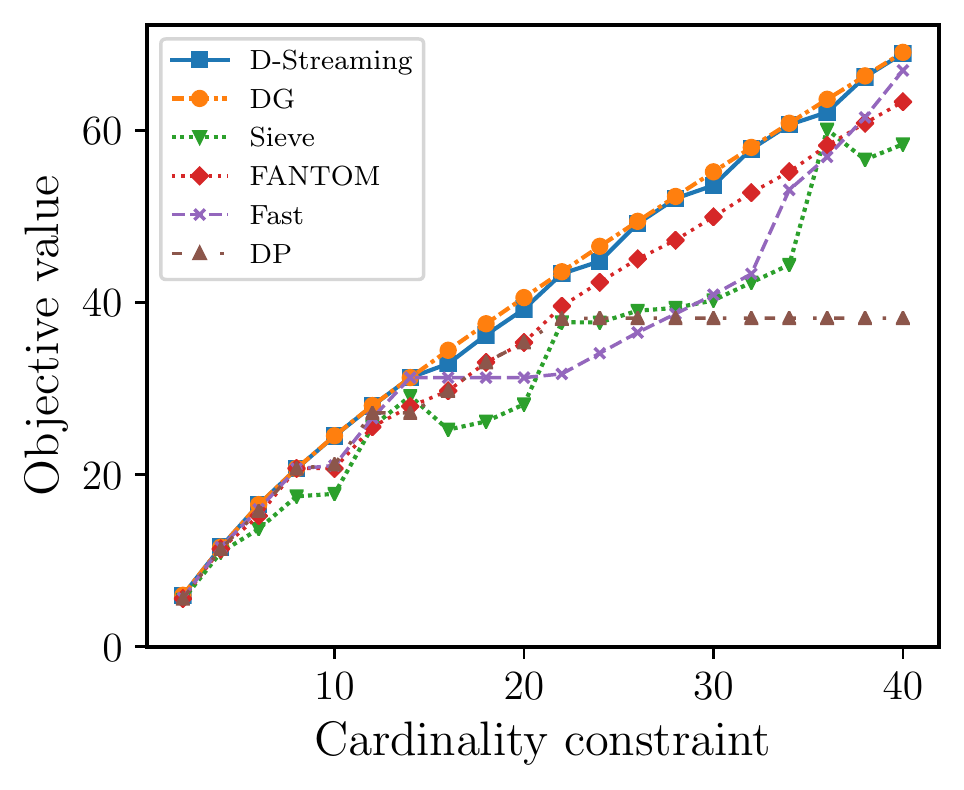}\label{fig:image-f-15}}
	\subfloat[Video number 13] {\includegraphics[height=32mm]{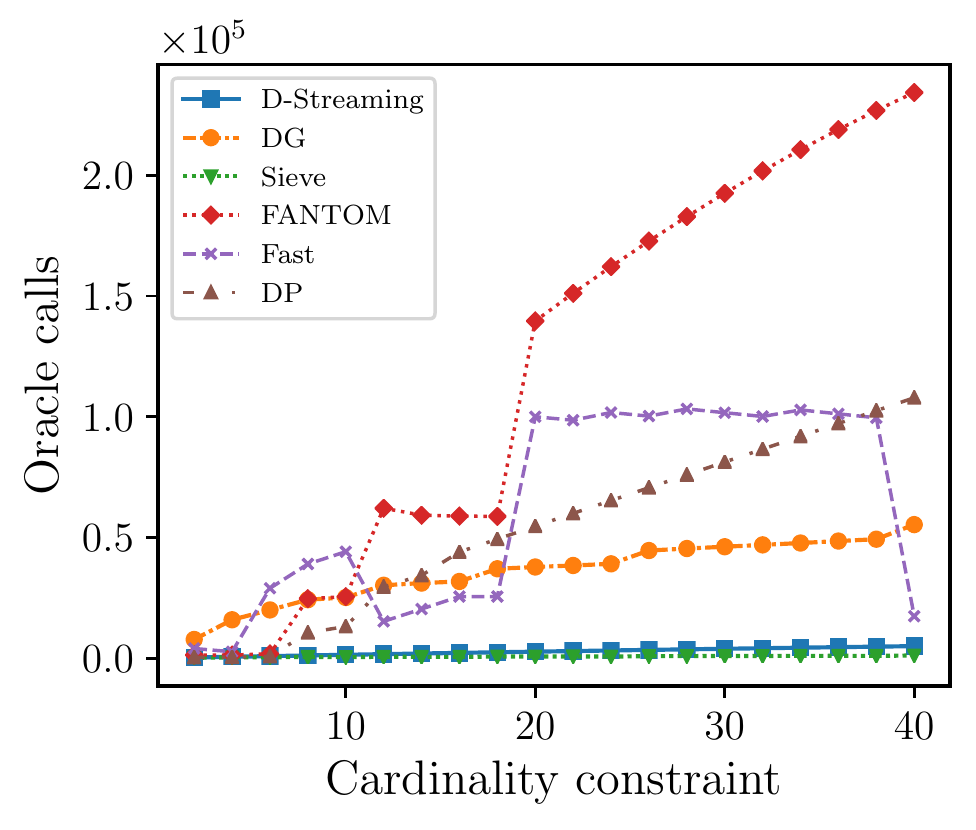}\label{fig:image-o-13}}
	\subfloat[Video number 15] {\includegraphics[height=32mm]{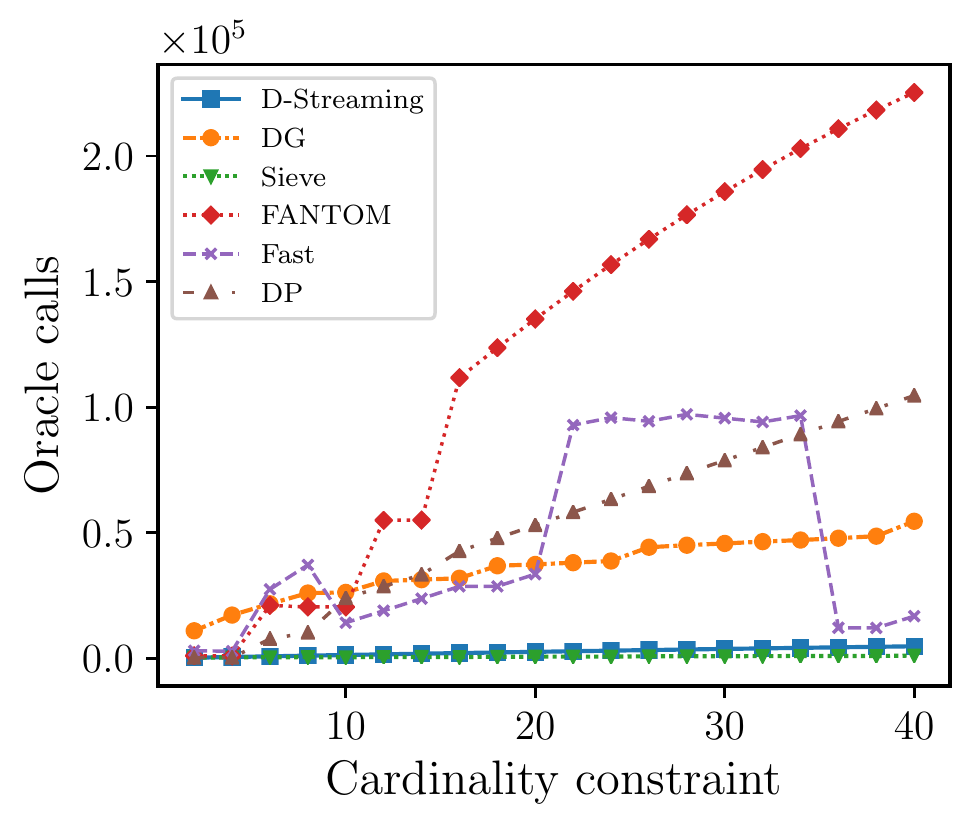}\label{fig:image-o-15}}
	\qquad
	\subfloat[The summaries produced by \AlgStream for video number 14.] {\includegraphics[height=45mm]{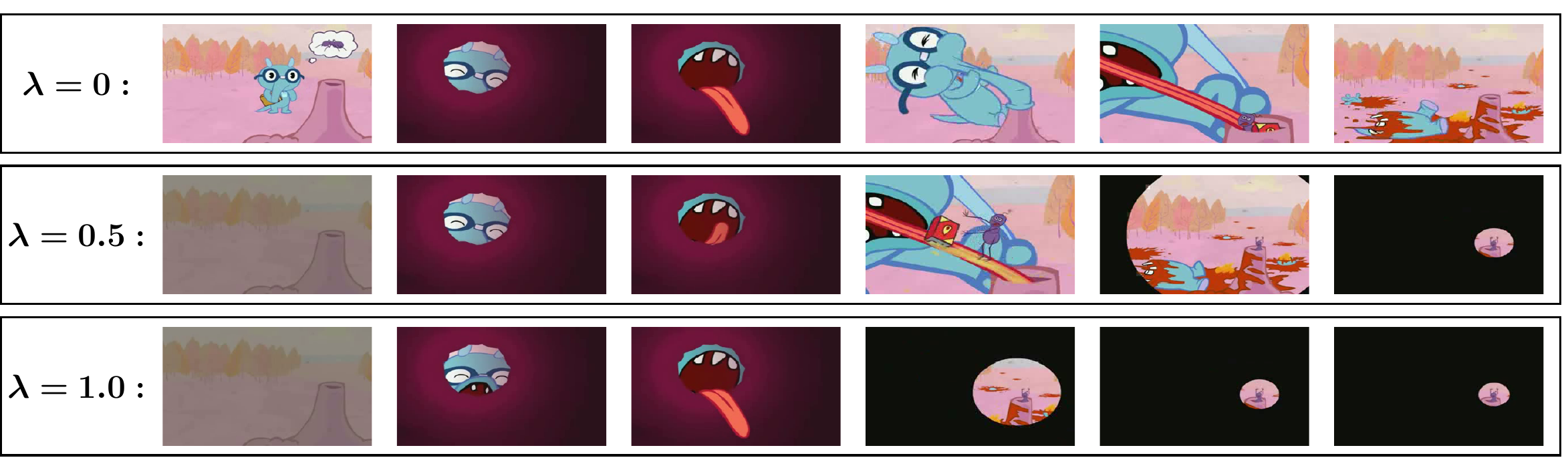}\label{fig:sampled-frames}}
	\caption{Movie frame summarization: 
		For each frame $u$ the linear cost $\ell_u$ is the entropy of that frame. (a) and (b) compare the objective values. (c) and (d) compare the computational complexities based on the number of oracle calls. In experiment (e), the input function is  $f(S) \triangleq g(S) - \lambda \cdot \ell(S)$ for $\lambda \in \{0,0.5, 1.0\}$, where we set the cardinality constraint to $k=6$.
		\label{fig:image}}
\end{figure*}

In the first experiment, we summarized the frames of videos 13 and 15 from the VSUMM dataset~\cite{de2011vsumm}\footnote{\url{https://sites.google.com/site/vsummsite/}}, and compare the above mentioned algorithms based on their objective values and number of oracle values for varying cardinality constraint $k$.
From \cref{fig:image-f-13,fig:image-f-15} we conclude that (i) the quality of the solutions returned by \AlgStream is as good as the quality of the results of distorted greedy, 
(ii) distorted greedy clearly outperforms sieve-streaming, and
(iii) the objective values of \AlgStream and distorted greedy are both larger than the corresponding values produced by Greedy Dynamic Program, Fast and FANTOM.
This confirms that directly maximizing the function $f$ provides higher utilities versus maximizing the function $g$ and setting a knapsack constraint over the modular function $\ell$.
In \cref{fig:image-o-13,fig:image-o-15}, we observe that the computational complexity of \AlgStream and sieve streaming is several orders of magnitudes better than the computation complexity of the other algorithms, which is consistent with their need to make only a single pass over the data.

Next, we study the effect of the linear cost function (in the other words, the importance we give to the entropy of frames) on the set of selected frames. 
For this reason, we run \AlgStream on the frames from video number 14. 
The objective function is $f(S) \triangleq g(S) - \lambda \cdot \ell(S)$ for $\lambda \in \{0,0.5, 1.0\}$. 
In this experiment, we set the cardinality constraint to $k = 6$.
In \cref{fig:sampled-frames}, we observe that by increasing $\lambda$ the
entropy of the selected frames decreases. This is evident from the fact that the color diversity of pixels in each frame reduces for larger values of $\lambda$.
Consequently, the  representativeness of the selected subset decreases. 
Indeed, while it is easy to understand the whole story of this animation from the output produced for $\lambda = 0$, some parts of the story are definitely missing if we set $\lambda$ to $1.0$.

\subsubsection{Yelp Location Summarization}

In this summarization task, we want to summarize a dataset of business locations. 
For this reason, we consider a subset of Yelp's businesses, reviews and user data \cite{yelporig}, referred to as the Yelp Academic dataset \cite{yelp}. 
This dataset contains information about local businesses across 11 metropolitan areas.
The features for each location are extracted from the description of that location and related user reviews.
The extracted features cover information regarding several attributes, including parking options, WiFi access, having vegan menus, delivery options,  possibility of outdoor seating,  being good for groups.\footnote{For the feature extraction, we used the script provided at \url{https://github.com/vc1492a/Yelp-Challenge-Dataset}.}

The goal is to choose a subset of  businesses locations, out of a ground set $\cN = \{1, \dots , n \} $,  which provides a good summary of all the existing locations. We calculated the similarity matrix $M \in \bR^{n \times n}$ between locations using the same method described in \cref{sec:video}.
For a selected set $S$, we assume each location $i \in \cN$ is represented by the location from the set $S$ with the highest similarity to $i$. This makes it natural to define the total utility provided by set $S$ using the set function
\begin{align} \label{eq:facility}
f(S) \triangleq g(S)-\ell(S) =\frac{1}{n} \sum_{i=1}^{n} \max_{j \in S} M_{i,j} - \sum_{u \in S} \ell_u \enspace.
\end{align}
Note that $g(S)$ is monotone and submodular \cite{krause12survey,frieze1974cost}.
For the linear function $\ell$ we consider two scenarios: 
(i) in the first one, the cost assigned to each location is defined as its distance to the downtown in the city of that location. 
ii) in the second scenario, the linear cost of each location $u$ is the distance between $u$ and the closest international airport in that area.
The intuitive explanation of the first linear function is that while we look for the most diverse subset of locations as our summary, we want those locations to be also close  enough to the down-town in order to make commute and access to other facilities easier. For the second linear function, we want the selected locations to be in the vicinity of airports.

From  \cref{eq:facility}  it is evident that computing the objective function requires access to the entire dataset $\cN$, which in the streaming setting is not possible.
Fortunately, however, this function is additively decomposable \cite{mirzasoleiman2013distributed} over the ground set $\cN$. 
Therefore, it is possible to estimate \cref{eq:facility} arbitrarily close to its exact value as long as we can sample uniformly at random from the data stream \cite[Proposition 6.1]{badanidiyuru2014streaming}.
In this section, to sample randomly from the data stream and to have an accurate estimate of the function, we use the reservoir sampling technique explained in \cite[Algortithm~4]{badanidiyuru2014streaming}.

In \cref{fig:yelp}, we compare algorithms for varying values of $k$ while we consider the two  different linear functions $\ell$. We observe that distorted greedy returns the solutions with the highest utilities. 
The performance of \AlgStream is comparable with that of the offline algorithms, and it clearly surpasses sieve-streaming. 
In addition, our experiments demonstrate that \AlgStream (and similarly sieve-streaming) requires orders of magnitude fewer oracle evaluations.

\begin{figure*}[ht] 
	\centering  
	\subfloat[$\ell=$ distance to down-town] {\includegraphics[height=32mm]{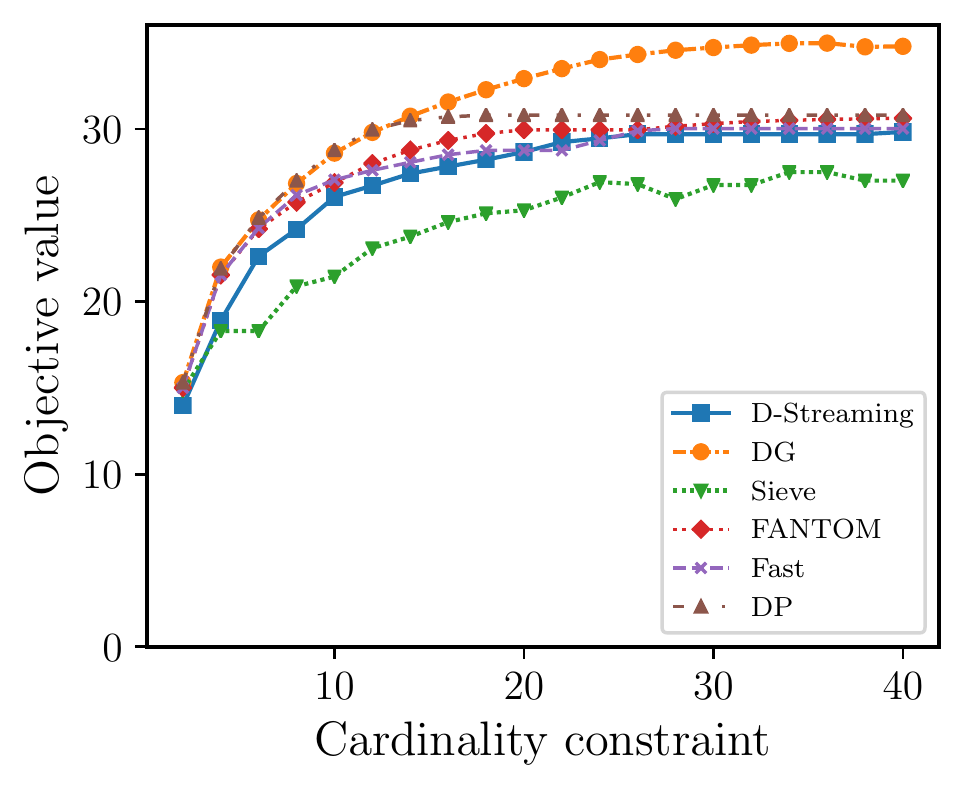}\label{fig:yelp-f-0}}
	\subfloat[$\ell=$ distance to airport] {\includegraphics[height=32mm]{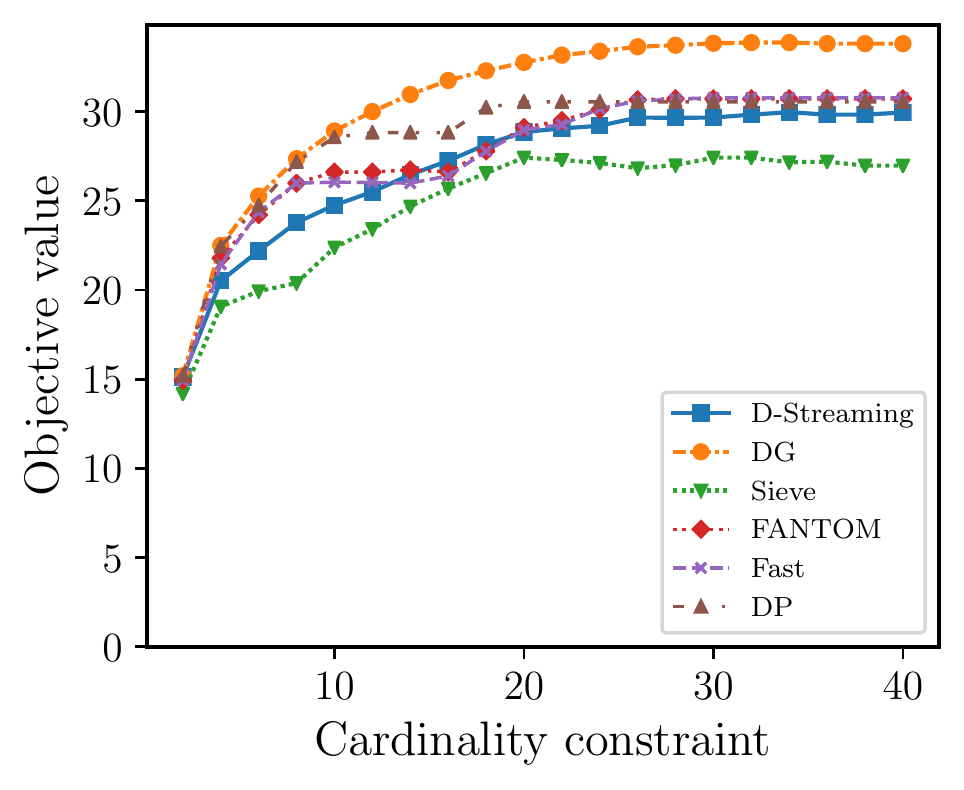}\label{fig:yelp-f-1}}
	\subfloat[$\ell=$ distance to down-town] {\includegraphics[height=32mm]{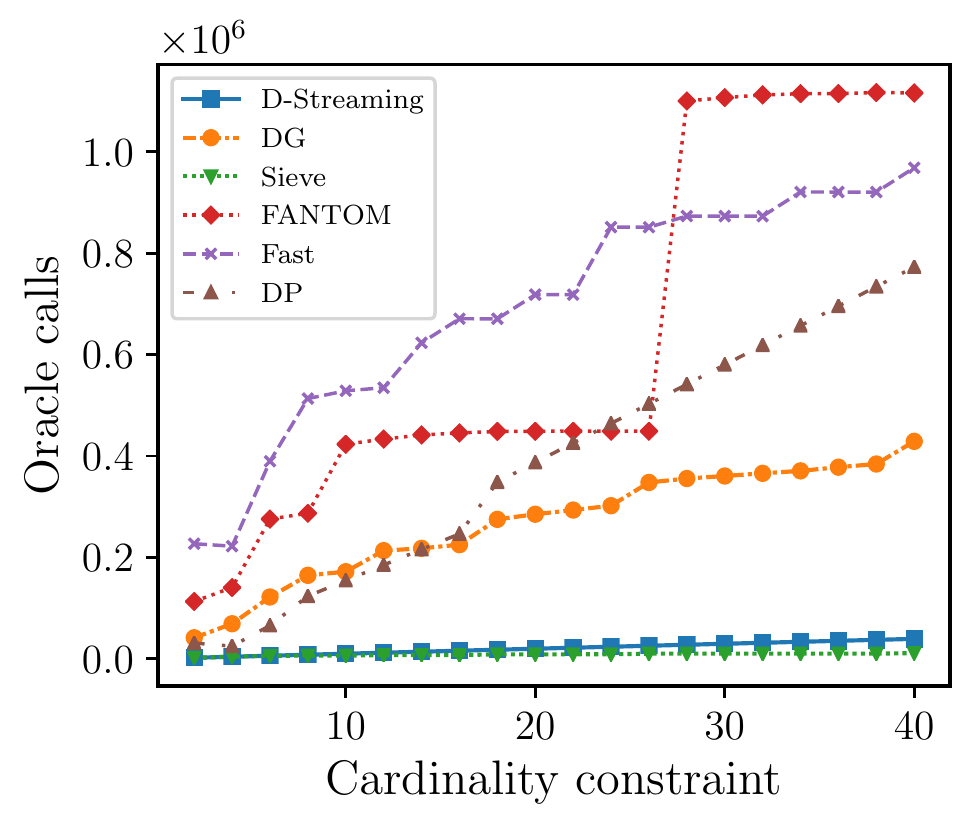}\label{fig:yelp-o-0}}
	\subfloat[$\ell=$  distance to airport] {\includegraphics[height=32mm]{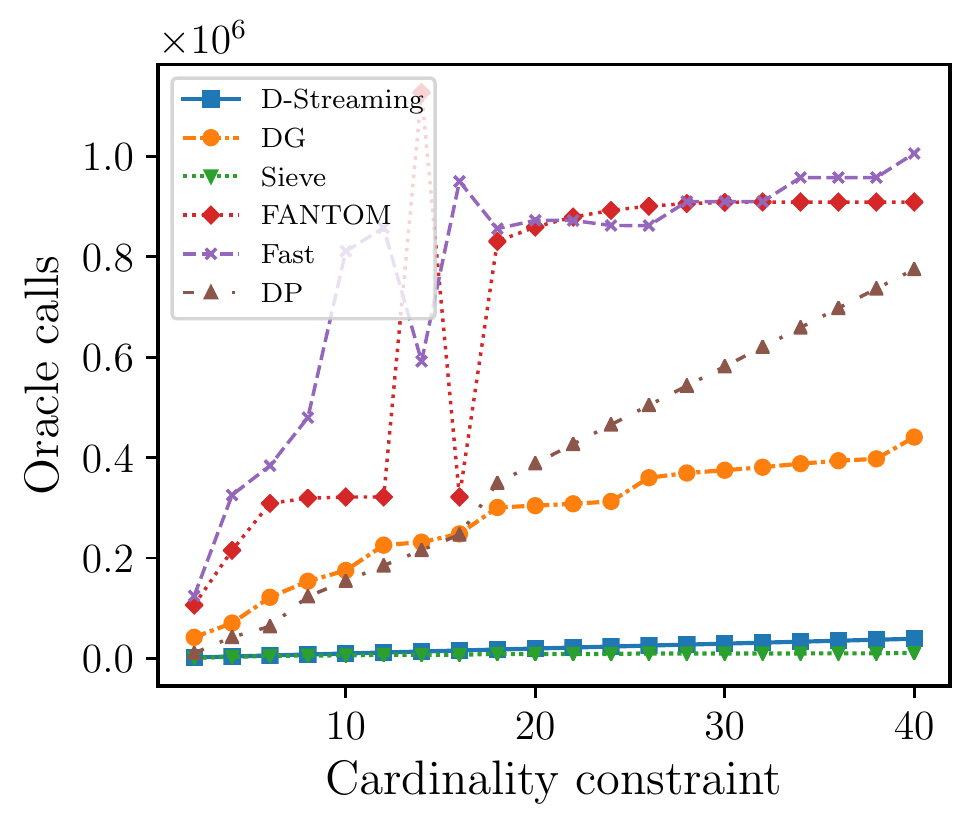}\label{fig:yelp-o-1}}
	\caption{Yelp location summarization: data points are locations from six different cities. For the linear costs we consider two different cases: 1) distances to the downtown in each city, 2) distances to the airport in each city. \label{fig:yelp}}
\end{figure*}

\subsubsection{Movie Recommendation} \label{sec:movilens}

In this application, the goal is to recommend a set of movies to a user, where we know that the user is mainly interested in movies released around 1990. As a matter of fact, we are aware that her all-time favorite movie is Goodfellas (1990).
To design our recommender system, we use ratings from MovieLens users \cite{harper2015movielens}, and apply the method of \citet{lindgren2015sparse} to generate a set of features for each movie.

For a ground set of movies $\cN$, assume $v_i$ represents the feature vector of the $i$-th movie.
Following the same approach we used in \cref{sec:video}, we define a similarity matrix $M$ such that $M_{ij} = e^{- \text{dist}(v_i,v_j)}$, where $\text{dist}(v_i,v_j)$ is the euclidean distance between vectors $v_i, v_j \in \cN$. 
The objective of each algorithm is to select a subset of movies that maximizes 
$f(S) \triangleq g(S) - \ell(S) = \log \det( \mathbf{I} + \alpha M_S ) - \sum_{v \in S} \ell_v$
subject to a cardinality constraint $k$. 
In this application for $\ell(S) = \sum_{v \in S} \ell_v$ we consider two different scenarios: (i) $\ell_v = \lvert 1990 - \textrm{year}_v \rvert$, where $\textrm{year}_v$ denote the release year of movie $v$, and (2) $\ell_v = 10 - \textrm{rating}_v$, where $\textrm{rating}_v$ denotes the IMDb rating of $v$ ($10$ is the maximum possible rating).

From our experimental evaluation in \cref{fig:movie}, we observe that both modeling approaches (directly maximizing the function $f$ and maximizing the function $g$ subject to a knapsack constraint for $\ell$) return solutions with similar objective values.
Besides, we note that the computational complexity of \AlgStream is better than the complexity of the expensive offline algorithms (as it makes only a single pass over the data), but this difference is not very significant for some offline algorithms. 
Nevertheless, \AlgStream always provides better utility than sieve streaming.

\begin{figure*}[ht] 
	\centering  
	\subfloat[$\ell_v = \lvert 1990 - \textrm{year}_v \rvert$] {\includegraphics[height=32mm]{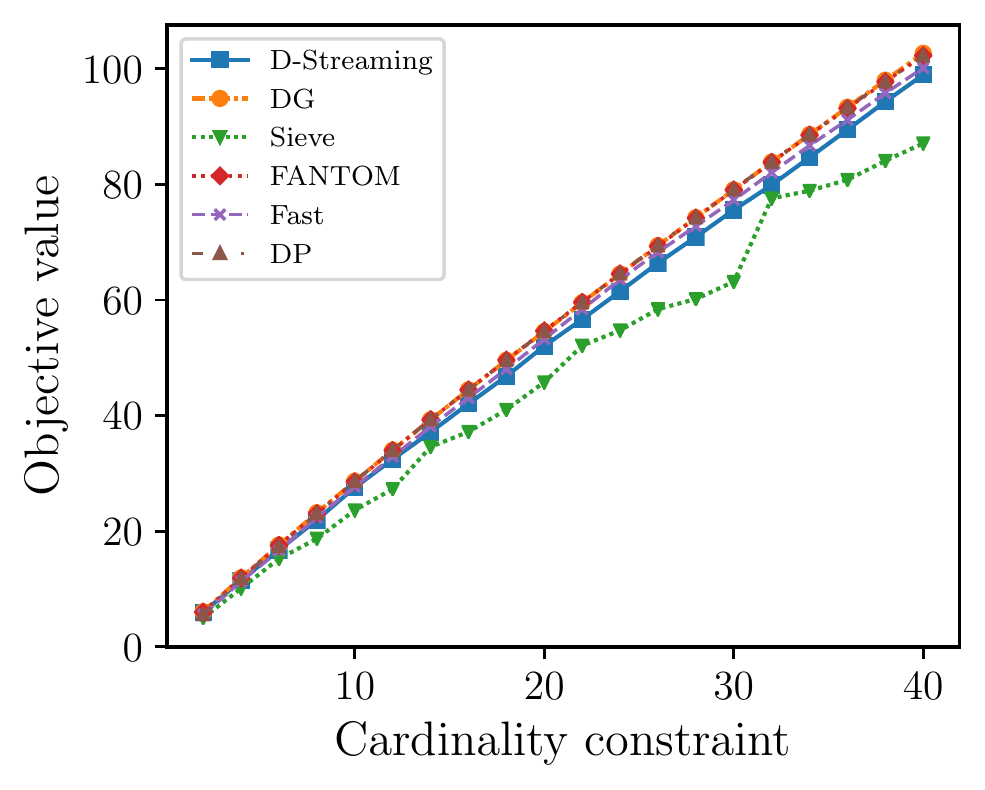}\label{fig:movie-o-2}}
	\subfloat[$\ell_v = \lvert 10 - \textrm{rating}_v \rvert$] {\includegraphics[height=32mm]{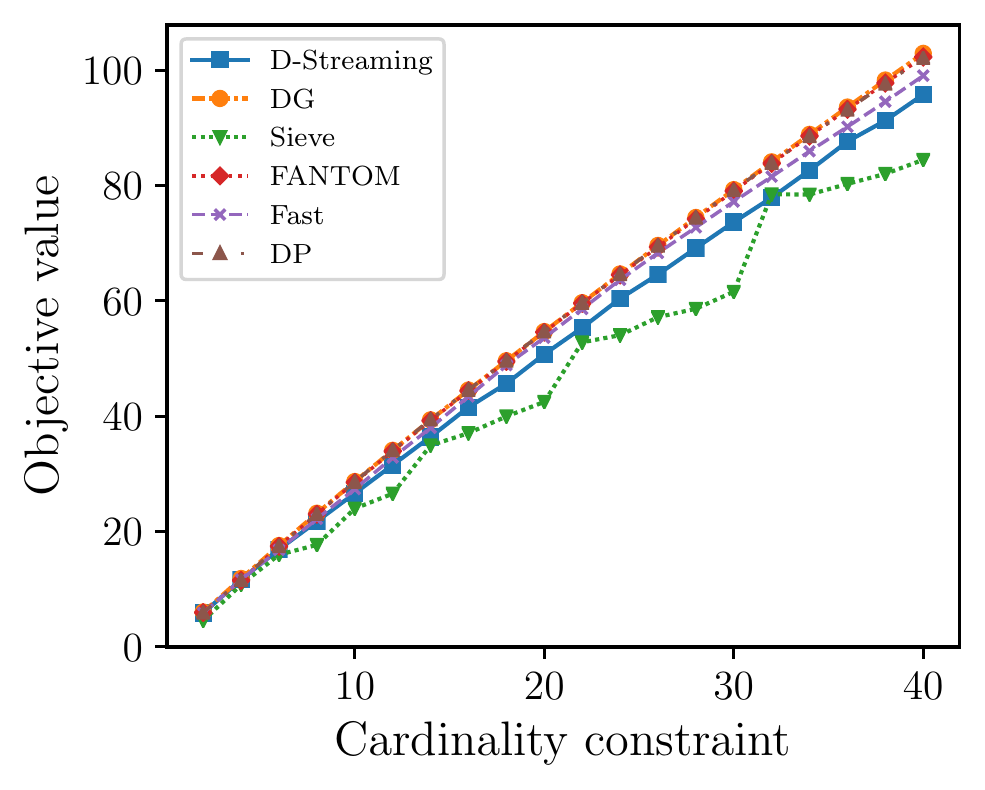}\label{fig:movie-f-1}}
	\subfloat[$\ell_v = \lvert 1990 - \textrm{year}_v \rvert$] {\includegraphics[height=32mm]{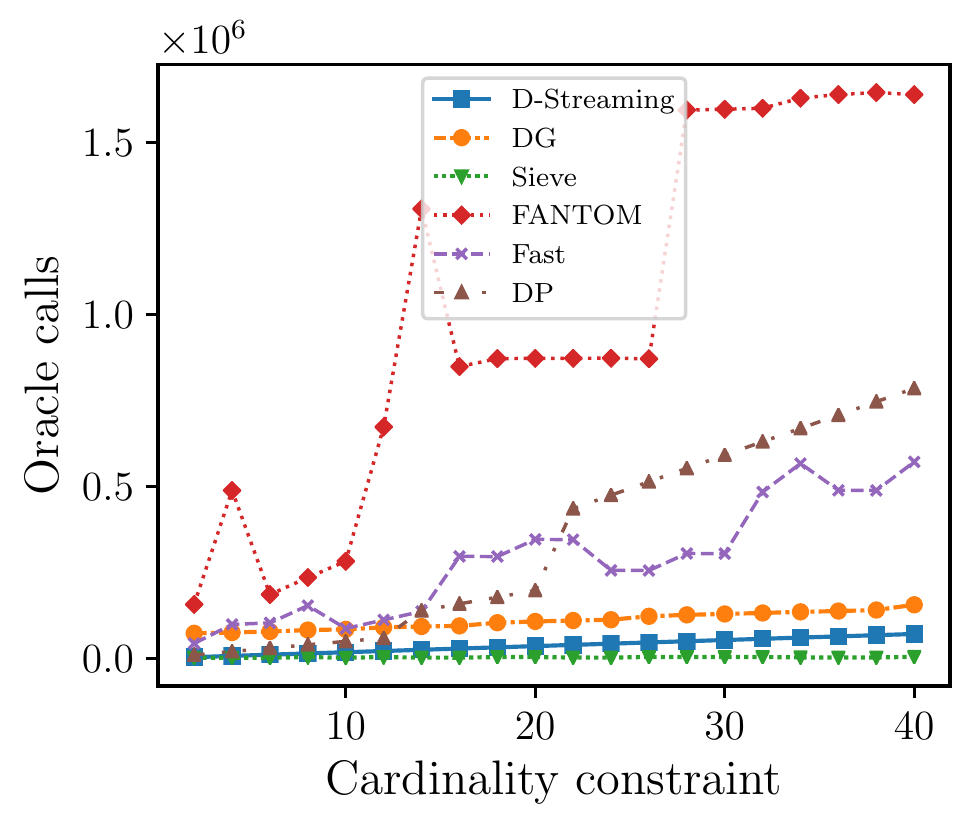}\label{fig:movie-f-2}}
	\subfloat[$\ell_v = \lvert 10 - \textrm{rating}_v \rvert$] {\includegraphics[height=32mm]{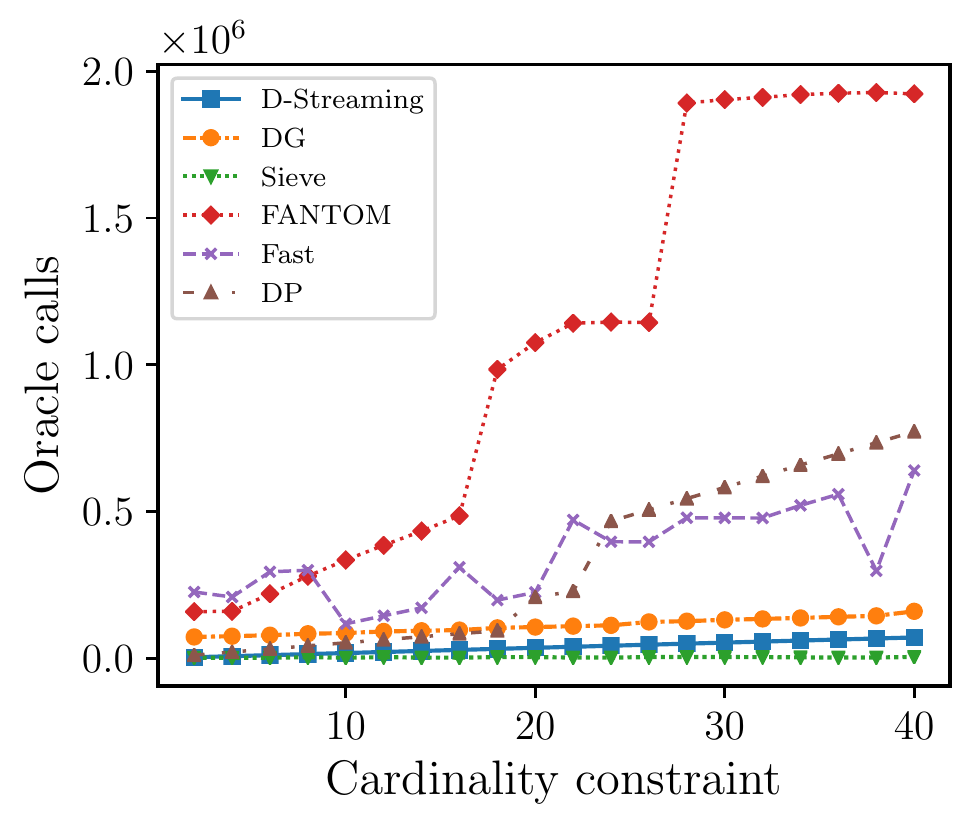}\label{fig:movie-o-1}}
	\caption{Movie recommendation: we compare algorithms for varying cardinality constraint $k$. We use two different linear functions: $\lvert 1990 - \textrm{year}_v \rvert$ and $ 10 - \textrm{rating}_v$, where the goal of first one is to recommend movies with a release year closer to 1990 and the goal of the second linear function is to promote movies with higher ratings.}
	\label{fig:movie}
\end{figure*}

\subsubsection{Twitter Text Summarization} \label{sec:twitter}

There are several news-reporting Twitter accounts with millions of followers.
In this section, our goal is to produce real-time summaries for Twitter feeds of a subset of these accounts.
In the Twitter stream summarization task, one might be interested in a representative and diverse summary of events that happen around a certain date.
For this application, we consider the Twitter dataset provided in \cite{kazemi2019submodular}, where the keywords from each tweet are extracted and weighted proportionally to the number of retweets the post received.
Let $\cW$ denote the set of all existing keywords.
The function $f$ we want to maximize is defined over a ground set $\cN$ of tweets \cite{kazemi2019submodular}.
Assume each tweet $e \in \cN$ consists of a positive value $\textrm{val}_e$ representing the number of retweets it has received (as a measure of the popularity and importance of that tweet) and a set of $l_e$ keywords $\cW_e = \{ w_{e,1}, \cdots, w_{e, l_e}\}$ from $\cW$.
The score of a word $w \in \cW_e $ with respect to a given tweet $e$ is calculated by $\text{score}(w,e) = \text{val}_e$. 
If $w \notin \cW_e $, we assume $\text{score}(w,e) = 0$.
Formally, the function $f$ is defined as follows:
\begin{align*} \label{eq:function-twitter}
f(S) \triangleq g(S) - \ell(S) =  \sum_{w \in \cW} \sqrt{\sum_{e \in S} \text{score}(w,e)} - \sum_{e \in S} \ell_e \enspace,
\end{align*}
where for the linear function $\ell$  two options are considered: (i) $\ell_e = |\printtime- \textrm{T}(e)|$ is the absolute difference (in number months) between time of tweet $e$ and the first of January 2019, (ii) $\ell_e = |\cW_e|$ is the length of each tweet, which enables us to provide shorter summaries.
Note that the monotone and submodular function $g$ is designed to
cover the important events of the day without redundancy (by encouraging diversity in a selected set of tweets) \cite{kazemi2019submodular}.

The main observation from \cref{fig:twitter} is that \AlgStream clearly outperforms the sieve-streaming algorithm and the Greedy Dynamic Program algorithm in terms of objective value, where the gap between their performances grows for larger values of $k$. The utility of other offline algorithms is slightly better than that of our proposed streaming algorithm.
We also see that while distorted greedy is by far the fastest offline algorithm, the computational complexities of both streaming algorithms are negligible with respect to the other offline algorithms.

\begin{figure*}[ht] 
	\centering  
	\subfloat[$\ell_e = |\printtime- \textrm{T}(e)|$] {\includegraphics[height=32mm]{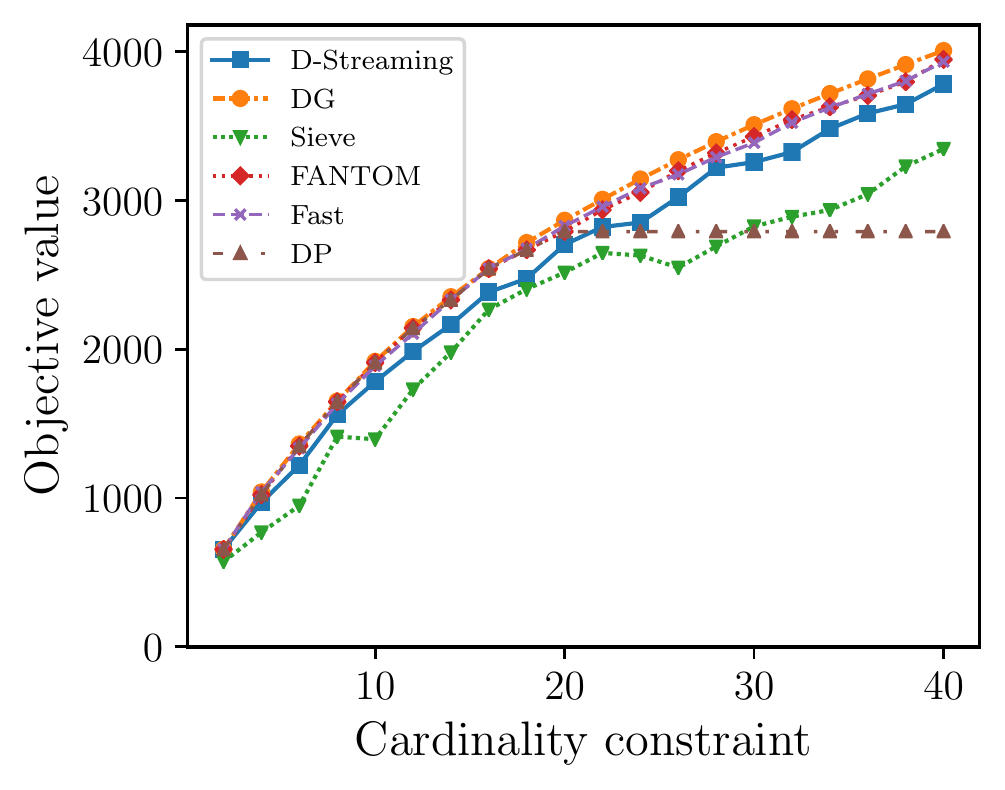}\label{fig:twitter-f-1}}
	\subfloat[$\ell_e = |\cW_e|$] {\includegraphics[height=32mm]{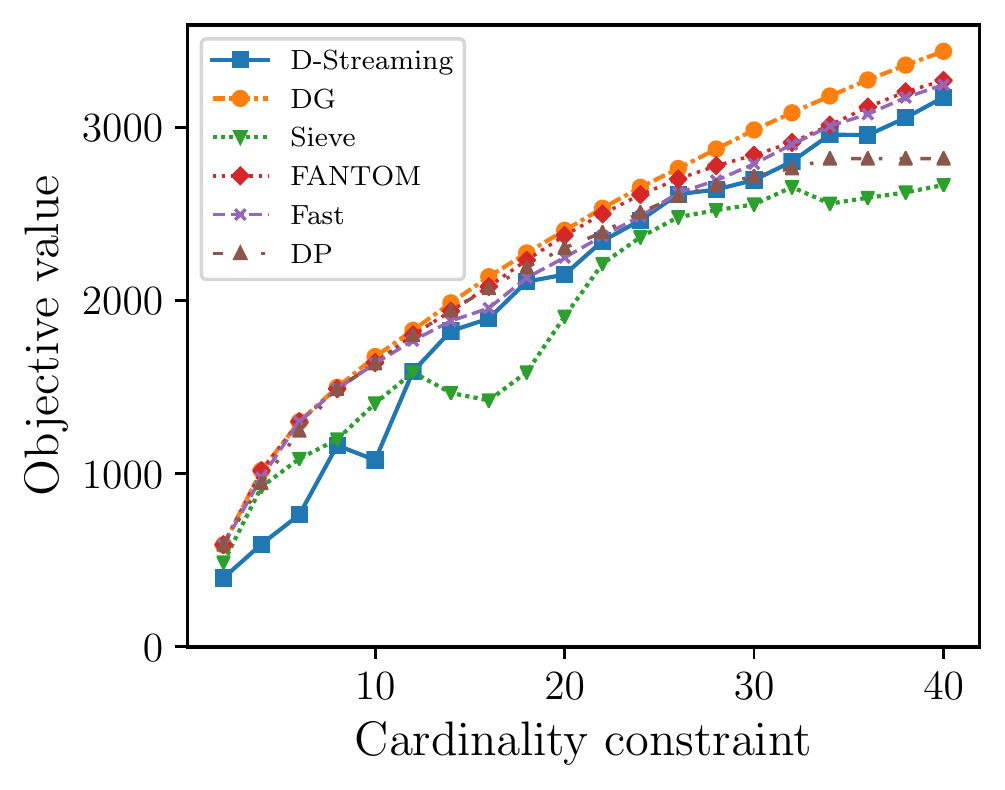}\label{fig:twitter-f-2}}
	\subfloat[$\ell_e = |\printtime- \textrm{T}(e)|$] {\includegraphics[height=32mm]{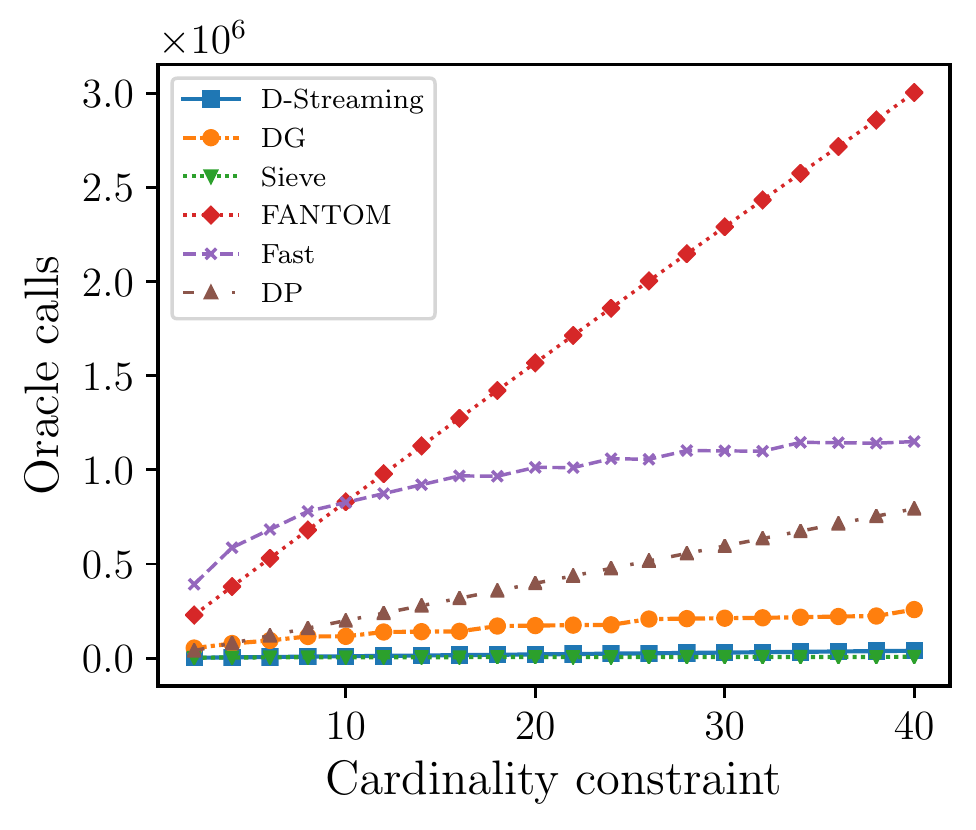}\label{fig:twitter-o-1}}
	\subfloat[$\ell_e = |\cW_e|$] {\includegraphics[height=32mm]{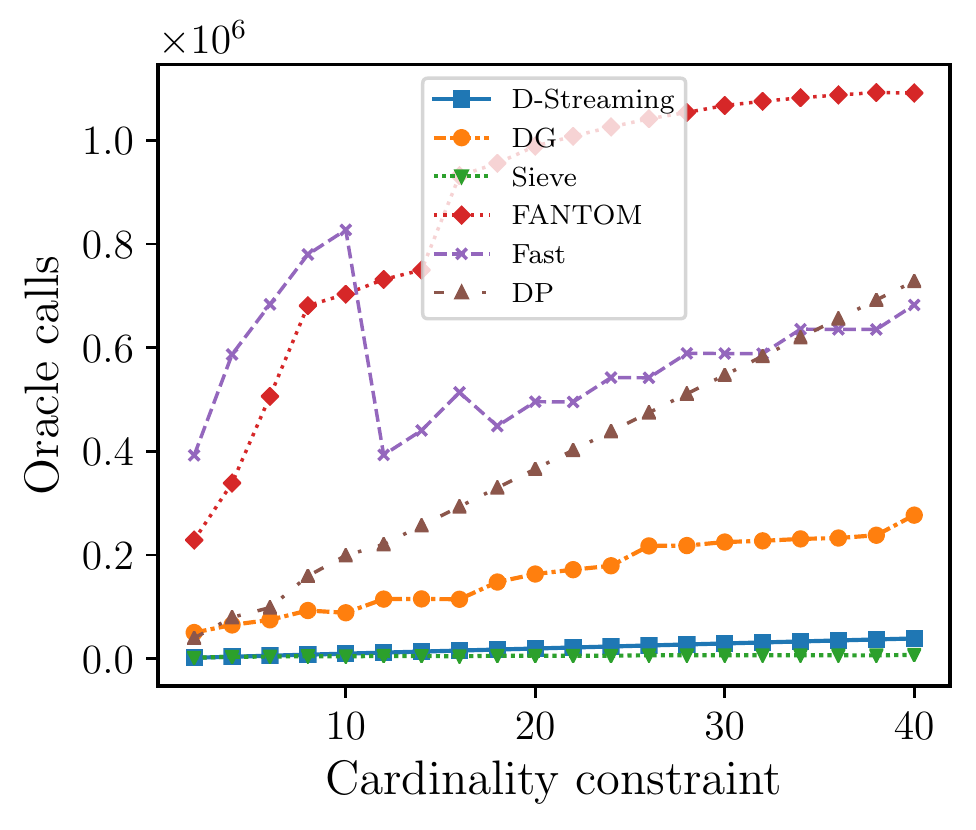}\label{fig:twitter-o-2}}
	\caption{Twitter text summarization: We compare algorithms for varying values of the cardinality constraint $k$. In figures (a) and (c) the linear cost is the difference between the time of the tweet and the first of January 2019. In figures (b) and (d)  the linear cost is the number of keywords in each tweet.}
	\label{fig:twitter}
\end{figure*}

%
\vspace{-5pt}
\section{Conclusion} \label{sec:conclusion}
\vspace{-5pt}
In this paper, we proposed scalable methods   for maximizing a \textit{regularized} submodular function expressed as the difference between a non-negative monotone submodular function $g$ and a modular function $\ell$.
We developed the first one-pass streaming algorithm for maximizing a regularized submodular function subject to a $k$-cardinality constraint with a theoretical performance guarantee, and also presented the first   distributed algorithm that returns a solution $S$ with the guarantee that  $\bE [f(S)] \geq (1 - \eps)  \left[(1 - e^{-1}) \cdot g(OPT) - \ell(OPT) \right]$ in $O(1/\epsilon)$ rounds of MapReduce computation. 
Moreover, even for the unregularized case, our distributed algorithm improves the memory and communication complexity of the existing work by a factor of $O(1/\epsilon)$ while  providing a simpler distributed algorithm and a unifying analysis.  We also empirically studied the performance of our scalable methods on a set of real-life applications, including vertex cover of social networks, finding the mode of strongly log-concave distributions, data summarization, and product recommendation.

	\bibliographystyle{plainnat}
	\bibliography{ScalableSubmodularLinear}
	\appendix
\section{Guessing \texorpdfstring{$\tau$}{tau} in Algorithm~\ref*{alg:threshold-first}} \label{sec:tau}

In this section we explain how one can guess the value $\tau$ in Algorithm~\ref{alg:threshold-first}, which is a value obeying $k\tau \leq h(r) \cdot g(T) - r \cdot \ell(T) \leq (1 + \eps)k\tau$, at the cost of increasing the space complexity of the algorithm by a factor of $O(\eps^{-1} (\log k + \log r^{-1}))$. Like in \cref{sec:thm_proof}, we assume that $h(r) \cdot g(T) - r \cdot \ell(T)$---and thus, also $\tau$---is positive.

Observe that
\begin{align*}
    \max_{u \in \cN} [h(r) \cdot g(\{u\}) - r \cdot \ell(\{u\})]
    \leq{} &
    h(r) \cdot g(T) - r \cdot \ell(T)
    \leq
    \sum_{u \in T} [h(r) \cdot g(\{u\}) - r \cdot \ell(\{u\})]\\
    \leq{} &
    k \cdot \max_{u \in \cN} [h(r) \cdot g(\{u\}) - r \cdot \ell(\{u\})]
    \enspace,
\end{align*}
where the first inequality holds since $\{u\}$ is a candidate to be $T$ for every $u \in \cN$, and the second inequality follows from the submodularity of $g$. Thus, if we knew the value of $\max_{u \in \cN} [h(r) \cdot g(\{u\}) - r \cdot \ell(\{u\})]$ from the very beginning, we could simply run in parallel an independent copy of Algorithm~\ref{alg:threshold-first} for every value of $\tau$ that has the form $(1 + \eps)^i$ for some integer $i$ and falls within the range
\[
	\left[k^{-1} \cdot \max_{u \in \cN} [h(r) \cdot g(\{u\}) - r \cdot \ell(\{u\})], (1 + \eps) \cdot \max_{u \in \cN} [h(r) \cdot g(\{u\}) - r \cdot \ell(\{u\})]\right]
	\enspace.
\]
Clearly, at least one of the values we would have tried obeys $k\tau \leq h(r) \cdot g(T) - r \cdot \ell(T) \leq (1 + \eps)k\tau$, and the number of values we would have needed to try is upper bounded by
\[
	1 + \log_{1 + \eps}\left(\frac{(1 + \eps) \cdot \max_{u \in \cN} [h(r) \cdot g(\{u\}) - r \cdot \ell(\{u\})}{k^{-1} \cdot \max_{u \in \cN} [h(r) \cdot g(\{u\}) - r \cdot \ell(\{u\})]}\right)
	=
	2 + \log_{1 + \eps} k
	=
	O(\eps^{-1} \log k)
	\enspace.
\]


Unfortunately, the value of $\max_{u \in \cN} [h(r) \cdot g(\{u\}) - r \cdot \ell(\{u\})]$ is not known to us in advance. To compensate for this, we make the following two observations. The first observation is that $k^{-1} \cdot \max_{u \in \cN'} [h(r) \cdot g(\{u\}) - r \cdot \ell(\{u\})]$, where $\cN'$ is the set of elements viewed so far, is a lower bound on the value of $k^{-1} \cdot \max_{u \in \cN} [h(r) \cdot g(\{u\}) - r \cdot \ell(\{u\})]$. Following is the second observation, which shows that copies of Algorithm~\ref{alg:threshold-first} with $\tau$ values that are much larger than this lower bound cannot accept any element of $\cN'$, and thus, need not be maintained explicitly.
\begin{observation} \label{obs:ignore}
If $\tau > (\alpha(r) / r) \cdot \max_{u \in \cN'} [h(r) \cdot g(\{u\}) - r \cdot \ell(\{u\})]$, then Algorithm~\ref{alg:threshold-first} accepts no element of $\cN'$.
\end{observation}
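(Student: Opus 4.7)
The plan is to show that under the hypothesis on $\tau$, the acceptance condition of Algorithm~\ref{alg:threshold-first} (namely, $g(u \mid S) - \alpha(r) \cdot \ell(\{u\}) \geq \tau$) fails for every $u \in \cN'$, no matter what the current set $S$ happens to be. The argument has two ingredients: a monotonicity-based bound on the marginal contribution of $u$, and an algebraic identity relating $\alpha(r)$ and $h(r)$.

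First, I would use the submodularity and non-negativity of $g$ to drop the dependence on $S$. Submodularity gives $g(u \mid S) \leq g(u \mid \varnothing) = g(\{u\}) - g(\varnothing)$, and since $g(\varnothing) \geq 0$ we obtain $g(u \mid S) \leq g(\{u\})$. Hence
\[
g(u \mid S) - \alpha(r) \cdot \ell(\{u\}) \;\leq\; g(\{u\}) - \alpha(r) \cdot \ell(\{u\}).
\]

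The second and decisive step is the identity $\alpha(r)\cdot h(r) = r$. This follows from a difference-of-squares computation:
\[
\alpha(r)\cdot h(r) = \frac{(2r+1+\sqrt{4r^2+1})(2r+1-\sqrt{4r^2+1})}{4} = \frac{(2r+1)^2 - (4r^2+1)}{4} = r.
\]
Using this identity, I can rewrite the upper bound from the first step as
\[
g(\{u\}) - \alpha(r)\cdot \ell(\{u\}) = \frac{\alpha(r)}{r}\left[ h(r)\cdot g(\{u\}) - r \cdot \ell(\{u\})\right],
\]
since the coefficient of $g(\{u\})$ on the right-hand side is $\alpha(r)h(r)/r = 1$ and the coefficient of $\ell(\{u\})$ is $-\alpha(r)$ on both sides.

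Finally, for any $u \in \cN'$ the right-hand side is at most $(\alpha(r)/r)\cdot \max_{v \in \cN'}\left[h(r)\cdot g(\{v\}) - r\cdot \ell(\{v\})\right]$, which by hypothesis is strictly less than $\tau$. Chaining the inequalities shows that $g(u \mid S) - \alpha(r)\cdot \ell(\{u\}) < \tau$ for every $u \in \cN'$, so the acceptance test always fails on elements of $\cN'$, proving the observation. The only nontrivial step is spotting the identity $\alpha(r)\cdot h(r)=r$; once this is in hand the rest reduces to standard monotonicity arguments, so I do not anticipate any serious obstacle.
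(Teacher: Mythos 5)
Your proof is correct and follows essentially the same route as the paper's: bound $g(u \mid S)$ by $g(\{u\})$ via submodularity (and non-negativity of $g(\varnothing)$), then use the identity $\alpha(r)\cdot h(r)=r$ to rewrite the threshold expression and compare against the maximum over $\cN'$. Your explicit justification of the step $g(u\mid S)\leq g(\{u\})$ via $g(\varnothing)\geq 0$ is slightly more careful than the paper, which attributes it to submodularity alone, but the argument is identical in substance.
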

\begin{proof}
Algorithm~\ref{alg:threshold-first} accepts an element $u \in \cN'$ if $g(u \mid S) - \alpha(r) \cdot \ell(\{u\}) \geq \tau$. However, the condition of the observation implies
\begin{align*}
	g(u \mid S) - \alpha(r) \cdot \ell(\{u\})
	\leq{} &
	g(\{u\}) - \alpha(r) \cdot \ell(\{u\})
	=
	\frac{\alpha(r)}{r} \cdot [h(r) \cdot g(\{u\}) - r \cdot \ell(\{u\})]\\
	\leq{} &
	\frac{\alpha(r)}{r} \cdot \max_{u \in \cN'} [h(r) \cdot g(\{u\}) - r \cdot \ell(\{u\})]
	<
	\tau
	\enspace,
\end{align*}
where the first inequality follows from the submodularity of $g$, and the equality follows from the following calculation.
\[
	\alpha(r) \cdot h(r)
	=
	\frac{2r + 1 + \sqrt{4r^2 + 1}}{2} \cdot \frac{2r + 1 - \sqrt{4r^2 + 1}}{2}
	=
	\frac{(2r + 1)^2 - (4r^2 + 1)}{4}
	=
	\frac{4r}{4}
	=
	r
	\enspace.
	\qedhere
\]
\end{proof}

The above observations imply that it suffices to explicitly maintain a copy of Algorithm~\ref{alg:threshold-first} for values of $\tau$ that are equal to $(1 + \eps)^i$ for some integer $i$ and fall within the range
\begin{equation} \label{eq:range_alg}
	\left[k^{-1} \cdot \max_{u \in \cN'} [h(r) \cdot g(\{u\}) - r \cdot \ell(\{u\})], \frac{\alpha(r)}{r} \cdot \max_{u \in \cN'} [h(r) \cdot g(\{u\}) - r \cdot \ell(\{u\})]\right]
	\enspace.
\end{equation}
In particular, we know that when the value of $\max_{u \in \cN'} [h(r) \cdot g(\{u\}) - r \cdot \ell(\{u\})]$ increases (due to the arrival of additional elements), we can start a new copy of Algorithm~\ref{alg:threshold-first} for the values of $\tau$ that have the form $(1 + \eps)^i$ for some integer $i$ and now enter the range. By Observation~\ref{obs:ignore}, these instances will behave in exactly the same way as if they had been created at the very beginning of the stream. A formal description of the algorithm we obtain using this method is given as Algorithm~\ref{alg:tau}. We note that the space complexity of this algorithm is larger than the space complexity of Algorithm~\ref{alg:threshold-first} only by an $O(\eps^{-1}(\log k + \log r^{-1}))$ factor because the number of values of the form $(1 + \eps)^i$ that can fall within the range \eqref{eq:range_alg} is at most
\begin{align*}
	\mspace{100mu}&\mspace{-100mu}
	1 + \log_{1 + \eps} \left(\frac{\frac{\alpha(r)}{r} \cdot \max_{u \in \cN'} [h(r) \cdot g(\{u\}) - r \cdot \ell(\{u\})]}{k^{-1} \cdot \max_{u \in \cN'} [h(r) \cdot g(\{u\}) - r \cdot \ell(\{u\})]}\right)
	=
	1 + \log_{1 + \eps} \left(\frac{k \cdot \alpha(r)}{r}\right)\\
	={} &
	1 + \log_{1 + \eps} \left(\frac{k \cdot (2r + 1 + \sqrt{4r^2 + 1})}{2r}\right)
	\leq
	1 + \log_{1 + \eps} (k + k/r)\\
	\leq{} &
	1 + \log_{1 + \eps} k + \log_{1 + \eps}(k/r)
	=
	O(\eps^{-1}(\log k + \log r^{-1}))
	\enspace.
\end{align*}

\begin{algorithm2e}[htb!]
\DontPrintSemicolon
\caption{\AlgStream: Guessing $\tau$} \label{alg:tau}
Let $M \gets -\infty$ and $I \gets \varnothing$. \tcp*{$M$ represents $\max_{u \in \cN'} [h(r) \cdot g(\{u\}) - r \cdot \ell(\{u\})]$ and $I$ is the list of copies of Algorithm~\ref{alg:threshold-first} currently maintained.}
\While{there are more elements in the stream}
{
	Let $u$ be the next element of the stream.\\
	Update $M \gets \max\{M, h(r) \cdot g(\{u\}) - r \cdot \ell(\{u\})\}$.\\
	Let $J = \{i \in \bZ \mid k^{-1}M \leq (1 + \eps)^i \leq r^{-1}M \cdot \alpha(r)\}$.\\
	Delete every copy of Algorithm~\ref{alg:threshold-first} in $I$ corresponding to a value $\tau = (1 + \eps)^i$ for an integer $i$ that now falls outside the set $J$.\\
	Add to $I$ a new copy of Algorithm~\ref{alg:threshold-first} with $\tau = (1 + \eps)^i$ for every integer $i \in J$, unless such a copy already exists there.\\
	Pass the element $u$ to all the copies of Algorithm~\ref{alg:threshold-first} in $I$.
}
\Return{the set $S$ maximizing $g(S) - \ell(S)$ among all the output sets of all the copies of Algorithm~\ref{alg:threshold-first} in $I$}.
\end{algorithm2e}

\end{document}